\newif\ifcomments  
\newcommand{\eps}{\ensuremath{\varepsilon}}
\newcommand{\bfA}{\ensuremath{\mathbf{A}}}
\newcommand{\bfx}{\ensuremath{\mathbf{x}}}
\newcommand{\bfy}{\ensuremath{\mathbf{y}}}
\newcommand{\bfz}{\ensuremath{\mathbf{z}}}
\newcommand{\calD}{\ensuremath{\mathcal{D}}}
\newcommand{\calG}{\ensuremath{\mathcal{G}}}
\newcommand{\calP}{\ensuremath{\mathcal{P}}}
\newcommand{\calS}{\ensuremath{\mathcal{S}}}
\renewcommand{\Pr}{\mathop{\mathbf{Pr}}}
\newtheorem{lem}{Lemma}[section]
\newtheorem{thm}{Theorem}[section]
\newtheorem{theorem}[thm]{Theorem}
\newtheorem{defn}{Definition}[section]
\newtheorem{claim}{Claim}[section]
\newtheorem{conj}{Conjecture}[section]
\Crefname{lem}{Lemma}{Lemmas}
\Crefname{thm}{Theorem}{Theorems}
\Crefname{claim}{Claim}{Claims}
\Crefname{defn}{Definition}{Definitions}
\newcommand{\vast}{\bBigg@{4}}
\newcommand{\Vast}{\bBigg@{5}}
\newcommand{\ex}[2]{{\ifx&#1& \mathbb{E} \else
\underset{#1}{\mathbb{E}} \fi \left[#2\right]}}
\newcommand{\pr}[2]{{\ifx&#1& \mathbb{P} \else
\underset{#1}{\mathbb{P}} \fi \left[#2\right]}}
\DeclarePairedDelimiterX{\infdivx}[2]{(}{)}{%
  #1\;\delimsize\|\;#2%
}
\newcommand{\mypar}[1]{\smallskip
	\noindent{\textbf{{#1}:}}}
\renewcommand{\epsilon}{\varepsilon}
\renewcommand{\tilde}{\widetilde}
\setlist{nolistsep}
\setlist[itemize]{noitemsep, topsep=0pt}
\setlist{nolistsep}
\setlist[itemize]{noitemsep, topsep=0pt}
\def\MC{\textup{MC}}
\def\IS{\textup{IS}}
\newcommand{\score}{\texttt{score}}
\def\csp{\textup{CSP}}
\newcommand{\maxcut}[1]{\textup{MAX-}{#1}\textup{-CUT}}
\newcommand{\threelin}[1]{3\textup{LIN}({#1})}
\def\ZZ{\mathbb{Z}}
\def\NN{\mathbb{N}}
\def\maxind{\textup{MAX-IND-SET}}
\def\naux{n_{\textup{aux}}}
\def\maxcuttwo{\textup{MAX-CUT}}
\def\approxratio{0.987}
\def\soundness{0.9238}
\def\completeness{0.9349}
\def\speedupfour{10,000\times}
\def\speedupthree{10,000\times}
\def\parallelcopies{1429}
\def\tspforced{\textup{MWST}_f}
\lstdefinestyle{mypython}{
  language=Python,
  backgroundcolor=\color{gray!10},
  basicstyle=\ttfamily\footnotesize,
  keywordstyle=\color{blue},
  commentstyle=\color{green!50!black},
  stringstyle=\color{red!60!black},
  showstringspaces=false,
  frame=single,
  breaklines=true
}
\newcommand{\codesn}{\mathcal{C}}
\newcommand{\wthreelintwo}{\text{Hybrid-}\threelin{2}}
\newcommand{\TRUE}{\texttt{True}\,}
\newcommand{\FALSE}{\texttt{False}\,}
\newcommand{\clause}{\zeta}
\newcommand{\predicate}{\Gamma}
\newcommand{\alphabet}{\ZZ_k}
\newcommand{\complete}{c}
\newcommand{\sound}{s}
\newcommand{\localcomps}{u_{co}^{\text{local}}}
\newcommand{\mcst}{\textup{MWST}}
\newcommand{\predeqv}[3]{\texttt{P}^{\equiv #1}_{#3, #2}}
\newcommand{\predneq}{\texttt{P}^{\neq}_2}
\newcommand{\inst}{\texttt{I}}
\newcommand{\instj}{\texttt{J}}
\newcommand{\insti}[3]{\inst^{\equiv #1}_{#3, #2}}
\newcommand{\countcl}{\texttt{count}}
\newcommand{\mainalphabet}{\Omega} 
\newcommand{\predeq}{\texttt{P}^{=}_2}
\newcommand{\insteq}{\texttt{I}^{=}_2}
\newcommand{\instneq}{\texttt{I}^{\neq}_2}
\newcommand{\edgeeq}{\texttt{E}^{=}_2}
\newcommand{\edgeneq}{\texttt{E}^{\neq}_2}
\newcommand{\edgei}[3]{\texttt{E}^{\equiv #1}_{#3, #2}}
\newcommand{\edgeteq}{\texttt{T}^{=}_2}
\newcommand{\edgetneq}{\texttt{T}^{\neq}_2}
\newcommand{\edgeti}[3]{\texttt{T}^{\equiv #1}_{#3, #2}}
\newcommand{\edgeueq}{\texttt{U}^{=}_2}
\title{Reinforced Generation of Combinatorial Structures: Hardness of Approximation}
\author{
Ansh Nagda\thanks{University of California, Berkeley, and Google DeepMind.}
\and
Prabhakar Raghavan\thanks{Google.}
\and
Abhradeep Thakurta\thanks{Google DeepMind.}
}
\date{}
\begin{document}

\maketitle
\thispagestyle{empty}
\begin{abstract}
    Can AI based methods help us make advances in complexity theory? We provide evidence towards answering this in the affirmative, using AlphaEvolve (an LLM code mutation agent) to obtain new results in three settings:

a) Average-case hardness for MAX-CUT and MAX-Independent Set: We improve a recent result of Kunisky and Yu to obtain near-optimal upper and (conditional) lower bounds on certification algorithms for MAX-CUT and MAX-Independent Set on random 3- and 4-regular graphs. Our improved lower bounds are obtained by constructing nearly extremal Ramanujan graphs on as many as $163$ vertices. Additionally, via analytical arguments we strengthen the upper bounds to settle the computational hardness of these questions up to an error in the third decimal place.


{
    b) Worst-case Hardness of Approximation for MAX-k-CUT: We obtain new inapproximability results, proving that it is NP-hard to approximate $\maxcut{4}$ and $\maxcut{3}$ within factors of $\approxratio$ and $0.9649$ respectively, using AlphaEvolve to discover new gadget reductions. Our $\maxcut{4}$ result improves upon the SOTA of $0.9883$, and our $\maxcut{3}$ result improves on the current best gadget-based inapproximability result of $0.9853$, but falls short of improving the SOTA of $16/17$ that relies on a custom PCP (rather than a gadget reduction from “standard” Håstad-style PCPs). 
}  

c) Worst-case Hardness of Approximation for the metric Traveling Salesman Problem (TSP): We show that it is NP-hard to approximate the minimum cost tour within a factor of $111/110$ using AlphaEvolve to discover a new gadget, thus improving the SOTA of $117/116$. We provide a modular soundness and completeness argument for the reduction of $\threelin{2}$ (a standard constraint satisfaction problem used in hardness reductions) to TSP, which enabled AlphaEvolve to search over finite constraint graphs. This modularization may be of independent interest for future work on TSP inapproximability.

A key technical challenge we faced: verifying a candidate construction produced by AlphaEvolve is costly (sometimes requiring time exponential in the size of the construction). Our results were enabled by using AlphaEvolve itself to evolve the verification procedure to be faster (sometimes by $10,000\times$ for our gadgets). Our results suggest that gadget based proofs would benefit from a pass through AI-based tools to obtain stronger results.

\end{abstract}
\clearpage
\newpage
\thispagestyle{empty}
\tableofcontents
\clearpage
\newpage
\pagenumbering{arabic}
\newpage

\section{Introduction}
\label{sec:intro}
In this paper, we study the following question: Can AI based methods help us make novel and non-trivial discoveries in complexity theory? In particular, we seek to focus on classical and well-studied problems to derive results that stand the test of time. We are interested in proofs and constructions that will plausibly defy non-AI methods (whether by direct human effort, or through traditional computational methods such as SAT solvers).
We provide evidence towards answering this in the affirmative, via progress on three problems.
\begin{itemize}
    \item[a)]~\emph{Hardness of certifying upper bounds on the $\maxcuttwo$ and $\maxind$ of sparse random graphs.} In~\Cref{sec:avhardness}, we improve upon the combinatorial structures presented in \cite{kunisky2024computational}, achieving improved lower bounds in the $\{3,4\}$-regular cases. Complementing this result, we obtain new upper bounds which improve upon Hoffman's classic spectral bounds~\cite{hoffman2003eigenvalues,haemers2021hoffman}. (These upper bounds are derived via analytical approaches\footnote{These improvements are derived from a discussion with Sidhanth Mohanty, Northwestern University.}.) Together, these achieve an almost tight resolution to these problems.\footnote{By this we mean: the lower bounds and upper bounds on efficient certification algorithms match up to the second decimal place.}. 
    \item[b)]~\emph{NP-Hardness of approximation for $\maxcut{k}$.} In~\Cref{sec:maxkcut}, we give new gadget-based reductions (from standard H\aa stad PCPs~\cite{haastad2001some}) for the NP-hardness of approximating $\maxcut{k}$ for $k\in \{3,4\}$. For $\maxcut{4}$, we improve on the current best inapproximability result from $0.9883$~\cite{austrin2014new} to $\approxratio$. For $\maxcut{3}$, we improve on the current best gadget-based inapproximability result from $0.9853$~\cite{kann1996hardness} to $0.9649$. Our result for $\maxcut{3}$ is sandwiched between two results using custom PCPs for this problem:~\cite{guruswami2009improved} with $0.9696$, and~\cite{austrin2014new} with $0.9411$. 
    \item[c)]~\emph{NP-Hardness of approximation for  metric TSP.} In~\Cref{sec:tsp}, we give a new gadget based reduction (from standard H\aa stad PCPs~\cite{haastad2001some}) for the NP-hardness of approximating metric TSP. We improve the current SOTA of $117/116$~\cite{chlebik2022weighted} to $111/110$. Our improvement advances the long line of work establishing hardness results for this  problem~\cite[$1+\delta$]{papadimitriou1993traveling} (without an explicit value for $\delta$),~\cite[$5381/5380$]{engebretsen1999explicit},~\cite[$3183/3182$]{bockenhauer2000improved},~\cite[$220/219$]{papadimitriou2006approximability},~\cite[$185/184$]{lampis2014improved},~\cite[$123/122$]{karpinski2015new},~\cite[$117/116$]{chlebik2022weighted}. 
\end{itemize}

Notably, our results rely on a single (AI-derived) technique, AlphaEvolve~\cite{novikov2025alphaevolve, romera2024mathematical}, to find and verify finite combinatorial structures that improve on prior constructions. 
At a high level AlphaEvolve uses an LLM (Large-Language-Model) to iteratively evolve code snippets that generate combinatorial structures (we will sometimes refer to these as \textit{constructions}) that are of high quality by some criterion. Typically this process uses a synthetic objective function that is a \textit{proxy} for the true objective in the theorem being sought; for instance for $\maxcut{k}$ and the TSP we seek to improve the constant in the hardness theorems. AlphaEvolve cannot directly optimize this constant, because of the manner in which proofs embed gadgets; rather, AlphaEvolve improves gadgets in a ``good direction'' that we specify and then --- given a generated gadget --- we reconstruct the resulting hardness guarantee. These mechanics will become evident in Sections~\ref{sec:maxkcut} and \ref{sec:tsp}.
While the structures we generate using AlphaEvolve are finite constructions, our main results (Theorems \ref{thm:avg-case-lb},~\ref{thm:max-3-cut}, and~\ref{thm:tsp}) embed the universal quantifier $\forall n$, with $n$ being the size of the problem instance, thanks to appropriate ``lifting''~\cite{kunisky2024computational,trevisan2000gadgets,chlebik2022weighted} arguments.  At a minimum our work suggests that we would do well to routinely run gadget proofs through an ``AI optimization'' phase. We present our results in increasing order of human involvement (required to make the problem amenable to AlphaEvolve).

Operationally, the system consists of three parts: a) a code snippet ($\codesn$) that constructs combinatorial structures, b) An evaluation function (referred to as the~\emph{verifier}) that verifies and scores the generated structures, and c) An LLM that suggests a new code snippet $\codesn_{\texttt{new}}$ based on the set of prior $\codesn$'s and the history of constructions. Via prompting the LLM, the objective is to nudge $\codesn_{\texttt{new}}$ towards building better structures. (Appendix~\ref{sec:backgroundAE} provides a more detailed overview of AlphaEvolve.)

The efficacy of AlphaEvolve crucially depends on the verification step which in turn conditions the search landscape. Because we seek extremal structures in a large search space,~\emph{fast} verification (including scoring) of a construction helps AlphaEvolve  try a large number of combinatorial structures, and learn patterns from them to prune the search space. The eventual sizes of the structures we find for our problems are directly correlated with the speed of verification.

\mypar{Speeding up verification in AlphaEvolve} The combinatorial structures sought in these problems are based on undirected graphs. While brute-force verification of our structures was reasonably fast for the TSP lower bound and for certifying upper bounds on sparse random graphs, it was significantly more challenging for $\maxcut{k}$. We discuss this in more detail below.

For TSP, one of the discovered gadgets had $20$ edges (over $12$ vertices) along with a large number of constraints ($\approx 11!$) for a successful verification. As a result, we needed some care in writing a speedy verifier that operated within roughly one second.

For the hardness of certifying upper bounds for sparse random graphs, we use AlphaEvolve to search for $d=\{3,4\}$-regular Ramanujan graphs, along with a witness for a large cut/independent set. Evaluating whether a graph is Ramanujan, and whether or not the cut/independent set is a valid witness, are efficient. Consequently, for these problems we could find and verify graphs with as many as $163$ vertices. 

{In studying the NP-Hardness of approximation for $\maxcut{k}$, we seek gadgets that reduce the $\threelin{k}$ problem~\cite{haastad2001some} to \maxcut{k} ($\threelin{k}$ corresponds to a set of constraints of the form $x+y+z \equiv b \pmod k$, where $x,y,z\in [0,\ldots,k-1]$). Here the verification is unfortunately not efficient. It requires checking the gadget against $\Omega(k^m)$ linear constraints, where $m$ is the total number of variables in the target problem. A standard technique in gadget-based hardness reductions is to add auxiliary variables to have flexibility in mapping the source and the target problems~\cite{trevisan2000gadgets}. This exacerbates the exponential running time of the verifier (for reference, our final gadgets contain $m=14$ and $m=19$ variables for $k=3$ and $k=4$ respectively). To address this we invoke AlphaEvolve to speed up the verifier, as follows. 

We prompted AlphaEvolve to improve the execution time of the verifier, while ensuring that it passes various correctness checks. To our surprise, AlphaEvolve improved the running time by $\speedupfour$ for our final $\maxcut{4}$ gadget of size $m=19$, and by a similar factor for our final $\maxcut{3}$ gadget of size $m=14$. We used human inspection to verify the correctness of these improved verifiers (given their possibility of being correct only on synthetically generated gadgets used for correctness checks). The improvement came from a number of system-level improvements together with a branch-and-bound strategy to prune many of the $\Omega(k^m)$ constraints. We discuss this in more detail in~\Cref{sec:fasterVerification}.


We emphasize that our theorems do not depend on the correctness of these fast verifiers; even though our AlphaEvolve computations internally use them, the gadgets found by AlphaEvolve were verified by a brute-force algorithm that explicitly checks all $\Omega(k^m)$ constraints. 
}
In Section~\ref{sec:discussion} we argue that these constructions are unlikely to have been found by hand, through conventional computational techniques such as SAT solvers, or by ``directly prompting'' a language model (the method most widely reported in the literature~\cite{} as evidence that AI can help research in mathematics). Instead our approach here is to invoke AlphaEvolve to autonomously find and verify the structures used in these proofs. In part we are unable to assess the robustness of direct prompting techniques because -- to date -- in many cases it is unclear how much human effort went into ``prompt engineering'' the model, relative to the contribution of the model. We are unable to reproduce our AlphaEvolve-derived results through a fair amount of such prompt engineering.


\section{Hardness of Certification Problems on Random Graphs}
\label{sec:avhardness}

A central problem in average case complexity is that of \emph{certification}, where our goal is to certify a property of typical samples from a distribution. We focus on the setting of \emph{sparse random graphs}, where the distribution in question is $\calG(n, d)$, the uniform distribution over all $n$-vertex $d$-regular graphs, where $d\in \mathbb{N}$ is small. Let $\Omega$ be the set of $n$-vertex graphs, and consider a \emph{property} $P:\Omega\to \{0,1\}$ such that $\Pr_{G\sim \calG(n, d)}[P(G) = 1] = 1-o_n(1)$. We are given as input a graph $G\sim \calG(n, d)$, and our algorithmic task is to efficiently certify that $G$ satisfies $P$ with high probability.

\begin{defn}
    An efficient certificate of $P$ is any property $P':\Omega\to \{0,1\}$ that is computable in polynomial time such that the following conditions hold. 
    \begin{enumerate}
        \item $P'(G) = 1$ implies $P(G) = 1$.
        \item $\Pr_{G\sim \calG(n,d)}[P'(G)=1] = 1-o_n(1)$.
    \end{enumerate}
    Here, $\calG(n,d)$ is the uniform distribution over all $n$-vertex $d$-regular graphs for some fixed constant $d\in \mathbb{N}$.
    \label{def:certcoNP}
\end{defn}

We focus specifically on the setting of \emph{refutation}, where $P$ is a co-NP property (recall that a problem is in co-NP if the \texttt{NO} instance of a decision problem has an efficiently verifiable certificate). This setting has seen considerable activity, leading to innovation in various algorithmic techniques such as spectral and sum-of-squares algorithms, along with the development of a variety of methods to establish hardness (for instance, \cite{barak2019nearly,raghavendra2017strongly,bandeira2021spectral,kunisky2024computational,kothari2023planted}).

In this work, we consider the problem of certifying upper bounds on the $\maxcuttwo$ (or $\maxind$) of a graph drawn from $\calG(n,d)$. For example, for $\maxcuttwo$, we study the following question: for what values of $\sigma$ can we efficiently certify with high probability that a random graph drawn from $\calG(n,d)$ has $\maxcuttwo$ at most $\sigma \times \#(\texttt{of edges})$?

\begin{defn}[Max-Cut fraction, Max-Independent Set fraction]
    For an undirected graph $G$, $\MC(G)$ is defined as the fraction of edges in a maximum cut of $G$.
    Similarly, $\IS(G)$ is defined as the fraction of vertices in a maximum independent set of $G$.
\end{defn}

\begin{defn}\label{def:certification-problem-def}
    $\sigma^{\MC}_d$ is the infimum over all $\sigma$ such that the property $\mathbf{1}\{\MC(G)\leq \sigma\}$ has an efficient certificate.
    Similarly, $\sigma^{\IS}_d$ is the infimum over all $\sigma$ such that $\mathbf{1}\{\IS(G)\leq \sigma\}$ has an efficient certificate.
\end{defn}

There is evidence that this is a nontrivial problem exhibiting a \emph{certification gap}, in the sense that $\MC(G)$ concentrates around a constant value strictly smaller than $\sigma^{\MC}_d$ for $G\sim \calG(n,d)$ \cite{bandeira2021spectral,kunisky2024computational}. These represent the structurally most elementary refutation-style properties (involving only binary variables and pairwise constraints) that exhibit interesting complexity-theoretic phenomena. In this work, we focus on accurately characterizing the value of $\sigma^{\MC}_d$ and $\sigma^{\IS}_d$. We remark that the asymptotics of these quantities as $d\to \infty$ are already understood (thanks to \cite{bandeira2021spectral,kunisky2024computational}). In service of a precise understanding of this problem in all regimes, we focus on the least understood case where $d$ is small; specifically we study $d\in \{3,4\}$.

Towards this goal,~\cite{kunisky2024computational} introduced a conjecture\footnote{We refer the reader to \cite{kunisky2024computational} for supporting evidence and intuition.} that relates $\sigma^{\MC}_d$ and $\sigma^{\IS}_d$ to a fundamental question in spectral graph theory, specifically about Ramanujan graphs (\Cref{def:ram} below), which are (in a spectral sense) the most ``random'' looking regular graphs constructed via a deterministic process. 

The intuition behind this framework relies on the strategy of "quiet planting". Specifically, if one can exhibit a Ramanujan graph failing to satisfy a certain property $P$ (e.g., a small $\maxcuttwo$ value), then under the hardness assumption of \cite[Conjecture 1.6]{kunisky2024computational}, it is impossible for a polynomial-time algorithm to certify $P$ on the random graph distribution $\mathcal{G}(n,d)$. In the context of $\maxcuttwo$, if we construct a Ramanujan graph with a maximum cut of at least $\gamma^{\MC}_d \times \#(\texttt{edges})$, then the certification bound $\sigma^{\MC}_d$ must essentially be at least $\gamma^{\MC}_d$, where $\gamma^{\MC}_d$ is defined in~\Cref{def:mcr} below. Therefore, the objective in establishing stronger lower bounds for certification is to maximize this value $\gamma^{\MC}_d$ over the set of Ramanujan graphs. We employ AlphaEvolve to construct such extremal Ramanujan graphs for both $\maxcuttwo$ and $\maxind$. The precise conjecture we operate with is stated in~\Cref{conj:avcase}.

\begin{defn}[Ramanujan graphs]
    Let $G$ be a $d$-regular multigraph on $n$ vertices with adjacency matrix $A$. Suppose the eigenvalues of $A$ are $\lambda_1\geq \lambda_2\geq \ldots \geq \lambda_n$. We say that $G$ is Ramanujan if $\max_{i>1}|\lambda_i|\leq 2\sqrt{d-1}$.
    \label{def:ram}
\end{defn}

\begin{defn}
    Define $\gamma^{\MC}_d$ (resp. $\gamma^{\IS}_d$) as the supremum of $\MC(G)$ (resp. $\IS(G)$) over all $d$-regular Ramanujan graphs $G$.
    \label{def:mcr}
\end{defn}

$\gamma^{\MC}_d$ (resp. $\gamma^{\IS}_d$) can be thought of as the performance of the optimal ``spectral'' certificate for bounds on $\maxcuttwo$ (resp. $\maxind$).

\begin{conj}[\cite{kunisky2024computational}]
    $\sigma^{\MC}_d \geq \gamma^{\MC}_d$ and $\sigma^{\IS}_d \geq \gamma^{\IS}_d$ for all $d\geq 3$.
    \label{conj:avcase}
\end{conj}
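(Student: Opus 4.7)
The natural strategy is planting: for any $d$-regular Ramanujan graph $H$ with $\MC(H)=c$ (resp.\ $\IS(H)=c$), construct a distribution $\nu_n$ over $d$-regular $n$-vertex graphs such that (i)~every graph in its support satisfies $\MC(G)\geq c$ (resp.\ $\IS(G)\geq c$), and (ii)~$\nu_n$ is computationally indistinguishable from the uniform measure $G(n,d)$. Once such a $\nu_n$ is in hand, the argument is immediate: if an efficient certificate $P'$ existed for $\mathbf{1}\{\MC(G)\leq \sigma\}$ with $\sigma<c$, then condition~1 of the definition would force $P'(G)=0$ on every graph in the support of $\nu_n$, while condition~2 would force $P'(G)=1$ with high probability over $G\sim G(n,d)$. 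This would make $P'$ a polynomial-time distinguisher between $\nu_n$ and $G(n,d)$, contradicting~(ii) and yielding $\sigma^{\MC}_d\geq c$. Taking the supremum over Ramanujan $H$ gives $\sigma^{\MC}_d\geq \gamma^{\MC}_d$; the $\IS$ case is identical.

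The first candidate for $\nu_n$ is a uniformly random $(n/k)$-fold lift of $H$. Property~(i) is preserved since any cut $(S,\bar S)$ in $H$ lifts vertex-wise to a cut of the same fractional size, and analogously for independent sets. Unfortunately, property~(ii) fails naively: any short cycle of $H$ is inherited by every lift, so $\nu_n$ and $G(n,d)$ are trivially separated by local statistics (e.g.\ counts of $4$-cycles, which have bounded expectation under $G(n,d)$ but grow linearly under any lift of an $H$ containing a $4$-cycle). The remedy, implicit in the~\cite{kunisky2024computational} framework, is to either rerandomize short-range edges while preserving the cut structure (so that the local neighborhoods converge to the $d$-regular Galton--Watson tree that is also the local limit of $G(n,d)$), or to replace the lift with a soft planting based on cavity/replica-symmetric recursions that deliver the same cut fraction without inheriting the small cycles of $H$.

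The principal obstacle is exactly step~(ii). Proving computational indistinguishability against \emph{all} polynomial-time algorithms would require a genuine breakthrough in average-case complexity, which is why the statement is phrased as a conjecture. A tractable route, and I expect the route by which a conditional form of~\Cref{conj:avcase} can be established, is to prove~(ii) against the standard restricted algorithmic classes: low-degree polynomials of degree $\omega(1)$, local/message-passing algorithms, or finite levels of the sum-of-squares hierarchy. Each of these can plausibly be handled via moment matching between the cavity-planted $\nu_n$ and $G(n,d)$, or via spectral contiguity arguments in the spirit of Friedman's theorem and Marcus--Spielman--Srivastava. The role of AlphaEvolve is orthogonal to this analytic step: it supplies the extremal Ramanujan witnesses $H$ that maximize $\MC(H)$ (resp.\ $\IS(H)$), thereby maximizing the planted lower bound $c$ and making the conditional bound $\sigma^{\MC}_d\geq c$ as tight as possible against the spectral upper bounds that are available algorithmically.
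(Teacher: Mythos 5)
There is nothing to compare here: the statement is a conjecture imported from Kunisky--Yu, and the paper offers no proof of it --- it is used only as a hypothesis under which the lower bounds of \Cref{thm:avg-case-lb} become conditional hardness results. Your write-up correctly recognizes this, and the planting-plus-indistinguishability heuristic you describe is indeed the intuition behind the conjecture: plant a lift of an extremal Ramanujan graph $H$, note that cuts and independent sets of $H$ pull back to the lift with the same fraction, and argue that an efficient certificate beating $\MC(H)$ would distinguish the planted distribution from $G(n,d)$. Your honest acknowledgement that step (ii) is exactly the unprovable part is the right conclusion; no one should mistake this for a proof, and neither the paper nor \cite{kunisky2024computational} claims one.

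One factual correction to your discussion of the candidate $\nu_n$: a uniformly random $N$-fold lift of $H$ does \emph{not} inherit $N$ copies of each short cycle of $H$. A cycle of length $L$ in the base lifts to cycles of length $L k$, where $k$ ranges over the cycle lengths of the (essentially uniform) permutation obtained by composing the edge permutations around the cycle; the expected number of short cycles in the lift is therefore $O(1)$, matching the Poisson local statistics of $G(n,d)$, and the lift Benjamini--Schramm converges to the $d$-regular tree just as $G(n,d)$ does. So random lifts are not ``trivially separated by local statistics'' --- they are in fact the canonical candidate for $\nu_n$, and no rerandomization remedy is needed at the local level. The reason the conjecture is restricted to \emph{Ramanujan} base graphs is precisely so that the spectral statistics also match: by Bordenave--Collins-type results, random lifts of a Ramanujan graph are near-Ramanujan with high probability, which rules out the most natural (spectral) distinguisher and leaves the conjecture resting on the belief that nothing stronger than spectral and local information is efficiently extractable.
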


~\cite{kunisky2024computational} exhibited finite $d$-regular Ramanujan graphs for $d\in \{3,4\}$, yielding concrete lower bounds on $\sigma_d^{\MC}$ and $\sigma_d^{\IS}$ that are strong enough to prove a certification gap. On the other hand, there was a large gap between the best known upper bound~\cite{hoffman2003eigenvalues,haemers2021hoffman} and their lower bounds (see \Cref{tab:comparison} for a comparison). Our results make progress towards a more precise understanding of this question. Firstly, we use AlphaEvolve to construct better $d$-regular Ramanujan graphs, exhibiting stronger lower bounds.

\begin{theorem}[Lower bound]\label{thm:avg-case-lb}
    We have the lower bounds $\gamma_4^{\MC}\geq 113/124$, $\gamma_3^{\IS}\geq 17/36$, and $\gamma_4^{\IS}\geq 74/163$. Under \Cref{conj:avcase}, this implies the following computational hardness results: $\sigma_4^{\MC}\geq 113/124>0.911$, $\sigma_3^{\IS}\geq 17/36>0.472$, and $\sigma_4^{\IS}\geq 74/163>0.453$.
\end{theorem}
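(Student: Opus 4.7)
The plan is to reduce each of the three bounds on $\gamma^{\MC}_d,\gamma^{\IS}_d$ to the exhibition of an explicit finite $d$-regular Ramanujan graph, together with a witnessing cut or independent set whose normalized size equals the claimed fraction. Since $\gamma^{\MC}_d$ (resp.\ $\gamma^{\IS}_d$) is by definition the supremum of $\MC(G)$ (resp.\ $\IS(G)$) over all $d$-regular Ramanujan graphs $G$, any single such graph $G$ with $\MC(G)\ge\sigma$ (resp.\ $\IS(G)\ge\sigma$) certifies the lower bound $\gamma^{\MC}_d\ge\sigma$ (resp.\ $\gamma^{\IS}_d\ge\sigma$). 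The computational hardness half of the statement is then an immediate invocation of \Cref{conj:avcase}, so the entire content of the theorem lies in producing the three graphs.

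Concretely, for the bound $\gamma_4^{\MC}\ge 113/124$ I would display a $4$-regular graph on $62$ vertices (so $124$ edges) together with a bipartition cutting $113$ of its edges; for $\gamma_3^{\IS}\ge 17/36$ a $3$-regular graph on $36$ vertices with an independent set of size $17$; and for $\gamma_4^{\IS}\ge 74/163$ a $4$-regular graph on $163$ vertices with an independent set of size $74$. Each candidate would be obtained by running AlphaEvolve in the PTR paradigm of \Cref{sec:ptr}: the evolving code snippet outputs an adjacency list together with a proposed cut/independent-set witness, and the verifier produces a smooth score that combines (a) the spectral slack $2\sqrt{d-1}-\max_{i>1}|\lambda_i(A_G)|$ (penalizing non-Ramanujan-ness) with (b) the normalized size of the proposed witness. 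Both ingredients are efficiently computable, so large searches are feasible, which is why we can afford the $n=163$ construction.

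Given the constructions, the verification step itself is routine and is the skeleton of the formal proof. I would (i) compute the adjacency spectrum numerically, rigorously certify via interval arithmetic or by checking that the characteristic polynomial has no root outside $[-2\sqrt{d-1},2\sqrt{d-1}]\cup\{d\}$ that $\max_{i>1}|\lambda_i|\le 2\sqrt{d-1}$, thereby confirming the Ramanujan property; (ii) for the MAX-CUT instance, read off the bipartition and directly count the crossing edges to confirm the count is $113$; and (iii) for each independent-set instance, verify that no two listed vertices are adjacent and that the size matches. Taking the supremum definitions of $\gamma_d^{\MC},\gamma_d^{\IS}$ then yields the three numerical lower bounds, and \Cref{conj:avcase} transfers these to $\sigma_4^{\MC}\ge 113/124>0.911$, $\sigma_3^{\IS}\ge 17/36>0.472$, $\sigma_4^{\IS}\ge 74/163>0.453$.

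The main obstacle, and the reason the statement is nontrivial despite its short verification, is the search problem that AlphaEvolve solves: the space of $d$-regular graphs on even $n\sim 163$ vertices is astronomically large, and the Ramanujan constraint is a spectral, non-local condition that is ruinous for naive local search because small edge swaps routinely push $\max_{i>1}|\lambda_i|$ across the $2\sqrt{d-1}$ threshold. The key design choice is therefore the scoring function: it must jointly reward large cuts or independent sets while smoothly penalizing spectral violations so that the LLM-proposed code mutations can hill-climb along the Ramanujan boundary. Once a candidate clears this boundary, the entire proof collapses to the elementary linear-algebra and combinatorial checks described above, so no further analytic argument beyond \Cref{conj:avcase} is needed.
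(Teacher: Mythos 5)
Your proposal matches the paper's proof: the appendix simply exhibits three explicit Ramanujan graphs found by AlphaEvolve (given in sparse6 format) together with a witnessing cut or independent set, verifies the regularity, the Ramanujan spectral condition, and the witness sizes, and concludes via the supremum definitions and \Cref{conj:avcase}. The only cosmetic difference is that the paper's MAX-CUT construction is a $4$-regular graph on $124$ vertices with $226$ of its $248$ edges cut (giving $226/248=113/124$), rather than your guessed $62$-vertex, $124$-edge instance, but this does not affect the argument.
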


This Theorem improves upon the results of \cite{kunisky2024computational}, who prove $\gamma_4^{\MC}\geq 7/8$, $\gamma_3^{\IS}\geq 11/24$, and $\gamma_4^{\IS}\geq 3/7$. An explicit description of the constructions found by AlphaEvolve can be found in Appendix~\ref{sec:avg-appendix-lb}, and a figure of the construction achieving the $\gamma_4^{\MC}$ bound is presented in \Cref{fig:ramanujan}. We complement these hardness results with an algorithmic improvement over Hoffman's classic bound~\cite{hoffman2003eigenvalues,haemers2021hoffman}.

\begin{theorem}[Upper bound]\label{thm:avg-case-ub}
    We have  $\sigma_3^{\MC}\leq 0.953$, $\sigma_4^{\MC}\leq 0.916$, $\sigma_3^{\IS}\leq 0.476$, and $\sigma_4^{\IS}\leq 0.457$.
\end{theorem}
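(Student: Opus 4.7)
The plan is to refine the classical Hoffman eigenvalue bound by certifying via a higher-degree polynomial of the adjacency matrix $A$, exploiting the fact that Friedman's theorem places the entire non-trivial spectrum of a random $d$-regular graph into the Ramanujan interval $[-2\sqrt{d-1},2\sqrt{d-1}]$. As a baseline, Hoffman together with Friedman only gives $\IS(G)\le 2\sqrt{d-1}/(d+2\sqrt{d-1})$ and $\MC(G)\le 1/2+\sqrt{d-1}/d$ w.h.p., i.e.\ the much weaker numbers $\{0.485,0.971\}$ and $\{0.464,0.933\}$ for $d=3,4$. The required improvements of roughly $0.01$--$0.02$ must therefore come from using more than just the extremal eigenvalue.

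First, I would set up each certificate as a polynomial--spectral inequality. For independent sets, the generalized Hoffman inequality states that for any univariate polynomial $p$ with $p(A)\succeq 0$,
\begin{equation*}
    \IS(G) \;\le\; \frac{\max_{i>1} p(\lambda_i)}{p(d) + \max_{i>1} p(\lambda_i)},
\end{equation*}
proved by the standard ratio argument applied to $\mathbf{1}_S^T p(A) \mathbf{1}_S\ge 0$. The degree-$1$ choice $p(x)=x-\lambda_n$ recovers the classical bound; choosing a degree-$2$ or degree-$3$ polynomial that is small throughout $[-2\sqrt{d-1},2\sqrt{d-1}]$ while large at $d$ strictly lowers the ratio. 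For max-cut, an analogous formulation seeks polynomials $q$ with $q(A)+A\succeq 0$ and upper-bounds $\max_{x\in\{-1,1\}^n}(-x^T A x)/n$ by $\max_{i>1} q(\lambda_i)$, giving $\MC(G) \le 1/2 + \max_{i>1} q(\lambda_i)/(2d)$.

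Second, because Friedman's theorem places every $\lambda_i$ (for $i>1$) in $[-2\sqrt{d-1}-o(1),2\sqrt{d-1}+o(1)]$ with probability $1-o_n(1)$, the quantity $\max_{i>1} p(\lambda_i)$ is at most $\sup_{t\in[-2\sqrt{d-1},2\sqrt{d-1}]} p(t) + o_n(1)$. The certification problem thus reduces to a small, deterministic optimization over low-degree polynomials --- solved offline once per $d\in\{3,4\}$ --- while the certifier itself just computes $\max_{i>1} p(\lambda_i)$ on the input graph in polynomial time.

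The main obstacle is identifying the optimizing polynomial explicitly. I expect degree-$2$ or degree-$3$ polynomials to suffice, with coefficients emerging from the stationarity conditions of the outer optimization over polynomials nonnegative on $[-2\sqrt{d-1},2\sqrt{d-1}]$; a Chebyshev-type ansatz is natural. Once the polynomial is specified, both the PSD-on-spectrum constraint and the resulting numerical bound are checked by direct computation, yielding the claimed values $0.953$, $0.916$, $0.476$, $0.457$.
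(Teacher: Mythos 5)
There is a genuine gap: a purely spectral (functional-calculus) certificate of the kind you propose cannot reach the stated numbers, and the key inequality you rely on is not valid beyond degree one. For the independent-set case, the only place where independence of $S$ enters the Hoffman argument is the degree-one identity $\mathbf{1}_S^T A\,\mathbf{1}_S=0$. For a polynomial $p$ of degree $\geq 2$, the quantity $\mathbf{1}_S^T p(A)\mathbf{1}_S$ involves the walk counts $(A^k)_{uv}$ for $u,v\in S$, which are nonnegative, not determined by $|S|$, and not forced to vanish by independence; so the claimed bound $\IS(G)\le \max_{i>1}p(\lambda_i)/(p(d)+\max_{i>1}p(\lambda_i))$ does not follow from $\mathbf{1}_S^Tp(A)\mathbf{1}_S\ge 0$. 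For the max-cut case the formulation is sound but provably gains nothing: the empirical spectral distribution of $G\sim G(n,d)$ converges to the Kesten--McKay law, whose support is the entire interval $[-2\sqrt{d-1},2\sqrt{d-1}]$, so any polynomial $q$ with $q(\lambda_i)+\lambda_i\ge 0$ for all $i>1$ must satisfy $q(t)\ge -t$ on essentially the whole interval, forcing $\max_{i>1}q(\lambda_i)\ge 2\sqrt{d-1}-o(1)$. Your bound then collapses back to Hoffman's $1/2+\sqrt{d-1}/d$, i.e.\ $0.971$ and $0.933$, with no room for the claimed $0.953$ and $0.916$. There is no ``gap in the spectrum'' for a Chebyshev-type ansatz to exploit.

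The paper's proof gets past this barrier precisely by injecting information that is not a function of the spectrum. It associates to each pair $(G,S)$ a distribution over $\{\pm 1\}$-labelings of the depth-$L$ $d$-ary tree, obtained by reading the cut (or independent-set) indicator along nonbacktracking walks from a random vertex, and writes a finite LP in the probability mass function of this distribution. The constraints are: local consistency of the marginals on overlapping subtrees, the prescribed mean of the root label, a spectral constraint on the distance-$L$ correlation $\mathbb{E}[y_0 y_\ell]$ derived from $\lambda^*(G)\le 2\sqrt{d-1}+\eps$ via $x^TA^Lx$, and, for independent sets, the purely combinatorial restriction that the support consists of labelings that are independent sets of the tree (plus a local-optimality preprocessing for cuts). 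At $L=1$ this LP reproduces Hoffman exactly; the improvement comes entirely from the higher-depth combinatorial constraints, solved numerically for $L$ up to $4$. If you want to salvage a polynomial-style argument, you would have to augment it with exactly this kind of local structure (e.g., controlling $(p(A))_{uv}$ for $u,v\in S$ via tree-like neighborhoods), at which point you are rebuilding the paper's local LP.
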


\begin{proof}[Proof ideas for the $\maxcuttwo$ upper bound]
    Similar to existing bounds, we will use the spectral bound $\max_{i>1}|\lambda_i|\lesssim 2\sqrt{d-1}$, that is satisfied with high probability when $G\sim \calG(n,d)$. Here $\lambda_1\geq \lambda_2\geq\ldots\geq \lambda_n$ are the eigenvalues of the adjacency matrix $\bfA$ of $G$.

    Consider a cut $(S, \bar{S})$ in $G$ such that $\alpha$ fraction of edges cross $(S,\bar{S})$. Hoffman's bound proceeds by constructing the vector $\bfx = (2\cdot \mathbf{1}\{i\in S\} - 1, i\in [n])$. The spectral bound above implies an upper bound on $|\bfx^\top \bfA \bfx|$, which in turn constrains $\alpha$. We generalize this approach by considering inequalities involving new higher-order quadratic forms of the form $\bfx^\top \bfA^L \bfx$, where $L\in \NN$. It is much less clear how bounds on $\bfx^\top \bfA^L \bfx$ imply better bounds on $\alpha$; we show that the strongest bound one can derive on $\alpha$ can be captured by a linear program over probability distributions on $\{\pm 1\}$-labelings of a $d$-regular tree of depth $L$. Such a distribution is meant to model the local view of $(S,\bar{S})$ experienced by a random vertex. We solve this LP for small fixed values of $L>1$, finding that one can improve on Hoffman's bound (corresponding to $L=1$).
\end{proof}

We prove this result in Appendix~\ref{sec:avg-appendix-ub}. For reference, we have collected our results and how they compare to previous results in \Cref{tab:comparison}. Notably, we obtain upper and lower bounds on $\sigma_4^{\MC}$, $\sigma_3^{\IS}$, and $\sigma_4^{\IS}$ that match up to a small absolute error of $0.005$, so both \Cref{thm:avg-case-lb,thm:avg-case-ub} are nearly optimal for these cases. In particular, this raises the exciting possibility that one might be able to obtain clues toward complete characterizations of $\sigma_d^{\MC}$ and $\sigma_d^{\IS}$ by studying the proofs of \Cref{thm:avg-case-lb,thm:avg-case-ub}.

\begin{table}[t]
\centering
\begin{tabular}{lcc|cc}
\toprule
 & LB~(\cite{kunisky2024computational})&  LB (\Cref{thm:avg-case-lb}) & UB (\Cref{thm:avg-case-ub}) & UB (\cite{hoffman2003eigenvalues,haemers2021hoffman}) \\
\midrule
$\maxcuttwo, d=3$ & 0.944 & 0.944 & {\bf\color{blue}  0.953} & 0.971 \\
$\maxcuttwo, d=4$ & 0.875 & {\bf\color{blue} 0.911} & {\bf\color{blue}  0.916} & 0.933 \\
$\maxind, d=3$ & 0.458 & {\bf\color{blue}  0.472} & {\bf\color{blue}  0.476} & 0.485 \\
$\maxind, d=4$ & 0.428 & {\bf\color{blue}  0.454} & {\bf\color{blue}  0.457} & 0.464 \\
\bottomrule
\end{tabular}
\caption{Comparison of hardness (LB) and algorithms (UB) for certifying upper bounds on $\maxcuttwo$ and $\maxind$ for $\calG(n,d)$, in terms of lower bounds and upper bounds on $\sigma_d^{\MC}$ or $\sigma_d^{\IS}$ (see \Cref{def:certification-problem-def}). The LB results are conditional on \Cref{conj:avcase}. Our improvements are {\bf\color{blue}highlighted}.}\label{tab:comparison}
\end{table}

\begin{figure}[t]
  \centering
    \includegraphics[scale=0.2,trim=0cm 2cm 0cm 2.5cm, clip]{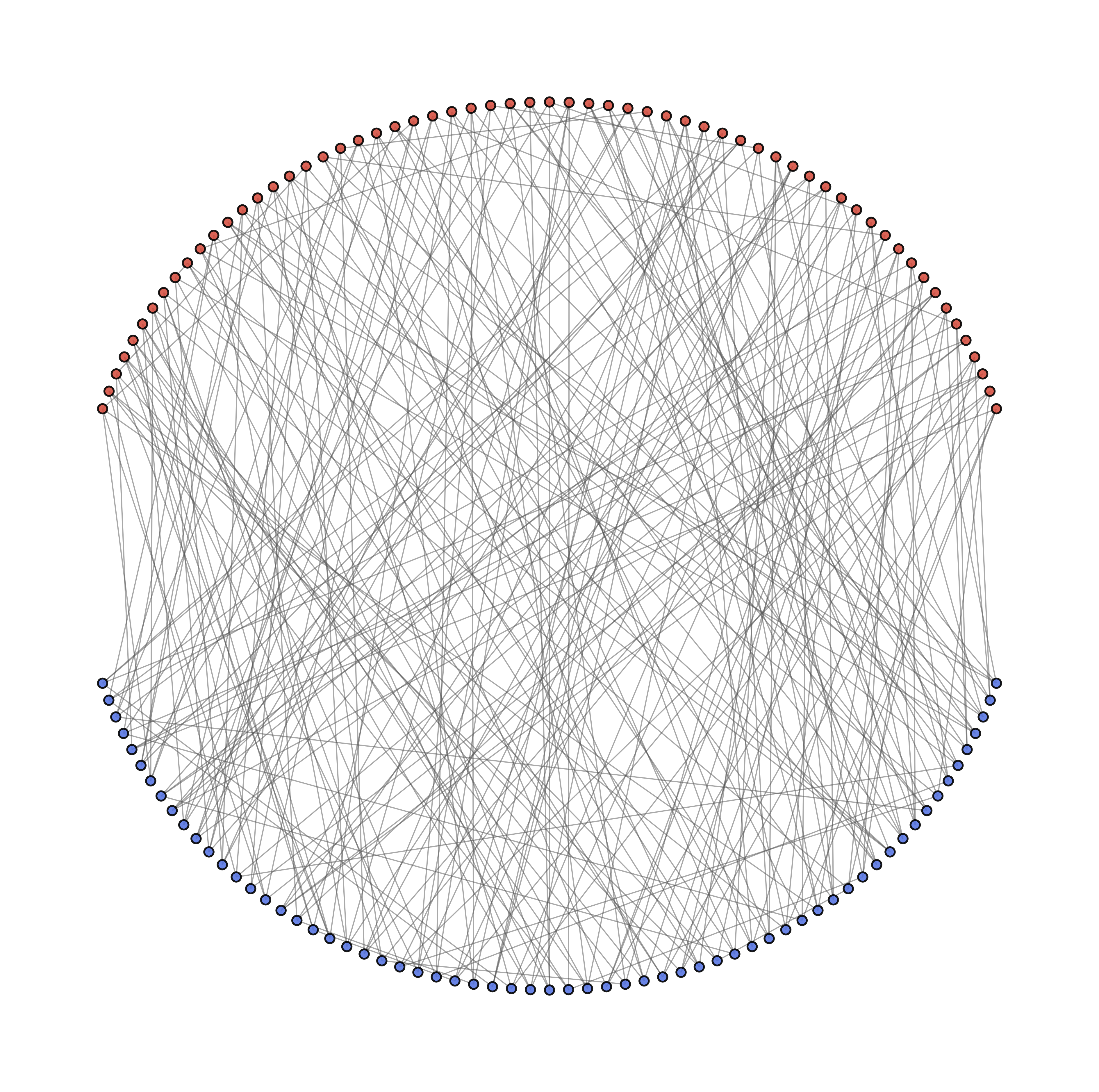}
    \caption{$4$-regular Ramanujan graph found by AlphaEvolve for the lower bound on $\gamma_4^{\MC}$ in \Cref{thm:avg-case-lb}.}
    \label{fig:ramanujan}
\end{figure}

\paragraph{Comments on finding near-optimal lower bounds with AlphaEvolve.} 

We conclude this section with remarks on our methodology in \Cref{thm:avg-case-lb}, and our results. The constructions given in \cite{kunisky2024computational} for $d\in \{3,4\}$ are graphs on up to $12$ vertices that were generated by computer-assisted experimentation. We were able to replicate their results by randomly sampling a large number of regular graphs, albeit with the post-hoc knowledge of the number of self-loops in their construction.

Improving their lower bounds necessitates constructions on many more vertices, as the granularity of $\MC(G)$ or $\IS(G)$ is limited by the size of $G$. At our target scale, the approach of randomly sampling constructions does not work for two reasons: (1) the space of $d$-regular $n$-vertex graphs blows up combinatorially, meaning random sampling will not find interesting ``extremal'' graphs, and (2) the complexity of computing $\MC(G)$ or $\IS(G)$ is exponential in $n$, so it is hard to even compute these bounds for a construction.

As stated before, we used AlphaEvolve as a tool to search for finite Ramanujan graphs witnessing the bounds in \Cref{thm:avg-case-lb}. We circumvent the above limitation by searching over the space $\calS = \bigcup_{n=1}^\infty \Omega_n\times 2^{[n]}$, where $\Omega_n$ is the set of $n$-vertex $d$-regular graphs. That is, we search directly for a pair $(G, S)$, where $G$ is a graph and $S$ is a cut (or independent set) in $G$, scoring the pair as $\score(G, S) = -\infty$ if $G$ is not Ramanujan, and $\score(G, S) = {|\delta_G(S,\bar{S})|}/{|E(G)|}$ otherwise. Here, $\delta_{G}(S,\bar{S})$ is the set of edges crossing the cut $(S,\bar{S})$ in $G$. This gives an efficiently computable score function that lower bounds $\gamma_d^{\MC}$ (or $\gamma_d^{\IS}$). Consequently, AlphaEvolve isn't limited to the space of small graphs. Indeed, it outputs constructions on as many as $163$ vertices (for $\gamma_4^{\IS}$).

For $d=3$ for $\maxcuttwo$, AlphaEvolve recovered the existing bound $\gamma_3^{\MC}\geq 17/18$ of \cite{kunisky2024computational}, but wasn't able to improve on it. We note that for the sake of conciseness we restricted ourselves to the $d\in \{3,4\}$ cases. We expect our techniques to achieve similar improvements over known bounds on $\gamma_d^{\MC}$ and $\gamma_d^{\IS}$ (\cite{bandeira2021spectral,hoffman2003eigenvalues}) for all reasonably small values of $d>4$ except when $d-1$ is an odd perfect square, when \cite{bandeira2021spectral} already gives the optimal lower bound.
\section{Gadget based NP-Hardness for Approximating \maxcut{k}}
\label{sec:maxkcut}

The field of \emph{approximation algorithms}~\cite{williamson2011design} concerns finding approximately optimal solutions to computationally hard combinatorial optimization problems. In \emph{hardness of approximation}~\cite{arora2009computational} the main goal is to understand when such approximations are computationally hard. We work within the well-studied framework of Constraint Satisfaction Problems (CSPs).

Consider a collection of predicates $\calP=\{\predicate_1,\ldots, \predicate_r\}$ over some finite alphabet $\mainalphabet$, where the predicates are functions $\predicate_i:\mainalphabet^{l_i}\to \{0,1\}$. A CSP is defined by a set of variables and a collection of clauses, where each clause applies a predicate from $\calP$ to a subset of the variables, and the goal is to find an assignment of values from $\mainalphabet$ to the variables that maximizes the number of satisfied constraints. The hardness of approximating a CSP is parameterized by two parameters $c$ (completeness) and $s$ (soundness).  In this paper we will prove statements of the following form: Assuming $\texttt{P}\neq\texttt{NP}$, it is hard to distinguish between CSP instances where the maximum fraction of satisfied clauses is at least $c$, from those where it is at most $s$. This in turn implies that it is $\texttt{NP}$-hard to approximate the maximum number of satisfied clauses in a CSP instance within a factor of $s/c$. In this paper we will parameterize the hardness of CSPs with the tuple $(c,s)$.

\begin{defn}[CSP]
    Let $\mainalphabet$ be a finite alphabet, and let $\predicate_1,\ldots, \predicate_r$ be a collection of predicates over $\mainalphabet$, that is, $\predicate_i:\mainalphabet^{\ell_i}\to \{0,1\}$ for some $\ell_i\in \mathbb{N}$. We define $\csp(\predicate_1,\ldots, \predicate_r)$ to be the collection of instances $\inst:\mainalphabet^n\to \mathbb{R}_{\geq 0}$ of the form $\inst(\bfx) = \sum_{j\in [m]} \clause_j(\bfx)$ where for each $j$, $\clause_j$ is a clause, i.e., it takes the form $\clause_j(\bfx) = \predicate_i(x_{a_1}, x_{a_2},\ldots, x_{a_{\ell_i}})$ for some $i\in [r]$ and distinct $a_1,\ldots, a_{\ell_i}\in [n]$.
    \label{def:CSP}
\end{defn}

In the Max-CSP setting, the input is a description of an instance $I$ of a particular CSP, and the goal is to compute the \emph{maximum} number of constraints that can be satisfied. In other words, we are trying to approximately compute the function $f(\inst) = \max_{\bfx\in \mainalphabet^n}\inst(\bfx)$. It will be useful to cast the approximation problem as a decision problem, as we do below by introducing the notion of $(c,s)$-approximation.

\begin{defn}[$(c,s)$-approximation, $\alpha$-approximation]
    Given a constraint satisfaction problem  $\csp(\predicate_1,\ldots, \predicate_r)$ and constants $0 \leq s\leq c\leq 1$, the $(c,s)$ approximation problem for $\csp(\predicate_1,\ldots, \predicate_r)$ is the problem of, given an $n$-variable instance $\inst=\sum_{j\in [m]}\clause_j$, deciding whether (1) $\max_\bfx \inst(\bfx)\geq c\cdot m$ or (2) $\max_\bfx \inst(\bfx)\leq s\cdot m$. Similarly for $0<\alpha\leq 1$, the $\alpha$-approximation problem is the problem of finding some $\bfx'$ such that $\inst(\bfx')\geq \alpha\cdot \max_x \inst(\bfx)$.
\end{defn}

In particular, an $\alpha$-approximation algorithm for the Max-CSP problem, with $\alpha \geq s/c$,  gives an $(c, s)$-approximation algorithm. The main driving question can be stated as ``given a CSP, what are all pairs $0\leq s \leq c\leq 1$ such that there is an efficient $(c,s)$ approximation for the CSP?'' 

{
We have a strong quantitative understanding for a number of CSPs~\cite{haastad2001some,chan2016approximation} under NP-hardness. However, there are some quite simple CSPs for which very little is known quantitatively. For example, consider $\maxcut{k}$, the problem of partitioning the vertices of a graph into $k$ sets in a way that maximizes the number of edges crossing the partition. This problem can be written as a CSP over alphabet $\mainalphabet = \ZZ_k$ as $\maxcut{k} := \csp(\predneq)$ where $\predneq$ is the ``inequality'' predicate $\predneq(x,y) = \mathbf{1}\{x\neq y\}$. For small values of $k$, there is a large gap between the best known algorithmic results (concretely, $0.836$ and $0.857$-approximations for $k=3$ and $k=4$ respectively~\cite{goemans2001approximation,de2004approximate}) and the best known NP-hardness ($16/17+\eps\approx 0.941$ and $85/86+\eps\approx 0.9883$ for $k=3$ and $k=4$ respectively~\cite{austrin2014new})\footnote{Even under the Unique Games Conjecture, the optimal approximability curve is not known for $k=4$, and for $k=3$ is only known in the low completeness regime~\cite{heilman2023three} (which implies a weak inapproximability of $0.989$).}.
}

\emph{Gadget reductions} provide a methodical framework to find reductions between problems. Specifically in the context of inapproximability, let $\csp_{\text{source}}$ be a problem with known inapproximability, and suppose we would like to prove inapproximability for another problem $\csp_{\text{target}}$. The idea is to systematize the set of reductions from $\csp_{\text{source}}$ to $\csp_{\text{target}}$ by considering only ``local'' reductions that replace each clause of an instance of $\csp_{\text{source}}$ with a collection of new variables, along with an instance of $\csp_{\text{target}}$, known as a \emph{gadget} on the variables involved. Crucially, the gadget for a particular predicate of $\csp_{\text{source}}$ is a finite instance of $\csp_{\text{target}}$ of a fixed constant size.

As an illustrative example, consider the reduction from 3-SAT to MAX-2-SAT appearing in~\cite{trevisan2000gadgets}. Here, the source constraint is a 3-CNF clause $C = x_1 \lor x_2 \lor x_3$. This single constraint is mapped to a gadget consisting of a collection of ten 2-CNF clauses. The local assignment is defined over the~\emph{original variables} $\{x_1, x_2, x_3\}$ along with an~\emph{auxiliary variable} $z$ introduced specifically for this gadget.

The target collection contains the singleton clauses $x_1, x_2, x_3, z$, the pairwise negated clauses of the variables, and mixed clauses linking the variables to the auxiliary. The full set of target clauses is:
\[
\begin{gathered}
x_1, x_2, x_3, \neg x_1 \lor \neg x_2, \neg x_2 \lor \neg x_3, \neg x_3 \lor \neg x_1, \\
z, \neg x_1 \lor \neg z, \neg x_2 \lor \neg z, \neg x_3 \lor \neg z.
\end{gathered}
\]
In this construction, if the source constraint is satisfied, there exists a local assignment (a value for $z$) such that a fraction $c=0.7$ (7 out of 10) of the target clauses are satisfied. Conversely, if the source constraint is not satisfied, no local assignment can satisfy more than a fraction $s=0.6$ of the target constraints.

{
We use AlphaEvolve to find such gadget reductions from the well-understood CSP $\threelin{k}$ to $\maxcut{k}$ for $k\in \{3,4\}$. Here $\threelin{k}$ is defined as $\csp\left(\predeqv{0}{k}{3}, \predeqv{1}{k}{3},\ldots, \predeqv{(k-1)}{k}{3}\right)$ for the $3$-ary ``linear equation'' predicates $\predeqv{i}{k}{3}(x,y,z) = \mathbf{1}\{x + y + z \equiv i\pmod k\}$. {We adopt the notation $\texttt{P}^{\texttt{type}}_{r, k}$, where the superscript specifies the nature of the constraint, $r$ denotes the arity, and $k$ (if present) indicates the alphabet size.}
}


\begin{theorem}\label{thm:max-3-cut}
    For any $\eps>0$, there is a gadget reduction from $(1-O(\eps), 1/3+O(\eps))$-approximating $\threelin{3}$ to $(0.9193-\eps, 0.887+\eps)$-approximating $\maxcut{3}$. As a consequence, it is NP-hard to $0.9649+\eps$-approximate $\maxcut{3}$. More precisely, the completeness and soundness correspond to $(57/62-\eps, 55/62+\eps)$, and the corresponding to a hardness of $55/57+\eps$.
\end{theorem}

\begin{theorem}\label{thm:max-4-cut}
{
    For any $\eps>0$, there is a gadget reduction from $(1-O(\eps), 1/4+O(\eps))$-approximating $\threelin{4}$ to $(\completeness-\eps, \soundness+\eps)$-approximating $\maxcut{4}$. As a consequence, it is NP-hard to $\approxratio+\eps$-approximate $\maxcut{4}$.
}
\end{theorem}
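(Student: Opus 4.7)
The plan is to apply the classical gadget reduction framework of~\cite{trevisan2000gadgets} starting from H\aa stad's NP-hardness of approximating $\threelin{4}$ in the gap setting $(1-O(\eps), 1/4 + O(\eps))$, with the gadget itself being a finite weighted $\maxcut{4}$ instance produced by AlphaEvolve.

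First I would formalize what it means for a weighted $\maxcut{4}$ instance $G$ on a variable set $V = \{x_1, x_2, x_3\} \cup Y$ with $|V| = m$ and total edge weight $1$ to be a $(c, s)$-gadget for the source predicate $P_0^{=}$: for every $\sigma \in \ZZ_4^3$, the maximum over $\tau \in \ZZ_4^{|Y|}$ of the cut value of $G$ under $(\sigma, \tau)$ equals $c$ whenever $\sigma$ satisfies $P_0^{=}$, and equals $s$ otherwise. Gadgets for the shifted predicates $P_i^{=}$, $i = 1, 2, 3$, are obtained from the gadget for $P_0^{=}$ by the obvious additive relabeling of one source variable.

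Next I would invoke the standard gadget composition lemma: given a $(c, s)$-gadget as above, replacing each source clause in a $\threelin{4}$ instance with a fresh copy of the appropriate gadget yields a $\maxcut{4}$ instance whose optimum cut fraction is at least $c - O(\eps)$ when the source has completeness $1 - \eps$, and at most $(c + 3s)/4 + O(\eps)$ when the source has soundness $1/4 + \eps$ (since in the soundness case at most a $1/4 + \eps$ fraction of source clauses can be simultaneously satisfied, contributing $c$ each, with the rest contributing only $s$). Setting $c = \completeness$ and choosing $s$ so that $(c + 3s)/4 = \soundness$ (i.e., $s = (4\soundness - \completeness)/3 \approx 0.92$) yields the stated $(\completeness - \eps, \soundness + \eps)$-approximation hardness of $\maxcut{4}$, and since $\soundness/\completeness > \approxratio$ the $\approxratio$-inapproximability corollary follows.

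The main obstacle, and the primary contribution here, is to exhibit a concrete gadget achieving these $(c, s)$ parameters. I would have AlphaEvolve evolve a code snippet whose output encodes a weighted $\maxcut{4}$ instance on $m = 19$ variables (three source plus sixteen auxiliary), scored by the discrepancy between its profile of per-source-assignment optimal cut values and the target $(c, s)$. The verifier computing this score must take a maximum over $4^{16}$ auxiliary assignments for each of the $4^3$ source assignments, making naive scoring prohibitive; the system-level verifier acceleration described in~\Cref{sec:search,sec:fasterVerification} is what makes the search feasible. Once AlphaEvolve returns a candidate, its validity is certified by a brute-force enumeration over all $4^{19}$ joint assignments, directly computing the two extremal cut values for each setting of $(x_1, x_2, x_3)$ and comparing against the claimed $(c, s)$; this independent brute-force check, rather than the accelerated verifier, is what actually underwrites the theorem.
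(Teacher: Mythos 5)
Your overall architecture (H\aa stad's $\threelin{4}$ hardness, clause-by-clause gadget substitution, brute-force certification of the finite gadget) matches the paper, but there is a genuine gap at the heart of your gadget definition: you have no mechanism to break the alphabet symmetry of $\maxcut{4}$, and without one the gadget you posit cannot exist. Every clause $P^{\neq}(u,v)$ is invariant under adding a constant $a\in\ZZ_4$ to both endpoints, so the profile $V(\sigma):=\max_{\tau}I(\sigma,\tau)$ of any $\maxcut{4}$ instance is constant on diagonal-shift orbits $\sigma\mapsto\sigma+a\mathbf{1}$. But for $k=4$ every $\sigma\in\ZZ_4^3$ with $\sigma_1+\sigma_2+\sigma_3=j$ satisfies $P_0^{=}$ after shifting by $a=j$ (since $j+3j=0$ in $\ZZ_4$), so every unsatisfying assignment lies in the same orbit as a satisfying one, forcing $s\geq c$. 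The same invariance makes your ``obvious additive relabeling'' vacuous: shifting one source variable is absorbed by the symmetry, so it cannot turn a $P_0^{=}$-gadget into a $P_i^{=}$-gadget. The paper resolves exactly this obstruction by adding $k=4$ \emph{global variables}, shared across all gadget copies in the composed instance (so your variable count ``three source plus sixteen auxiliary'' should be three primary, four global, and twelve auxiliary); in the completeness analysis the globals are pinned to $(0,1,2,3)$, in the soundness analysis one argues WLOG that they take a canonical orbit representative $y\in Y$, and the gadgets $I_1,I_2,I_3$ are obtained from $I_0$ by cyclically permuting the roles of the global variables rather than by relabeling a source variable.

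A secondary, smaller issue: your soundness bound $(c+3s)/4$ silently conflates two different completeness-side quantities. The paper distinguishes $c(I_i)$ (a min over satisfying source assignments of a max over auxiliaries, with the globals \emph{fixed}) from $c'(I_i)$ (a max over satisfying assignments, auxiliaries, \emph{and} allowed global assignments $y\in Y$), and the soundness bound must use $c'$, since in the soundness case the adversary controls the globals and may realize the best satisfying assignment rather than the worst. For the paper's $\maxcut{4}$ gadget these differ only slightly ($49535$ vs.\ $49538$ out of $t=52941$), so your formula is numerically close, but the argument as you state it is not airtight. The rest of your proposal --- the scoring by implied inapproximability ratio, the accelerated verifier used only inside the search loop, and the independent brute-force certification that underwrites the theorem --- is faithful to the paper.
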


{
We prove the results in Appendix~\ref{sec:gadget-appendix}. In particular, we used AlphaEvolve to search for the gadgets used in these results. Since $\maxcut{k}$ is a symmetric binary predicate, we can view a gadget, which is an instance of $\maxcut{k}$, as an undirected graph (possibly containing parallel edges). In~\Cref{fig:combined_gadgets,fig:fourcut}, we provide a visual representation of these instances. We note that our gadgets look extremely different in the $k=3$ and $k=4$ cases;  in particular, in the $k=3$ case our gadgets contain at most $3$ parallel copies of each edge, while in the $k=4$ case the gadgets contain as many as $\parallelcopies$ parallel copies of some edges. As a result, \Cref{thm:max-4-cut} does not have an inapproximability factor that can be written as a rational number with a small denominator.
}

\begin{figure}[htb]
  \centering
  \begin{subfigure}{0.45\textwidth}
    \centering
    \includegraphics[scale=0.05]{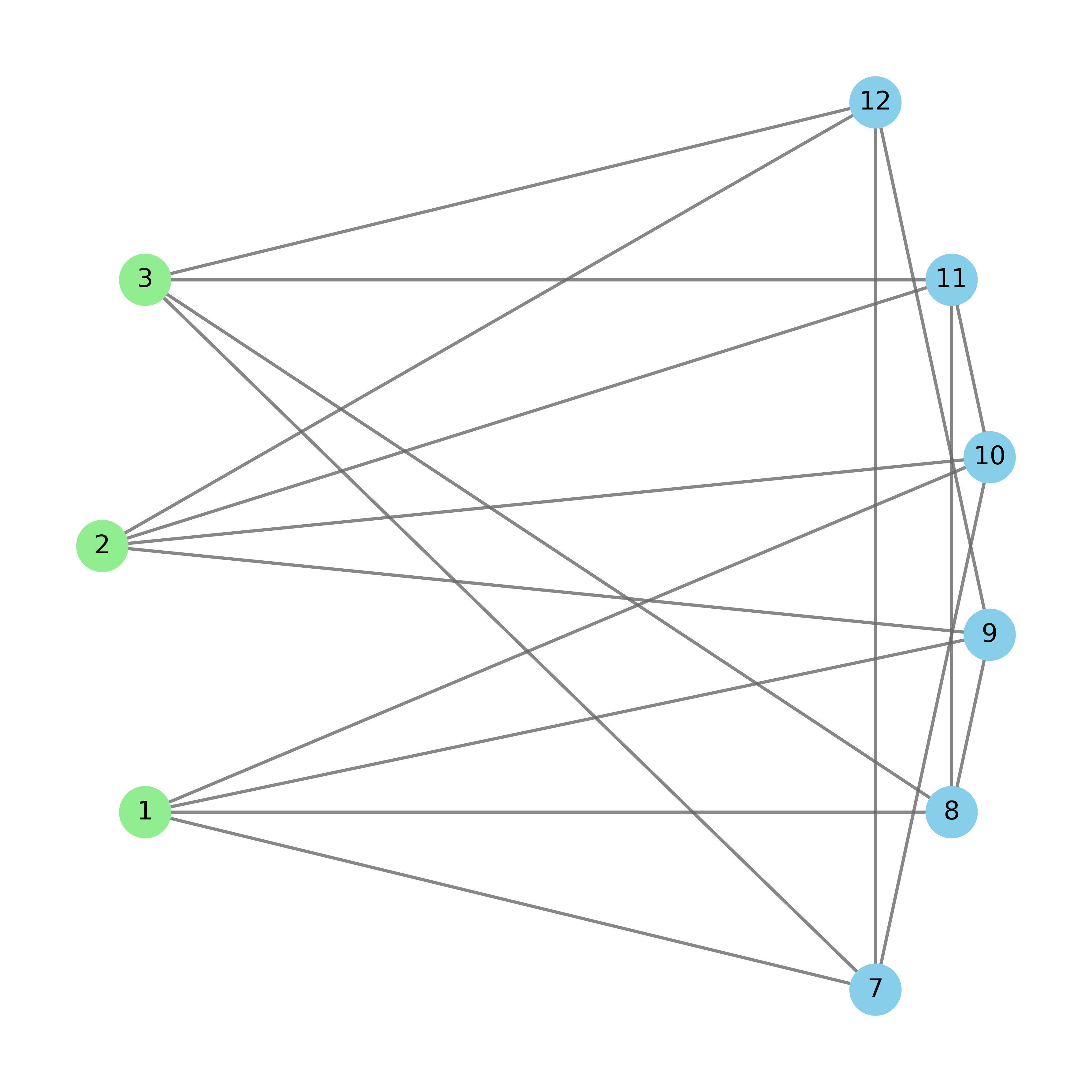}
    \caption{Gadget reducing $\predeqv{0}{3}{3}$ to $\predneq$. The vertices $\{1,2,3\}$ are primary variables, and the rest are auxiliary variables. All edges represent single copies of $\predneq$ clauses. The global variables are not pictured as they are unused.}\label{subfig:gadget-zero}
  \end{subfigure}
  \hfill
  \begin{subfigure}{0.45\textwidth}
    \centering
    \includegraphics[scale=0.05]{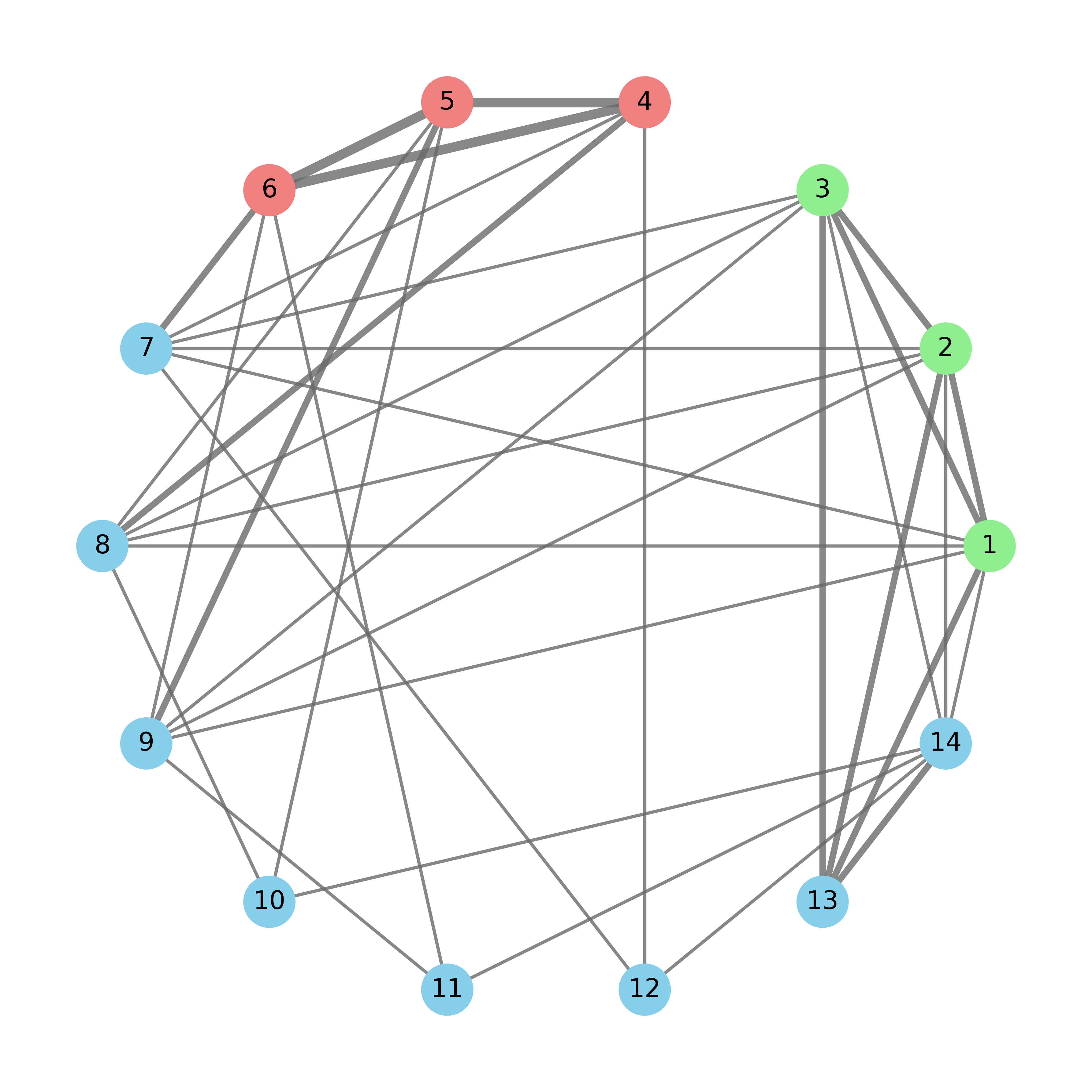}
    \caption{Gadget reducing $\predeqv{1}{3}{3}$ to $\predneq$. The vertices $\{1,2,3\}$ are primary variables, $\{3,4,5\}$ are global variables, and the rest are auxiliary variables. The edges represent one, two, and three copies of the corresponding $\predneq$ clause, depending on the thickness.}\label{subfig:gadget-1}
    \label{fig:second}
  \end{subfigure}
  \caption{Gadgets found by AlphaEvolve for reducing $\threelin{3}$ to $\maxcut{3}$ (see Appendix~\ref{sec:gadget-appendix} for a more explicit description via edge lists). Note that an edge $(i,j)$ in the graph corresponds to the predicate $\predneq$ applied to variables $i$ and $j$, with thickness  proportional to its number of copies in the instance.}
  \label{fig:combined_gadgets}
\end{figure}

\begin{figure}[ht]
  \centering
    \includegraphics[scale=0.25, trim=0cm 1.5cm 0cm 2cm, clip]{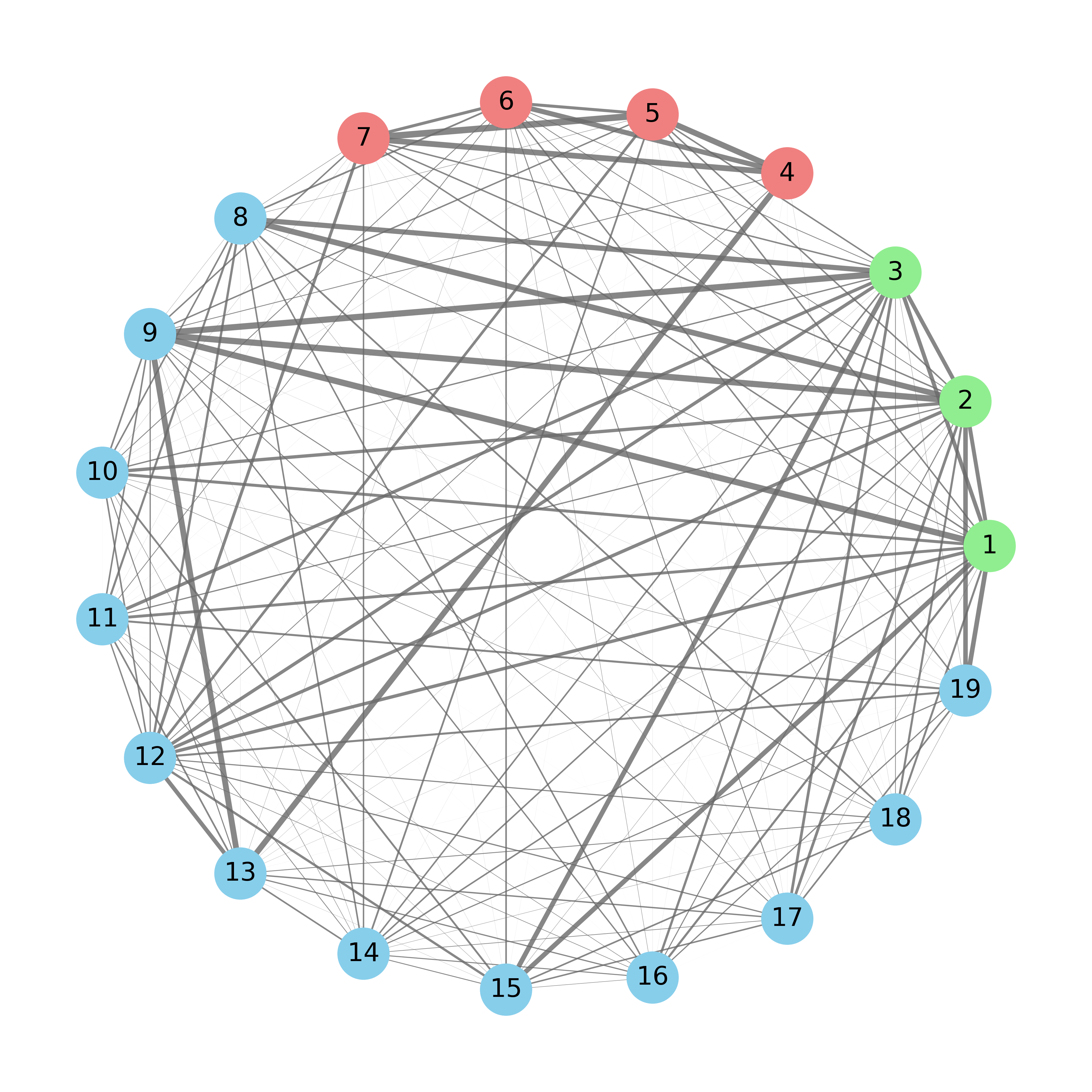}
  \caption{Gadget found by AlphaEvolve for reducing $\threelin{4}$ to $\maxcut{4}$. The thickness of an edge $(i,j)$ is  proportional to the number of copies of the predicate $\predneq$ applied to variables $i$ and $j$ in the gadget, which is between $1$ and $\parallelcopies$ (see Appendix~\ref{sec:gadget-appendix} for a more explicit description via edge lists).}
  \label{fig:fourcut}
\end{figure}

\paragraph{Comparison to prior inapproximability results.}

{
To the best of our knowledge, the current best hardness factor for $\maxcut{4}$ is $85/86+\eps$~\cite[Theorem 1.2]{austrin2014new}, which we improve with \Cref{thm:max-4-cut} to $\approxratio$. The hardness in~\cite{austrin2014new} was obtained by a reduction from $\maxcut{3}$ to $\maxcut{k}$, for all $k>3$.
}

{As for $\maxcut{3}$,} we compare our result in \Cref{thm:max-3-cut} to three works~\cite{kann1996hardness,guruswami2009improved,austrin2014new}, who obtained successively stronger NP-hardness results for $\maxcut{3}$. Our inapproximability ratio of $55/57+\eps$ in \Cref{thm:max-3-cut} beats the $67/68+\eps$ of~\cite{kann1996hardness}\footnote{This result can be deduced by plugging in the ``state of the art'' NP-hardness of approximation for \maxcut{2} \cite{trevisan2000gadgets} into their argument.} and $32/33+\eps$ of~\cite{guruswami2009improved}.

Our result is not strong enough to beat the state of the art of $16/17+\eps$ by Austrin, O'Donnell, Tan, and Wright \cite{austrin2014new}, who use a custom reduction from Label Cover. In contrast, our result does not require any new PCP machinery, utilizing only Håstad's classic PCP \cite{haastad2001some}. We are unaware of a fundamental barrier that limits the gadget based approach from beating the inapproximability ratio of~\cite{austrin2014new}, and it is conceivable that a gadget based reduction from a different source problem to $\threelin{3}$ can achieve this.

\subsection{Developing a Search Framework for Soundness and Completeness Guarantees}
\label{sec:search}

In order to apply AlphaEvolve to this problem, we developed a template for a gadget-based reduction argument and isolated the properties required from the gadget in this argument (see \Cref{defn:gadget} and~\Cref{thm:reduction-general} for details). We scored a candidate gadget by its final performance, that is, the inapproximability ratio that is implied by applying \Cref{thm:reduction-general} to the candidate gadget.

{
This template requires $k$ separate gadgets $\{\insti{i}{k}{3} : 0 \leq i < k\}$, that correspond to the predicates $\{\predeqv{i}{k}{3}:0\leq i < k\}$ of $\threelin{k}$. An important feature of $\maxcut{k}$ is its ${\mathbb{S}}_k$-symmetry -- it is invariant under permuting the alphabet $\ZZ_k$. Unfortunately this feature is not shared by $\threelin{k}$, and as a result, our reductions require a way to break this symmetry of $\maxcut{k}$. We achieve this by adding $k$ new \emph{global variables} to the standard systematization of gadgets~\cite{trevisan2000gadgets}.

\paragraph{Finding gadgets for $\maxcut{3}$.} AlphaEvolve found a good $\insti{0}{3}{3}$ gadget on $9$ variables (\Cref{subfig:gadget-zero}) quite quickly, mainly because the symmetries of $\predeqv{0}{k}{3}$ work well with $\maxcut{k}$ when $k=3$. In particular, this gadget did not require the aforementioned global variables. Building on ideas from the TSSW framework~\cite{trevisan2000gadgets} we can show that this is the optimal $\insti{0}{3}{3}$ gadget in terms of the final inapproximability ratio.

In contrast, finding good $\insti{1}{3}{3}$ and $\insti{2}{3}{3}$ gadgets required significantly more effort and some new ideas. Unlike $\insti{0}{3}{3}$, we found that increasing the number of auxiliary variables consistently led to better performing gadgets, with the only bottleneck being the runtime of computing the performance of a candidate gadget (we comment more on this point in \Cref{sec:fasterVerification}).

The $\insti{1}{3}{3}$ gadget we use in the proof of \Cref{thm:max-3-cut} (\Cref{subfig:gadget-1}) requires $14$ total variables. We have some limited evidence that our $\insti{1}{3}{3}$ gadget is ``locally'' optimal in the sense that it gives the optimal inapproximability result among all gadgets on $14$ variables with the same witness as $\insti{1}{3}{3}$ for all but one satisfying assignment $\bfx$ of $\predeqv{1}{k}{3}$ (see the completeness case of \Cref{defn:gadget} for what is meant by a ``witness'')\footnote{Such a gadget giving the optimal inapproximability result can be quantified as a mixed integer program, which we solved computationally to determine that our gadget is the optimal one. See \Cref{sec:othercompapproach} for more details.}.

All gadgets found for $\maxcut{3}$ had between 0 and 3 copies of each edge. This is expected, as optimal gadgets found in many previous results (e.g., \cite{trevisan2000gadgets,haastad2017improved}) contain small integer weights. As a result, we get a clean inapproximability ratio of $55/57$.

\paragraph{Finding gadgets for $\maxcut{4}$.} The main difference in our process of finding gadgets for $\maxcut{4}$ was that we had to search over gadgets with more variables (as many as $19$), so we required an extremely well-performing implementation of the verifier (again, we elaborate in \Cref{sec:fasterVerification}). Even with this optimized verifier, evaluations were quite slow, requiring on the order of one second to evaluate a single gadget on $19$ variables. As a result, it took a lot longer for AlphaEvolve to find a search algorithm that, given this optimized verifier, could find \emph{any} nontrivial gadget. The final search algorithm AlphaEvolve produced had the distinctive property of searching over weighted gadgets with real-valued weights, resulting in gadgets with a wide variety of weights between $1$ and $\parallelcopies$ after appropriate scaling and rounding.

}

\subsection{Faster Verification via AlphaEvolve to Explore Larger Gadgets}
\label{sec:fasterVerification}
{
The main challenge with finding large gadgets is that the cost of scoring a gadget scales exponentially in the number of variables; computing the completeness and soundness parameters described in \Cref{defn:gadget} essentially amount to solving an instance of $\maxcut{k}$, which requires exponential time. 
Even for $k=3$, AlphaEvolve slows down significantly when searching for gadgets of size as few as $11$ with a brute force $\maxcut{3}$ verifier.

This problem does not have an off-the-shelf solution in the form of existing fast verifiers; it is unlikely that existing SMT/MIP solvers~\cite{een2003extensible,martins2014open} can be repurposed to solve $\maxcut{k}$. To solve this issue, we used AlphaEvolve itself to speed up a naive brute force implementation of $\maxcut{k}$, scoring a candidate implementation by runtime and correctness. In order to calculate the runtime, we created a synthetic dataset of a wide variety of $\maxcut{k}$ instances, drawn from $20$ random models with varying amounts of planted structure. We then tasked AlphaEvolve to maximize the number of variables $m$ for which the verifier requires at most one second on average to solve instances of size $m$ from our dataset.

The biggest challenge was ensuring that AlphaEvolve does not cheat and find an \emph{incorrect} verifier that is much faster. As mentioned before, we achieved correctness by (1) checking that the verifier is correct on our synthetic dataset, and (2) using a separate judge LLM to certify that a candidate verifier is correct. Each of these techniques was individually too lenient to avoid incorrect verifiers, but we found by human inspection that they were enough in combination.

We note that once $m$ is large enough, it is not possible to label our dataset with the ``ground truth'' scores using a brute-force implementation (which is guaranteed to be correct). Instead, we inductively rely on the correctness of previous verifiers produced by AlphaEvolve (which have already passed the above correctness checks) that are fast enough to provide labels for large $m$. 

We did this separately for the $k=3$ and $k=4$ cases, obtaining verifiers that are optimized to each particular problem. We found that systems-level improvements were most important to the $k=3$ case, with the final verifier offloading the main $O(3^m)$ time computation to a highly optimized tensor contraction operation in~\texttt{numpy}, and only performing slow python-based computation for $O(3^{m/3})$ time. This verifier resulted in a $\speedupthree$ speedup for instances of size $m=14$.

For $k=4$, the final verifier used a more sequential branch-and-bound strategy along with some systems-level improvements. At $m=19$, this provided a $\speedupfour$ speedup against even a \texttt{numba}-accelerated~\cite{lam2015numba} brute force verifier.
}

\subsection{Comparison to other computational techniques}
\label{sec:othercompapproach}

We now survey some other computational techniques to find gadgets and discuss why they appear to be infeasible at the scale of our specific problems, even for the simpler setting of $\maxcut{3}$. 

The most straightforward comparison is the TSSW framework \cite{trevisan2000gadgets}, which casts the task of finding the optimal gadget as a linear program (LP). The main difficulty in doing this is the presence of existential quantifiers in computing the completeness of the gadget. In order to eliminate these quantifiers, \cite{trevisan2000gadgets} canonicalize the auxiliary variables in the gadget. As a result, the size of the LP encoding the optimal gadget is doubly exponential in the number of satisfying assignments of the source predicate; this is $3^{3^6}$ for a reduction from $\threelin{3}$ to $\maxcut{3}$, which is computationally infeasible. Sometimes (as is the case for our $\insti{0}{3}{3}$ gadget for $\maxcut{3}$) it is possible to argue that not all auxiliary variables are required, leading to a more tractable LP, but it is not clear that this is possible outside of very special cases.

If one wants to fix the number of variables in the gadget to a particular constant smaller than $3^{3^6}$ (like $14$ for our gadgets), it is possible to write a mixed integer program (MIP) instead of an LP by encoding the existential constraints using integer variables. For example, solving the MIP took $10$ hours even with all but one existential constraint eliminated\footnote{This experiment was done with Krishnamurthy (Dj) Dvijotham, Google DeepMind.}. {(We used the SCIP solver~\cite{BolusaniEtal2024OO,BolusaniEtal2024ZR}, with a direct encoding of the problem.)} Consequently the MIP approach also seems infeasible with SOTA solvers.
\section{NP-Hardness of Approximating Metric TSP}
\label{sec:tsp}

The Traveling Salesman Problem (TSP)~\cite{williamson2011design} is one of the most studied problems in combinatorial optimization. While there are many variants of this problem~\cite{Karlin2021,Sebo2014,Svensson2020,Traub2019}, we specifically focus on the metric variant of the problem~\cite{Karlin2021}, where the objective is to find a minimum weight Hamiltonian cycle in a weighted, complete, undirected graph with the weights satisfying
the triangle inequality. This in turn corresponds to outputting a permutation over the vertices in the graph that has the minimum weight cycle. In this paper we are concerned with the hardness of approximating metric TSP.  The best approximation algorithm achieves a factor of $1.5 - 10^{-34}$~\cite{gurvits2023trees}. A series of results~\cite{papadimitriou1993traveling, engebretsen1999explicit,bockenhauer2000improved,papadimitriou2006approximability,lampis2014improved,karpinski2015new,chlebik2022weighted} led up to the SOTA NP-hardness of approximation to a factor of $117/116$. 
We use AlphaEvolve to improve this result to $111/110$. 

\begin{thm}
    For any $\varepsilon>0$, it is NP-hard to approximate the metric TSP to  within $111/110-\varepsilon$.
    \label{thm:tsp}
\end{thm}

\mypar{A sparse CSP instance} We follow the framework of \cite{lampis2014improved,karpinski2015new,chlebik2022weighted} and start with an NP-hard CSP we call $\wthreelintwo$, which we reduce to metric TSP to obtain \Cref{thm:tsp}. $\wthreelintwo$ is a particularly structured weighted CSP where every variable appears in exactly~\emph{three} equations, where the equations are either $\threelin{2}$ constraints of the form $x\oplus y\oplus z=b$, or binary linear constraints of the form $x\oplus y = b$, for some $b\in \{0,1\}$. One can show that given a $\wthreelintwo$ instance, it is NP-hard to decide whether (1) most equations can be satisfied, or (2) one cannot satisfy a significant fraction of the equations. (This is the same source problem in~\cite{chlebik2022weighted}, and a formal statement about the hardness of approximation is provided in~\Cref{thm:hybrid_CSP}.) 


A \emph{spanning tour} in a graph is defined as a closed walk that visits each vertex at least once. It is convenient to work with an equivalent formulation of metric TSP~\cite{karpinski2015new} (in terms of inapproximability), which we denote by $\mcst$. The goal there is to find the minimum weight spanning tour in a weighted, undirected graph $G=(V, E, w)$. \cite{chlebik2022weighted} first reduces from an instance of $\threelin{2}$ to an instance  of $\wthreelintwo$ via an expander graph based CSP-to-CSP reduction. Then they reduce from $\wthreelintwo$ to $\mcst$.  Intuitively, the equivalence stems from the triangle inequality, which guarantees that any closed walk visiting vertices multiple times can be ``short-cut'' to form a valid Hamiltonian cycle of equal or lesser cost.

This $\mcst$ formulation is particularly advantageous for hardness reductions. By shifting the focus from finding a simple cycle to finding a minimum-cost connected Eulerian subgraph, one avoids the cumbersome task of explicitly constructing the metric closure. This allows the reduction to define weights locally on the sparse graph—typically assigning small weights to edges and large penalties to non-edges—while ensuring that the structural properties of the original hard instance are preserved.

We use AlphaEvolve to find a gadget (called the equation gadget) that improves the reduction from $\wthreelintwo$ to $\mcst$, while keeping the rest of the arguments intact. This immediately improves the hardness-of-approximation factor from $117/116$ to $111/110$.

\mypar{Reducing the sparse CSP to $\mcst$}  In the gadget reduction from $\wthreelintwo$ to $\mcst$ each variable is replaced by a vertex, and each equation is replaced by a corresponding gadget. An \emph{equation gadget} (more concretely described below) gives a way to encode an assignment of~\TRUE/~\FALSE to the variables in a $\threelin{2}$ equation (i.e., clauses of the form $\predeqv{0}{2}{3}$ or $\predeqv{1}{2}{3}$) into a connectivity property for the corresponding vertices, in the sense that if the equation is satisfied, one can connect the involved vertices to the rest of the graph with low cost; on the other hand, if the equation is unsatisfied, then any spanning tour must dedicate a large total weight of edges in order to connect the vertices involved to the rest of the graph. 

Using AlphaEvolve, we find a new equation gadget (\Cref{fig:tsp_AE_pc1}) achieving better performance than the one appearing in~\cite{chlebik2022weighted} (\Cref{subfig:tsp1_pc0})\footnote{The $\threelin{2}$ equation encoded in the equation gadgets in~\cite{chlebik2022weighted,karpinski2015new,lampis2014improved} all have the form $x\oplus y\oplus z=0$, as opposed to $x\oplus y\oplus z=1$ in Figure~\ref{fig:tsp_AE_pc1}.}. In order to quantify this improvement, we now describe equation gadgets more concretely: each $\threelin{2}$ equation $\predeqv{1}{2}{3}$ (of the form $x\oplus y\oplus z=1$) is assigned an equation gadget, where the green vertices $\{1,2,3\}$ in \Cref{fig:tsp_AE_pc1}, also called the~\emph{contact vertices}, correspond to the variables $\{x,y,z\}$. The red vertex $\{4\}$ is called the~\emph{central vertex}, and is shared across all appearances of the equation gadget across all $\threelin{2}$ equations in the instance. The rest of the vertices in the equation gadget (shown in blue in~\Cref{fig:combined_gadgets}) are used to ensure that there is a gap between the weight of a spanning tour due to a satisfied $\threelin{2}$ equation versus an unsatisfied one. The black edges (referred to as ``unforced edges'') are optional in any tour, while any tour is forced to take each of the red edges (referred to as ``forced edges'') at least once. We note that a standard trick (e.g.,~\cite{karpinski2015new}) can be used to implement the constraint of forcing certain edges to be taken within $\mcst$. The green dashed edges are called~\emph{special edges}, and are purely used for analysis purposes, and do not appear in the actual $\mcst$ instance.

As described earlier, an equation gadget performs well if its contribution to a spanning tour is small /large for satisfied/unsatisfied clauses respectively. In what follows, we will distill this requirement into a self-contained statement about the gadget. Given an equation gadget, we consider disjoint collections of tours within it that cover every vertex. Such a collection is associated with a particular assignment of $(x,y,z)$, where a variable is assigned $\TRUE$ if and only if its corresponding contact vertex appears in the same tour as the central vertex $4$. Furthermore, each tour beyond the one containing the central vertex suffers a weight penalty of one. For an assignment $(x,y,z)$ for a $\threelin{2}$ equation, we will be concerned with the minimum possible total weight of any such collection associated with $(x,y,z)$ (including any weight penalties). We note that the above description is simplified as it does not account for ``dishonest'' collections of tours that interact in unwanted ways with the rest of the reduction. We adopt a slightly different formalism in our formal proofs in order to handle these, deferring the discussion to Appendix~\ref{sec:reductionTSP}. 

\mypar{Our improvement to the equation gadget} 
The improved equation gadget found by AlphaEvolve (\Cref{fig:tsp_AE_pc1}) admits collections having total weight $10$ associated with satisfying assignments for a $\threelin{2}$ equation, and at least $11$ associated with unsatisfying assignments, as opposed to $13$ and $14$ respectively for the gadget in~\cite{chlebik2022weighted}. This immediately improves the performance of the reduction, yielding an improvement in the inapproximability ratio of $\mcst$ (and equivalently metric TSP) from $117/116$ to $111/110$. A description of our new gadget, the full reduction, and a proof of the resulting inapproximability ratio as a function of the equation gadget is provided in Appendix~\ref{app:TSP}.

While we present the equation gadget in~\Cref{fig:tsp_AE_pc1} corresponding to the predicate $x\oplus y\oplus z=1$, we can get the same approximation ratio using a gadget corresponding to $x\oplus y\oplus z=0$ (which is in the same setup of~\cite{chlebik2022weighted}). The gadget is however more complex. We defer its description, and a side-by-side comparison with~\cite{chlebik2022weighted}, to Appendix~\ref{app:pc0_tsp}.



\begin{figure}[ht]
  \centering
    \includegraphics[scale=0.3, trim=0cm 0.2cm 0cm 0.3cm, clip]{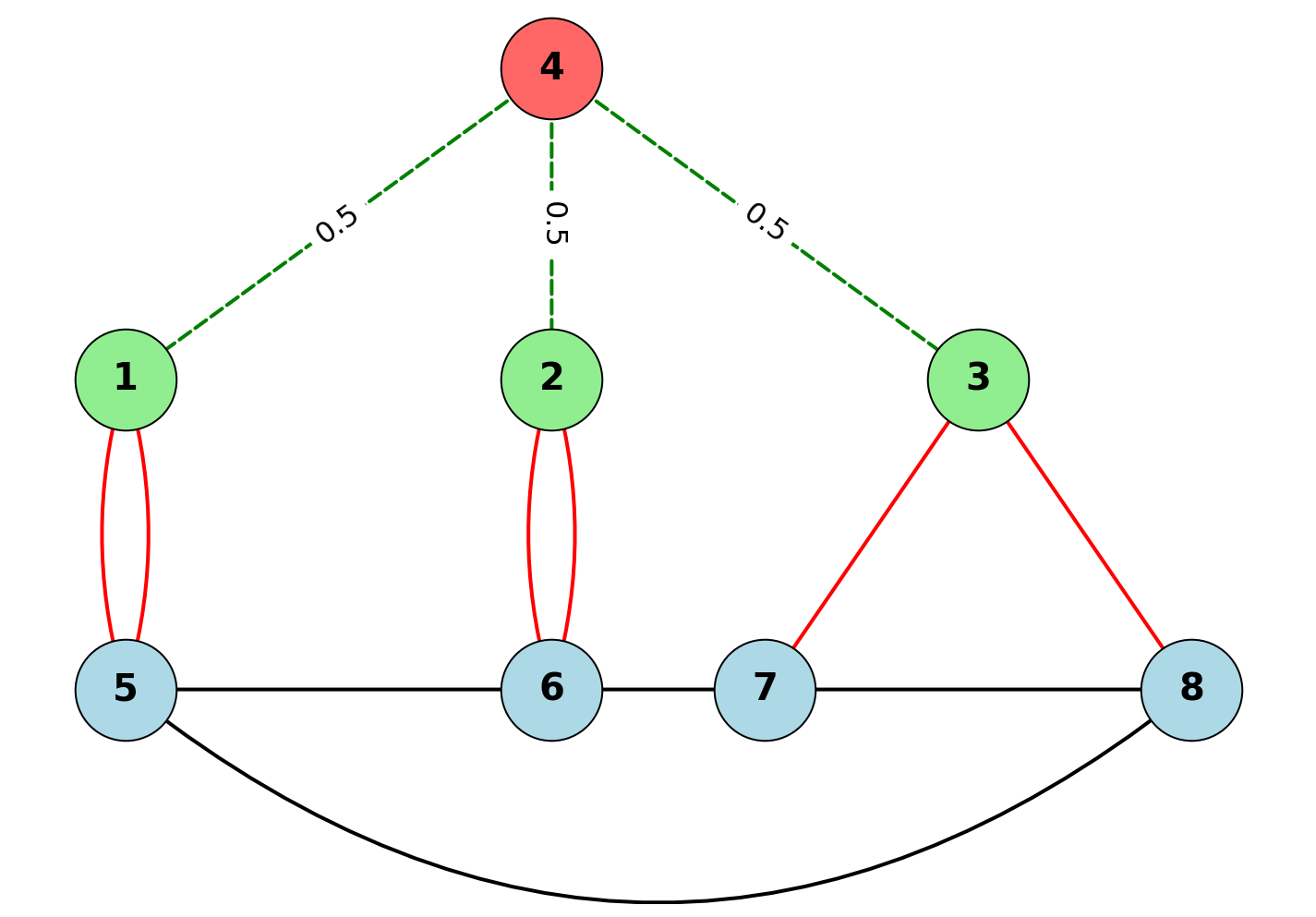}
  \caption{Equation gadget for $\mcst$ found by AlphaEvolve, when reducing from a $\threelin{2}$ equation $\predeqv{1}{2}{3}$ of the form $x\oplus y\oplus z=1$. Vertices $\{1, 2, 3\}$ represent variables in the $\threelin{2}$ equation. The {\color{red} red} edges represent the forced edges. The dashed {\color{Green} green} edges represent the special edges. All {\color{red} red} and black edges have weight one.}
  \label{fig:tsp_AE_pc1}
\end{figure}

\mypar{Core ideas, and the importance of AlphaEvolve} Of the problems discussed in this paper, TSP required the most human involvement, primarily because there was no existing search framework for gadget reductions (analogous to~\cite{trevisan2000gadgets} for $\maxcut{k}$). In particular, the complete reduction in~\cite{chlebik2022weighted} contains a scaffolding beyond the equation gadget, and all components of the reduction are analyzed together as a single object. Within the proof structure of prior work~\cite{chlebik2022weighted,karpinski2015new} it seems unclear how to abstract out a set of verifiable constraints under which AlphaEvolve can be used to search for better equation gadgets alone. In this work, we modularized both the soundness and completeness proofs to depend on well-defined soundness and completeness parameters, defined as optimization problems on the equation gadgets themselves (\Cref{def:coreTSPdef}). This modularization may be of independent interest for future work. We used AlphaEvolve to search for equation gadgets that maximized the final inapproximability ratio obtained from these soundness and completeness parameters.


The optimization problem for the equation gadget search can be cast as a mixed integer program (MIP). Because of its simplicity, it is perceivable that the gadget in~\Cref{fig:tsp_AE_pc1} can be found by directly solving the MIP assuming the number of auxiliary vertices are known in advance. However, because of the large number of constraints involved in the equation gadget corresponding to $x\oplus y\oplus z=0$ (in~\Cref{fig:tsp_AE_pc0}), i.e., around  $11!$, using traditional SMT/MIP solvers would face the same computational bottlenecks described (in~\Cref{sec:othercompapproach}) for the $\maxcut{k}$ problem.

\section{Discussion on AI-assisted Mathematics and Complexity Theory}
\label{sec:discussion}
The area of using AI for Math/CS research is vibrant and changing fast, but we approach it from a particular point of view: we seek to focus on classical and well-studied problems to derive results that stand the test of time. Employing AI currently appears to be the only known way to achieve the results in this paper.
In studying the role of AI in assisting mathematical discovery, we must consider at least these scenarios:
\begin{enumerate}
     \item We invoke a language model to summarize the state of prior art, to chart a research plan towards new theorems, or to directly generate fragments of (or entire) proofs.
     \item We use AI-derived tools such as AlphaEvolve to generate better proof elements (gadgets, graphs).
     \item We use custom code independent of AI to discover better proof elements.
     \item We discover the same or better proof elements by hand.
     \item  A combination of (1)-(4).
\end{enumerate}

\mypar{Literature summaries} The idea that LLMs can usefully generate summaries of prior literature in a field has been mooted about for some time~\cite{wang2023can,wang2023boolean,he2024large,goldberg2024usefulness}, and indeed our experience confirmed this on a number of prompts. The results (modulo occasional hallucination) are a good starting point for deeper exploration and understanding. Interestingly, across many examples we could rapidly generate an overview of the art in an unfamiliar field; we believe this capability will increasingly be used by scientists working across fields so that --- for instance --- an algebraist can come up to speed with Ramsey Theory. We believe that in time, this will lead to more fluent cross-pollination across disciplines.
We have not yet been able to prompt an LLM into providing a usable research plan to obtain new results (such as the ones we report), and this is the focus of deeper efforts e.g., Google's Co-Scientist~\cite{gottweis2025aicoscientistalphaev} program.

\mypar{Direct Prompting} There have been efforts to generate proofs for open mathematical statements via prompting an LLM directly~\cite{vanraamsdonk2025finiteentropysumsquantum, Bubeck2025_GPT5_Proof, orabona2025new,diez2025mathematicalresearchgpt5malliavinstein,jang2025pointconvergencenesterovsaccelerated,bubeck2025early}. This approach has seen mixed success. In some cases an LLM was indeed able to generate the complete and correct proof of a previously unproven statement~\cite{Bubeck2025_GPT5_Proof,bubeck2025early} (although it is possible that a persistent human could perhaps have derived the same), but in many cases~\cite{vanraamsdonk2025finiteentropysumsquantum,orabona2025new,diez2025mathematicalresearchgpt5malliavinstein,jang2025pointconvergencenesterovsaccelerated} the LLM could only generate a proof sketch, which ultimately had to be filled in by humans. At the time of this writing, one significant piece of work in this category is~\cite{bubeck2025early}. This report documents GPT-5's ability to accelerate research in mathematics and theoretical computer science, demonstrating how ``scaffolding'' --- the technique of priming the model with simpler, related warm-up problems --- enabled it to derive improved bounds for online algorithms and formally prove previously open conjectures in graph theory. The authors highlight the model's capacity to go beyond retrieval, successfully constructing novel counterexamples and proof strategies that had previously eluded human experts, such as generating a complex counterexample to the ``Follow-the-Leader'' algorithm in convex body chasing and proposing the ``stability-style analysis'' key to solving Erdős Problem \#848~\cite{bloom_erdos}. In a follow-up to our work,~\cite{woodruff2026accelerating} use Gemini to solve a large class of problems spanning math, physics, and theoretical computer science.

In general, these proofs/sketches currently require a human to verify correctness. Our own~\emph{natural attempts} to prompt a standard LLM into directly generating the kinds of combinatorial structures in this paper met with failure~\footnote{For example, GPT 5.2 and Gemini 3.0 Ultra could not generate the $16/17$-NP-hardness gadget for \maxcuttwo, even though the gadget is explicitly mentioned in~\cite[Figure 4.1]{trevisan2000gadgets} --- the prompt being~\texttt{Can you generate the gadget that gives the construction for $16/17$ approximation of MAX-CUT under NP hardness?}}. While this may become feasible as LLM reasoning capabilities improve~\cite{luong2025advanced}, a formal comparison is beyond the current scope. We emphasize that our constructions \emph{always come with a certificate of correctness}  that is formally verified via standard computational approaches. Once the verification code is sound, there is no further need for human scrutiny.

\mypar{Demonstrating AlphaEvolve's breadth} The recent paper~\cite{georgiev2025mathematical} is a sweeping tour de force of LLM-guided evolutionary search (AlphaEvolve) in autonomously discovering novel mathematical constructions across analysis, combinatorics, and geometry, surprisingly finding a counterexample to the ``Four Guards'' logic puzzle by creatively engaging in ``prompt injection'' against the verifying language model. It notably improves bounds on long-standing problems like the Kakeya needle problem and the Ring Loading Problem, often achieving results competitive with or superior to human-designed benchmarks with minimal problem-specific tuning.
Importantly, in most of this work (including our gadget reductions), AlphaEvolve cannot directly ``evolve the theorem''; rather, its evolutionary process is guided by a synthetic scoring function that guides towards a better theorem.

\mypar{Computational methods in hardness of approximation} In both average case hardness for random graphs, and NP-hardness for approximating $\maxcut{k}$, computer assisted methods~\cite{trevisan2000gadgets,haastad2001some,kunisky2024computational} have been used previously.~\cite{kunisky2024computational} used computational methods to generate $d$-regular Ramanujan graphs (for $d\in\{3,4\}$) with large cut values (or independent sets). While they do not specify their computational approach, we could replicate their results by random sampling of $d$-regular graphs, and testing for the properties by brute force.
For reasons mentioned in~\Cref{sec:avhardness},~\cite{kunisky2024computational} could demonstrate their lower bounds for $n\leq 12$, whereas the graphs we find go up to $n=163$.

For the NP-hardness gadget reduction in~\Cref{sec:maxkcut}, both the soundness and the completeness constraints can be translated into a linear program via skolemization~\cite{trevisan2000gadgets}. However, the size of such a linear program is doubly exponential in the number of vertices in the constraint graph. As a result, {even for $\maxcut{3}$}, running a linear program (LP) with the~\emph{canonical number of variables}~\cite{trevisan2000gadgets} $\approx 3^{3^6}$ becomes infeasible with the standard LP solvers we know of. One can directly attack the non-convex program using SMT/MIP solvers, but with the number of constraints being exponential in the number of variables $n\geq 14$, they too did not seem to scale. (See~\Cref{sec:othercompapproach} for a detailed discussion.) In the case of hardness of approximating the TSP, the soundness and completeness constraints can also be cast as MIP constraints. Even with a modest number of vertices in the equation gadget (e.g., twelve in~\Cref{fig:tsp_AE_pc0}), the number of constraints becomes prohibitively large for standard MIPs to handle (around $11!\approx 3 \times 10^7$).

\mypar{Gadget design by hand} It is conceivable that a human expert or a highly customized computational verifiers could eventually find these solutions. However, their discovery is likely beyond the reach of simple "pencil-and-paper" methods, and standard SMT/MIP solvers failed to produce them. Further, humans intuitively cut the search space through insights into symmetries in the constructed objects; in the case of our TSP gadget, it appears that asymmetry was central to the improvement obtained.

\mypar{AI with significant human effort} Our work and~\cite{tao_erdos_1026} are instances where AI and some significant human effort were deployed in combination. In particular,~\cite{tao_erdos_1026}  chronicles the rapid resolution of Erdős Problem \#1026~\cite{bloom_erdos} through a hybrid workflow where human ingenuity guided multiple AI systems—including AlphaEvolve for generating optimal constructions and the automated theorem prover Aristotle~\cite{achim2025aristotle} for formal verification in Lean. By integrating crowd-sourced mathematical insights with AI-driven deep literature search and computational discovery, the collaboration successfully generated the solution in under 48 hours. In our case, we had to refactor (by hand) the proof logic especially for metric TSP, in order for the problem to be amenable to AlphaEvolve.

In the spirit of Turing's imitation game~\cite{turing1950computing}, one strong test of robustness for AI-assisted mathematical discovery is durability: whether or not new results obtained with AI assistance are superseded by humans (possibly with computer assistance). We note that some results in AI-assisted mathematics have in fact been matched or improved quickly without the use of AI~\cite{gerbicz2025sums,barzilai2025convex}, and in some cases the results existed (even before the problem was posed~\cite{alon2024graph,recht2023cratedigging}) but were not known to the authors when AI methods were applied~\cite{alexeev2025forbidden,bubeck2025early}.

\mypar{Concluding remarks} While our experience here is limited and far from definitive, we believe some early themes are emerging. 


First, language models can generate research plans and summarize the state of the art~\cite{gottweis2025aicoscientistalphaev}. While we have not succeeded in deriving novel results from this, the capability allows non-specialists to quickly learn new domains, which we anticipate will foster greater scientific cross-pollination.

Second, we expect a growing number of proofs of the form ``AI got there first'' without clear evidence of the form ``this couldn't have been done without AI''. In all of these cases (and arguably, across applications of AI to science), we expect verification to be an ongoing bottleneck. We leave it as an open question whether directly prompting an LLM can eventually replicate and surpass our results. 

Third, our work suggests that gadget-based reductions lend themselves to optimization beyond traditional methods (e.g., SMT/MIP solvers), using AlphaEvolve. This in turn suggests that beating AlphaEvolve will generally require non-gadget methods like custom PCPs~\cite{austrin2014new}.

Finally, it is worth dwelling on some failures of our approach. For some problems even if verification is trivial we could not get AlphaEvolve to work.  An example of this is the Hadamard-$668$ conjecture which states that there exists a Hadamard matrix of dimensions $668\times 668$. (More generally, it is conjectured that a Hadamard matrix $H_{n\times n}$ exists for any $n$ which is a multiple of four; 668 is the smallest value for which this is not known.) We attempted to construct one using AlphaEvolve.
Although the search is over a large space of $2^{668^2}$ possibilities, verifying any candidate for correctness only takes $2,23,112$ bitwise multiplications. Even with fast verification, AlphaEvolve was unable to find a construction for $H_{668 \times 668}$. In fact, we failed to get AlphaEvolve to replicate the construction for $H_{428\times 428}$, which was previously the smallest order for which no construction was known until~\cite{kharaghani2005hadamard}; this despite the fact that the construction for 428 is publicly available on the internet.

Looking ahead, it is conceivable that advances in LLM reasoning~\cite{luong2025advanced,gottweis2025aicoscientistalphaev,weil_2025} could be coupled with AlphaEvolve, especially in generating the initial code-snippet and more effective problem specific prompting of the LLMs used by AlphaEvolve. We leave exploration of these problems as future directions. 













\section{Acknowledgments}
We thank Adam Zsolt Wagner for helping us with AlphaEvolve throughout this project, and for his invaluable advice on our experimental setup ---  his insights speeded up our experimentation significantly.
Swarat Chaudhuri not only helped us with our initial work on \maxcut{3}, but also generously advised us throughout on various approaches to verification --- we are deeply grateful to him for this. 
Sushant Sachdeva worked closely with us on \maxcut{3}, TSP, and the design of verifiers and freely shared his valuable intuition on gadget reductions. 
Pasin Manurangsi replicated some of our early gadgets by hand (especially the~\emph{optimal} gadget that reduces from $\predeqv{0}{3}{3}$ to $\predneq$), boosting our confidence early on that AlphaEvolve was progressing in the right direction. Pasin also helped us solidify our understanding of~\cite{karpinski2015new,chlebik2022weighted}. Sidhanth Mohanty made critical contributions in improving the upper bounds on average-case hardness (\Cref{thm:avg-case-ub}). Krishnamurthy (Dj) Dvijotham helped us with the advanced use of MIP solvers, and ablation studies with some of our verifiers. His ablation studies convinced us that direct usage of MIP solvers would not scale to the gadget sizes we are operating with. Shuang Song helped us with numerous engineering challenges. We thank  Mary Chesus, Jonathan Katz, Ravi Kumar, James Manyika, Yossi Matias, Jelani Nelson, Rina Panigrahy, Raluca-Ada Popa, Amit Sahai, Thomas Steinke and Jalaj Upadhyay for their valuable feedback on the manuscript. We thank Uri Feige for pointing out an error in a previous version of Appendix~\ref{sec:avg-appendix-ub}. We  thank Four Flynn and Pushmeet Kohli for their continued support and feedback through the course of this project.

We especially thank Venkat Guruswami and Madhu Sudan for their ongoing encouragement for this work, and for their incisive perspectives on the current status of various results in inapproximability. In particular, Venkat helped us with nuances of the state of the art for $\maxcut{k}$, and pointed us to~\cite{guruswami2009improved,austrin2014new}.

\bibliographystyle{alpha}
\bibliography{reference}
\appendix

\section{AlphaEvolve as a Framework for Combinatorial Discovery}
\label{sec:backgroundAE}

AlphaEvolve~\cite{novikov2025alphaevolve,romera2024mathematical} is an LLM-based code-mutation agent. In the context of this paper, it evolves code snippets that generate combinatorial structures, scoring generated structures by their fitness for the problem at at hand. These evolved code snippets generated novel~\emph{finite combinatorial structures} that improve results in the hardness of approximation. Previously AlphaEvolve has been used in other of scientific domains~\cite{romera2024mathematical,novikov2025alphaevolve}. In the following, we provide a self-contained description of AlphaEvolve. Additionally, we highlight the system enhancements needed to achieve the combinatorial structures in this paper. 

\begin{figure}[htb]
    \centering
    \includegraphics[trim=4cm 6cm 4cm 3cm, scale =0.6]{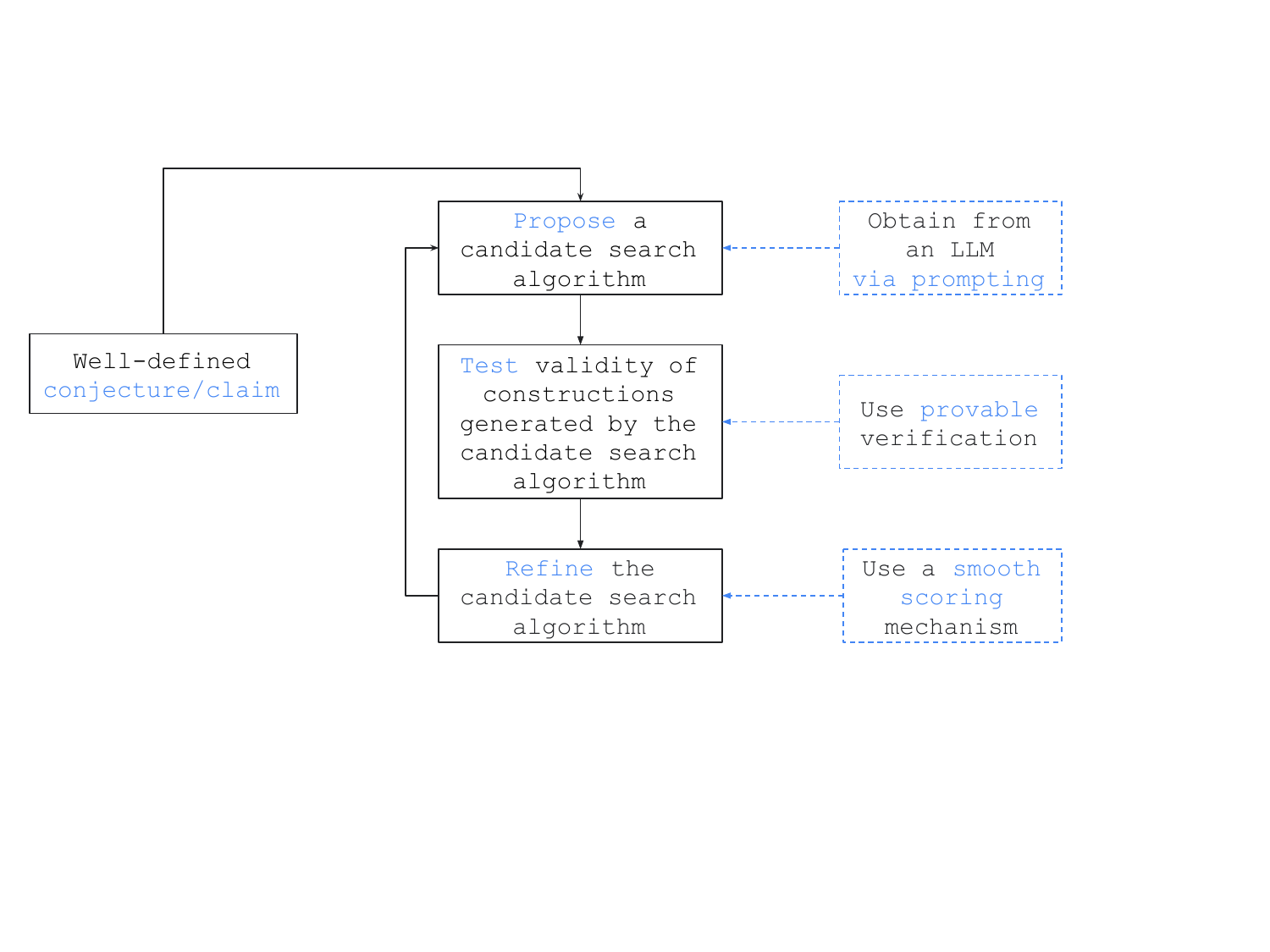}
    \caption{Propose-test-refine (PTR) paradigm: Defining combinatorial search with AlphaEvolve. The solid lines define the control flow, and the dashed lines define comments.}
    \label{fig:PTR}
\end{figure}

\mypar{Propose-test-refine (PTR) paradigm}
We operate in the PTR paradigm~\footnote{The name is motivated from the Propose-test-release (PTR) framework in the differential privacy literature~\cite{dl09}, which performs a similar task of testing a proposed candidate construction for (privacy) properties, before releasing it.} for setting up experiments to discover novel combinatorial structures. While the paradigm is implicit in prior work~\cite{novikov2025alphaevolve}, we first make it explicit, and then instantiate the extensions needed in this paper. The PTR paradigm consists of three components: 
\begin{enumerate}
    \item \emph{Suitably defined conjecture/claim}: The first step towards discovery is a concrete definition of the combinatorial structure being sought. For example, it can be a weighted graph with some fixed set of vertices, or a SAT formula with a fixed number of clauses and variables. In this paper, we restrict these to~\emph{finite structures} rather than a parameterized family~\footnote{Let $\calS$ be a class of combinatorial structures (e.g., graphs, hypergraphs, codes, formulas). A~\emph{parameterized family of structures} from $\calS$ is a sequence $\{S_n\}_{n\in\mathbb{N}}$ s.t. each $S_n\in\calS$ is a finite structure whose size depends on the parameter 
$n$.}. While the source and the target problems form a parameterized family (which enables us to prove theorems with  $\forall n$ quantification), the gadgets used to translate between them are finite structures. (A~\emph{gadget} may be thought of as a combinatorial structure mapping a source constraint to a linear combination of target constraints~\cite{haastad2001some, trevisan2000gadgets}.)

    \item \emph{An initial candidate search algorithm}: AlphaEvolve uses an LLM to make syntactically valid modifications to a code snippet, with the goal of improving a fitness score for the combinatorial structure it generates. Denote the code snippet at time $t$ by $\codesn_t:\calD\to\calS$, where $\calD$ is the domain capturing problem specific parameters and any inputs to the snippet, and $\calS$ is the range space of all possible combinatorial structures. Additionally, let $\score:\calS\to\mathbb{R}$ be the function that assigns a fitness score to a structure. The next function $\codesn_{t+1}$ is chosen as follows:
    \begin{equation}
        \codesn_{t+1} \leftarrow\texttt{AlphaEvolve}\left(\left(\codesn_0(d_0), \score(\codesn_0(d_0))\right),\ldots, \left(\codesn_{t}(d_{t}), \score(\codesn_{t}(d_{t}))\right)\right)
        \label{eq:ae_update}
    \end{equation}
    
    \texttt{AlphaEvolve} makes an LLM call using the tuple of inputs in the prompt to the LLM, prompting it to output the description of $\codesn_{t+1}$, with the goal of increasing the score\footnote{Technically, AlphaEvolve does not feed in all the prior functions as the prompt to the LLM. Instead, it uses a genetic algorithm to select a subset of them while maintaining high scores, and diversity~\cite[Figure 1]{romera2024mathematical}.}. We emphasize the following subtle point: AlphaEvolve does not directly search for combinatorial structures, but instead for~\emph{a code snippet that would output a combinatorial structure with a high score}. \Cref{fig:initcode} shows an initial code-snippet to be evolved by AlphaEvolve.

\begin{figure}
\begin{lstlisting}[style=mypython]
# EVOLVE-BLOCK-START

def gadget_construction() -> nx.graph:
  """Returns a candidate gadget construction"""
  gadget = nx.Graph()
  # Create a complete graph with random weights
  max_scoring_gadget = gadget
  max_score = evaluate_internal(gadget)
  while time.time() - start_time < 1000.0:
    # Make random modifications to gadget, to create
    # a new gadget: new_gadget
    current_score = evaluate_internal(new_gadget)
    if max_score < current_score:
      max_scoring_gadget = new_gadget
      max_score = current_score

  return max_scoring_gadget
 
def evaluate_internal(gadget):
  # Compute the score on the gadget, and return
  return score(gadget)

# EVOLVE-BLOCK-END
\end{lstlisting}
\caption{Initial code-snippet for AlphaEvolve to evolve.}
\label{fig:initcode}
\end{figure}
\item \emph{A verifier with a smooth scoring function}: In the problems we consider, we seek combinatorial structures in spaces of size exponential in the description of the structure. A well-defined verifier that validates (and scores) the constructions generated by the $\codesn_t$'s (in~\Cref{eq:ae_update}) guides this search. 
    
    Concretely, for
    average-case hardness in~\Cref{sec:avhardness}, this corresponds to validating whether the graphs are Ramanujan with certain properties (e.g., having a large cut, or independent set), and for worst-case NP-hardness in~\Cref{sec:maxkcut,sec:tsp}, this corresponds to verifying whether the~\emph{soundness} and~\emph{completeness} constraints (e.g.,~\cite{trevisan2000gadgets,haastad2001some},~\Cref{lem:sound-tsp,lem:completeness}) for the gadget reduction are satisfied.

    As mentioned earlier, every structure generated by AlphaEvolve needs to be scored, including invalid ones that violate certain constraints.  The scoring function  $\score: \calS\to\mathbb{R}$ is used to hill-climb on a optimization landscape over the combinatorial structures. For invalid structures, it decides how far from the constraint boundary the current structure lies. (Informally, a scoring function is~\emph{smooth} if its value gracefully increases/decreases as one moves away from the constraint boundary.) This is where the most creativity and problem-specific domain expertise comes in. In particular, the efficacy of the scoring function largely decides the overall capability of the system to discover novel structures. Also, scoring is usually the~\emph{most compute intensive} step in the PTR paradigm.

    \mypar{Our contributions} Our main contributions to the PTR paradigm are the following: a) using AlphaEvolve itself to accelerate the verification for $\maxcut{k}$ (\emph{sometimes up to $\speedupfour$}), and b) an approach towards using AlphaEvolve for solving optimization problems that involve mixed-integer programming  (MIP) style constraints. For the $k=3$ case, the AlphaEvolve optimized verification code constructed a verifier that offloaded a $O(3^m)$ time computation to a highly optimized tensor contraction operation in~\texttt{numpy}, and only performing slow python-based computation for $O(3^{m/3})$ time. We discussed each of these contributions in detail in~\Cref{sec:search,sec:fasterVerification}.    
\end{enumerate}



\section{Proofs of Average Case Theorems}
\label{app:avg-case}

\subsection{Proof of \Cref{thm:avg-case-lb}}\label{sec:avg-appendix-lb}

Below we specify three Ramanujan graphs $G_4^\MC$, $G_3^\IS$, and $G_4^\IS$ in the sparse6 format~\cite{sparse6}. We also specify a cut $S_4^\MC$ (in the form of a subset of vertices), and independent sets $S_3^\IS$ and $S_4^\IS$ that witness lower bounds on the actual values of $\MC(G_4^\MC)$, $\IS(G_3^{\IS})$, and $\IS(G_4^{\IS})$. The vertices of all graphs are $0$-indexed, as dictated by the sparse6 format. 
We start by defining $G_4^\MC$ and $S_4^\MC$:
\begin{lstlisting}
    >>sparse6<<:~?@{`oQ?bOCHBGIPBHECdOmReom\\A?aECwaAEGKCEiKIGi[NfG?PCwka_AEijomS_qIadPetIJ[DFaOlgqyXKbYAIqolbpi?@@qVJJmx`ryfhGSQDhSfKhonOhOwOYSudgKQ_OoYaPS^e@tGgH]PHhAWMC}ECCaIJS]eIsm@EbQ@@q@OdAgodctRbAwsfQaZNGWnOtQNGryNESzLRxwfJTmA@qasNSeDBcz@QWgQJB}oODDYlUDciRLCTwwVHbAXNeQWJrDfcASuWxOcNR}ELT@Q_rCrVx[eLRYZJc\\]aOh?UH{cOSadKcTMmsX\\YgsdQdiZNstTfBPOVwSHJcIOFspXbcHZWn
\end{lstlisting}
\begin{lstlisting}
     0 2 6 7 13 15 16 17 19 20 21 24 25 27 29 31 33 35 36 37 40 42 47 49 50 54 55 56 58 60 61 62 63 65 67 69 72 73 74 77 78 79 83 84 85 86 89 96 99 101 102 103 104 105 110 111 112 113 114 117 120 121 123
\end{lstlisting}
$G_3^\IS$ and $S_3^\IS$ are defined by the following.
\begin{lstlisting}
    >>sparse6<<:cb?gGMGE_OGo]FBDoggiAGabYCETCESa\\YGHFTGC}aTLObPmcVLqQ|KrKNraaAIXO~\n
\end{lstlisting}
\begin{lstlisting}
    0 15 29 6 4 14 20 28 13 7 12 35 16 30 31 22 32
\end{lstlisting}
Finally, $G_4^{\IS}$ and $S_4^\IS$ are defined by the following.
\begin{lstlisting}
    >>sparse6<<:~?Ab_O?_g@a??@gMa_K_WJ`WNacX?cD?sMbwA`?P_O[cWF@_Q_CF@Kf?Wa`o\\CK[C{]Ck_dgWBS]C_f`OLD?jeG^cS@egdc_ea[x@?u_c{?sVBKXf?}bSDAwhgW@ECQF[B@WZE{LCOk`OWEstFs?B{QgSHE?}gCsHcDB?khkkEG|fKjaGlHSH@XBbWw__o`@S_wkHSRID[@?NEcReXX`OjJkvF`\\JsKAXLIS]DKta_sHHTbGqH{CGPQa?ybW|K{PBgeJcICPCH[``GwFHMbhBI[eHKNBcSEGr_wzJs~J@ne@JkkvK[{J[Ef`Magz`_wJ\\IIXvePWapCgH\\L[VEX?g@AM`tms[BwzIS@BotLs_E`RNtYMPuagbM`vc_eI|gMXwdpGNlbMXtb@LKxs`xVKQGgPPdPallxPINfPE_OUItZLYLQKBNAFaPJNi@dhxP{yNYIQK\\MAUeHOMxyaHOKqMdPFJQ@ip{OIQ`h^NaLfxmN|GLyDRCMKYMP|ZLXpQCEEQSRv\n
\end{lstlisting}
\begin{lstlisting}
     0 1 3 4 7 8 9 10 11 12 13 15 18 24 26 27 28 31 34 40 42 43 44 46 48 53 54 55 56 59 60 61 62 65 75 77 78 82 84 90 91 93 94 98 100 101 103 105 107 109 110 112 114 123 124 125 127 128 129 132 134 135 136 138 141 144 145 146 150 153 154 159 160 162
\end{lstlisting}

The following properties can easily be verified and immediately imply \Cref{thm:avg-case-lb}.
\begin{enumerate}
    \item $G_3^\IS$ is a $3$-regular Ramanujan graph on $36$ vertices. $G_4^\MC$ and $G_4^\IS$ are $4$-regular Ramanujan graphs on $124$ and $163$ vertices respectively.
    \item $S_3^\IS$ and $S_4^\IS$ are independent sets in $G_3^\IS$ and $G_4^\IS$ respectively with $|S_3^\IS| = 17$ and $|S_4^\IS| = 74$. Consequently, $\IS(G_3^\IS)\geq 17/36$, and $\IS(G_4^\IS)\geq 74/163$.
    \item The number of edges in $G_4^\MC$ with exactly one endpoint in $S_4^\MC$ is equal to $226$. Consequently, $\MC(G_4^\MC)\geq 113/124$.
\end{enumerate}

It can be easily verified that all the graphs are Ramanujan proving the claimed lower bounds on $\gamma_4^\MC$, $\gamma_3^\IS$, and $\gamma_4^\IS$. 

\subsection{Proof of \Cref{thm:avg-case-ub}}\label{sec:avg-appendix-ub}

The goal of this section is to prove improved upper bounds on $\sigma_d^{\MC}$ and $\sigma_d^{\IS}$. The same upper bounds apply to $\gamma_d^{\MC}$ and $\gamma_d^{\IS}$, as will be clear in the proof. We will give a detailed argument for the Max-Cut case, and sketch the (minor) changes required for Max-Independent-Set at the end.

As with previous refutation algorithms~\cite{hoffman2003eigenvalues,haemers2021hoffman}, our efficient certificate will be a bound on the \emph{spectral expansion} of a random graph $G\sim \calG(n,d)$, which is defined as $\lambda^*(G)=\max_{i>1}|\lambda_i|$ where $\lambda_1\geq \lambda_2\geq \ldots\geq \lambda_n$ are the eigenvalues of the adjacency matrix $\bfA$ of the graph $G$. Friedman's theorem proves that $\lambda^*$ concentrates around $2\sqrt{d-1}$.

\begin{theorem}[\cite{friedman2008proof}]
    For all $d\in \mathbb{N}$ and $\eps>0$, with probability $1-o_n(1)$ over $G\sim \calG(n,d)$, we have $\lambda^*(G)\leq 2\sqrt{d-1}+\eps$.
\end{theorem}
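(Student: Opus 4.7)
The plan is the classical trace method. Since $G$ is $d$-regular, its top eigenvalue is exactly $d$, so writing $A$ for the adjacency matrix we have $\lambda^*(G)^{2k} \le \operatorname{tr}(A^{2k}) - d^{2k}$ for every positive integer $k$. Choosing $2k = 2k(n)$ growing slowly with $n$ (typically $k = \Theta(\log n)$) and applying Markov's inequality to $\operatorname{tr}(A^{2k}) - d^{2k}$, it suffices to establish an upper bound on $\mathbb{E}[\operatorname{tr}(A^{2k})]$ of the form $d^{2k} + n\cdot(2\sqrt{d-1}+\eps/2)^{2k}(1+o(1))$; Markov against the threshold $(2\sqrt{d-1}+\eps)^{2k}$ then drives the failure probability to $o(1)$.

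To compute this expectation, I would expand $\operatorname{tr}(A^{2k})$ as the number of closed walks of length $2k$ in $G$ and group walks by their combinatorial type --- the isomorphism class of the underlying multigraph together with the sequence of fresh/repeat edge steps. Working in the configuration model (which is contiguous to the uniform $G(n,d)$ for fixed $d$), the expected number of walks of each type factors as a product of an intrinsic combinatorial count and a pairing probability. Walks whose underlying multigraph is a tree --- i.e., each edge is traversed exactly twice --- can be enumerated by a Catalan/reflection calculation and contribute precisely $n\cdot(2\sqrt{d-1})^{2k}(1+o_k(1))$, matching the spectral density of the infinite $d$-regular tree.

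The hard part will be bounding the contribution of ``tangled'' walks, i.e., walks whose underlying multigraph has positive Euler characteristic or contains an edge traversed three or more times. A naive union bound over such types loses factors polynomial in $n$, which overwhelms the main term once $k$ is only logarithmic. Friedman's original argument controls the tangled contribution through the machinery of \emph{selective traces}, a reweighting that cancels dominant tangle contributions, followed by a delicate induction on the excess of the underlying multigraph. A substantially cleaner alternative, due to Bordenave, replaces $A$ by the non-backtracking operator $B$, whose closed walks never immediately repeat an edge; the Ihara--Bass formula then converts a high-moment estimate on $\operatorname{tr}(B^k)$ into the desired bound on $\lambda^*(A)$, and the backtracking-free combinatorics is far more tractable. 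Either way, the crucial estimate is that the aggregate contribution of positive-excess walks is $o\!\bigl(n\cdot(2\sqrt{d-1})^{2k}\bigr)$ for $k=\Theta(\log n)$.

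Finally I would assemble the pieces: plug the moment estimate into Markov's inequality, transfer the resulting bound from the configuration model to the uniform random $d$-regular model via standard contiguity (valid for fixed $d$), and absorb the remaining $(1+o(1))$ slack by inflating $\eps/2$ to $\eps$. The bottleneck throughout is the tangled-walk estimate; everything else is bookkeeping, asymptotic comparison of combinatorial counts, and standard concentration.
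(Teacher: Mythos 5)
This statement is Friedman's theorem (the Alon conjecture); the paper does not prove it but simply quotes it from \cite{friedman2008proof}, so there is no in-paper argument to compare against. Your proposal is a roadmap for the known proofs rather than a proof: the outline (trace method, configuration model, tree-shaped walks giving $n(2\sqrt{d-1})^{2k}$, tangles as the obstruction, selective traces or Bordenave's non-backtracking operator as the cure) is the standard one, but the entire mathematical content of the theorem lives in the step you defer to ``Friedman's machinery,'' so as written nothing is actually established beyond what a citation would give.

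There is also a concrete structural flaw in the plan as literally stated. You reduce the theorem to showing $\mathbb{E}[\operatorname{tr}(A^{2k})]\leq d^{2k}+n(2\sqrt{d-1}+\eps/2)^{2k}(1+o(1))$ and then claim the ``crucial estimate'' is that positive-excess walks contribute $o(n(2\sqrt{d-1})^{2k})$ to this expectation. For the value of $k=C\log n$ needed to make Markov's inequality bite at the sharp threshold, that estimate is false: with probability $n^{-\Omega(1)}$ the graph contains a small subgraph (a tangle) whose presence inflates $\operatorname{tr}(A^{2k})$ by a factor exponential in $k$, and for $C$ large this rare event genuinely dominates the expectation. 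This is precisely why the naive trace method stalls around $2\sqrt{d-1}+O(\log d)$ and why both known proofs do \emph{not} bound the tangled contribution to the plain expected trace --- Friedman replaces the trace by a selective trace in which tangle contributions are subtracted off, and Bordenave restricts to tangle-free non-backtracking walks and separately shows tangles are absent with high probability, transferring back via Ihara--Bass. Your plan needs to be restructured around conditioning on (or excising) tangles rather than absorbing them into a first-moment bound; without that, the Markov step at the heart of your reduction does not go through.
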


\begin{defn}[$d$-ary tree]
    The $d$-ary tree of depth $L$, $T_{d,L}$ is a rooted tree, where the root has $d$ children, and all non-root vertices at distance $<L$ from the root have $d-1$ children.
\end{defn}

Our improvement on Hoffman's classical bounds~\cite{hoffman2003eigenvalues,haemers2021hoffman} consists of concluding stronger bounds on $\MC(G)$ and $\IS(G)$ given that $\lambda^*(G)\leq \lambda$ for $\lambda = 2\sqrt{d-1}+\eps$.
Let us denote by $\Omega_{d,\lambda}$ the set of $d$-regular graphs $G$ such that $\lambda^*(G)\leq \lambda$.
We will require the notion of a \emph{labeled} $d$-ary tree, which we define below.  
\begin{defn}[Labeled $d$-ary tree]
    A $\{\pm 1\}$-labeling of $T_{d,L}$ is a mapping $y$ from the vertices of $T_{d,L}$ to $\{\pm 1\}$.
\end{defn}

Let $\Lambda_{d,L}$ be the collection of distributions over $\{\pm 1\}$-labelings of $T_{d,L}$, where the vertices are labeled by elements of $\{\pm 1\}$. For most of the remainder of this section, we will show how we obtain our upper bounds on $\MC(G)$ given $G\in \Omega_{d,\lambda}$. Our upper bound will be parametrized by a nonnegative integer $L\in \mathbb{N}$. Interestingly, we will exactly recover Hoffman's bound~\cite{hoffman2003eigenvalues} when $L=1$. The idea is essentially to write a \emph{finite} LP relaxation of the problem of finding the supremum of the cut fraction ${|\delta_G(S,\bar{S})|}/{|E(G)|}$ over all $G\in \Omega_{d,\lambda}$, by projecting all the information about $G$ and $S$ onto ``local'' neighborhoods at distance $L$.

Let $G\in \Omega_{d,\lambda}$ be an $n$-vertex graph, and let $S\subseteq V(G)$ be a cut in $G$. 
Let $\alpha\in [0,1/2]$ be a value that approximates $|S|/n$, in the sense that $|\alpha - |S|/n|\leq \delta$. Define the cut vector $\bfx\in \{-1,1\}^n$ by $x_i = (-1)^{\mathbf{1}\{i\in S\}}$. We will associate the pair $(G,S)$ with a distribution $\mu_{G,S}\in \Lambda_{d,L}$, which can be sampled from as follows:
\begin{itemize}
    \item Sample a random ordering of the neighboring edges of each vertex in $G$.
    \item Sample a random vertex $i\sim [n]$.
    \item We will label $T_{d,L}$ by the values of $x$ at all length $\leq L$ nonbacktracking walks from $i$ (that is, walks that don't use the same edge twice in a row). Formally, a vertex $v$ of $T_{d,L}$ can be written as a sequence $(q_1, \ldots, q_k)$ for $0\leq k\leq L$, $1\leq q_1\leq d$, and $1\leq q_{l}\leq d-1$ for $l>1$. We will label $v$ by the $x$ value of the final vertex in the corresponding path starting from $i$ in $G$. In particular, let $i_0 = i$, and for $1\leq l\leq k$ let $i_l$ be the $q_l^{th}$ neighbor of $i_{l-1}$ in the sampled ordering, excluding $i_{l-2}$ if $l>1$. We will label $v$ by $y_v:=x_{i_{k}}$.
    \item Output the resulting labeling $y$ of $T_{d,L}$.
\end{itemize}

We will deduce some linear constraints on $\mu_{G,S}$, in terms of its probability distribution function.

\begin{lem}[Local Consistency Constraints]\label{lem:mc-1}
    Let $r_0$ be the root of $T_{d,L}$, and let $r_1$ be a child of $r_0$. For $i\in \{0,1\}$, let $T_i$ be the subtree of $T_{d,L}$ rooted at $r_i$ containing all vertices at distance at most $L$ from $r_{1-i}$. Then, $T_0$ is isomorphic to $T_1$, and the marginal distribution of $\mu_{G,S}$ on $T_0$ is identical (up to the isomorphism) of that on $T_1$.
\end{lem}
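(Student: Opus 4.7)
First, establish the tree isomorphism. Interpret $T_0$ and $T_1$ as the induced subgraph of $T_{d,L}$ on the common vertex set $V = \{v \in T_{d,L} : d(v, r_1) \leq L\}$ (the constraint $d(v, r_0) \leq L$ appearing in the definition of $T_1$ is vacuous since $T_{d,L}$ has depth $L$), rooted at $r_0$ and $r_1$ respectively. A direct structural analysis shows both rooted trees have identical shape: the root has $d$ children, exactly one distinguished child (namely $r_1$ in $T_0$ and $r_0$ in $T_1$) whose subtree is a complete $(d-1)$-ary tree of depth $L-1$, while each of the remaining $d-1$ children has a complete $(d-1)$-ary subtree of depth $L-2$ (truncated from the depth $L-1$ subtrees of $T_{d,L}$ by the distance-$L$-from-$r_1$ constraint). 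Define $\phi: T_0 \to T_1$ by $\phi(r_0) = r_1$, $\phi(r_1) = r_0$, and extend to the remaining subtrees by any tree isomorphism matching the complete $(d-1)$-ary subtrees of equal depth.

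Second, show that the marginal of $\mu_{G,S}$ on $V$ is invariant under $\phi$. Reparametrize the sampling of $\mu_{G,S}$ as follows: equivalently, sample a uniformly random directed edge $(i, j)$ of $G$ (i.e., $i$ uniform in $[n]$ and $j$ the first neighbor of $i$ in the sampled ordering), together with uniformly random orderings of all other neighbor sets. The crucial symmetry is that the uniform distribution on directed edges of $G$ is invariant under the swap $(i, j) \leftrightarrow (j, i)$, since $G$ is undirected and $d$-regular. Under this swap, labels on $V$ get permuted exactly according to $\phi$: the labels at $r_0, r_1$ swap (from $x_i, x_j$ to $x_j, x_i$), the labels at the $d-1$ non-distinguished children of $r_0$ (which are $x$-values at the other neighbors of $i$) get matched via $\phi$ with the labels at the $d-1$ non-distinguished children of $r_1$ (which are $x$-values at the other neighbors of $j$), and analogous matching extends to deeper levels. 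Because the orderings at vertices other than $i, j$ are sampled independently of the swap, the joint label distribution on $V$ is preserved under $\phi$, yielding the desired marginal equality.

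The main obstacle will be the rigorous bookkeeping in the second step: one must track how the random orderings at intermediate vertices determine the labels at deeper levels, and verify that the swap $(i, j) \leftrightarrow (j, i)$ together with the action of $\phi$ realigns these labels consistently. A clean approach is induction on $L$: the inductive step applies the lemma at depth $L-1$ to the ``distinguished'' pair of subtrees (namely $r_1$'s subtree in $T_0$ and $r_0$'s subtree in $T_1$, both of depth $L-1$), while the $d-1$ non-distinguished pairs of subtrees match up directly by the symmetry of the uniform random orderings at the children of $r_0$ and $r_1$.
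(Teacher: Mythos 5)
Your proposal is correct and follows essentially the same route as the paper: the paper's proof also views the marginal on $T_0$ as the labels along nonbacktracking walks from a uniformly random directed edge $(i,j)$ (length $\leq L$ if the walk starts with $(i,j)$, length $\leq L-1$ otherwise) and concludes by the symmetry $(i,j)\leftrightarrow(j,i)$. Your write-up merely supplies more detail than the paper does on the explicit rooted-tree isomorphism and the ordering bookkeeping.
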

\begin{proof}
    The marginal distribution on $T_0$ can be viewed as the following: sample a random vertex $i$ in $G$ and a random neighbor $j\sim i$. Output the labels $x$ at all endpoints of a nonbacktracking walk from $i$ of length at most $L$ if $(i,j)$ is the first edge in the walk, and length at most $L-1$ otherwise.

    The marginal distribution on $T_1$ is identical, with the roles of $i$ and $j$ switched. Since the distribution of $(i,j)$ is identical to that of $(j,i)$, both marginal distributions are identical.
\end{proof}

\begin{lem}[Average Value of root]\label{lem:mc-2}
    We have $2\alpha-1-2\delta\leq \mathbb{E}_{y\sim \mu_{G,S}}[y_{0}]\leq 2\alpha-1+2\delta$.
\end{lem}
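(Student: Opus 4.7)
The lemma reduces to a one-line marginal calculation followed by the triangle inequality. The plan is to trace through the sampling procedure defining $\mu_{G,S}$ and observe that the root $r_0 = ()$ of $T_{d,L}$ corresponds to the empty walk starting from the uniformly sampled vertex $i \sim [n]$. Hence, before any of the random neighbor-orderings or children choices are used, the root label is simply $y_{r_0} = x_{i_0} = x_i$. This means the marginal distribution of $y_{r_0}$ under $\mu_{G,S}$ is exactly the distribution of $x_i$ with $i$ uniform on $[n]$, and no properties of $G$ beyond its vertex set enter the argument.

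First I would write $\mathbb{E}_{y \sim \mu_{G,S}}[y_{r_0}] = \tfrac{1}{n}\sum_{i \in [n]} x_i$. Since $x_i \in \{-1,+1\}$ with $x_i = -1$ precisely when $i \in S$ and $x_i = +1$ otherwise (per the convention $x_i = (-1)^{\mathbf{1}\{i \in S\}}$), the sum equals $(n - |S|) - |S| = n - 2|S|$, so
\[
\mathbb{E}_{y \sim \mu_{G,S}}[y_{r_0}] \;=\; 1 - \frac{2|S|}{n}.
\]
(Equivalently, up to the global sign/relabeling convention implicit in identifying a cut with its complement in a $\{\pm 1\}$ vector, one can take this quantity to be $\tfrac{2|S|}{n} - 1 = 2\alpha - 1 + O(\delta)$; I would adopt whichever sign convention is used elsewhere in the proof of \Cref{thm:avg-case-ub} and adjust accordingly.)

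Next I would invoke the hypothesis $\bigl|\alpha - |S|/n\bigr| \leq \delta$, which gives $|S|/n \in [\alpha - \delta, \alpha + \delta]$, hence
\[
\frac{2|S|}{n} - 1 \;\in\; \bigl[\,2\alpha - 1 - 2\delta,\; 2\alpha - 1 + 2\delta\,\bigr],
\]
yielding the two-sided bound in the statement. No probabilistic or spectral machinery is needed beyond the definition of $\mu_{G,S}$, so I do not anticipate any genuine obstacle; the only subtlety is bookkeeping around the sign convention in the definition of $x$, which must be kept consistent with how $\alpha$ and the eventual LP encoding use the root marginal in the arguments that follow (in particular \Cref{lem:mc-1} and the linear constraints derived from the neighborhood distribution).
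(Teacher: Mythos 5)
Your proposal is correct and follows essentially the same one-line argument as the paper: the root label is just $x_i$ for a uniformly random vertex $i$, so its expectation is determined by $|S|/n$, and the hypothesis $|\alpha - |S|/n|\leq\delta$ gives the two-sided bound. Your side remark about the sign is a fair catch --- with the stated convention $x_i=(-1)^{\mathbf{1}\{i\in S\}}$ the expectation is $1-2|S|/n$ rather than $2|S|/n-1$ as the paper writes, a harmless sign-convention slip that is resolved exactly as you suggest, by relabeling $S\leftrightarrow\bar S$ or flipping the sign of $x$.
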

\begin{proof}
    We can directly compute $\mathbb{E}_{y\sim \mu_{G,S}}[y_0] = \mathbb{E}_{i\sim [n]}[x_i] = 2|S|/n-1$. The sandwiching inequalities follow from the fact that $|\alpha-|S|/n|\leq \delta$.
\end{proof}

\begin{lem}[Spectral Constraints]\label{lem:mc-3}
    The bound $\lambda^*(G)\leq \lambda$ implies the bounds \[(2\alpha-1-2\delta)^2 \leq \mathbb{E}_{y\sim \mu_{G,S},\ell}[y_{0}\cdot y_\ell]\leq \left(2\alpha-1 + 2\delta\right) + (\lambda/d)^L\]
    if $L$ is even, and 
    \[(2\alpha-1-2\delta)^2 - (\lambda/d)^L \leq \mathbb{E}_{y\sim \mu_{G,S},\ell}[y_{0}\cdot y_\ell]\leq \left(2\alpha-1 + 2\delta\right)^2 + (\lambda/d)^L\]
    if $L$ is odd. Here $0$ denotes the root, and $\ell$ denotes the (random) endpoint of an $L$-step random walk in $T_{d,L}$ starting from the root.
\end{lem}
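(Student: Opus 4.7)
The plan is to reduce the quantity $\mathbb{E}_{y \sim \mu_{G,S}, \ell}[y_0 \cdot y_\ell]$ to a quadratic form in the normalized adjacency matrix $A/d$ of $G$, and then apply the spectral bound $\lambda^*(G) \leq \lambda$.

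First I would argue that an $L$-step simple random walk in $T_{d,L}$ starting from the root is coupled, via the construction of $\mu_{G,S}$, to an $L$-step simple random walk in $G$ starting from a uniformly random vertex $i$. The key point is that $T_{d,L}$ agrees with the universal cover of $G$ down to depth $L$; since an $L$-step walk from the root can reach depth $L$ only at its final step (by a parity and distance argument), the truncation of the tree at depth $L$ — i.e., the fact that leaves have degree $1$ rather than $d$ — never affects the walk's transitions. Projecting the tree walk via the covering map yields a genuine simple random walk in $G$, and by definition of $\mu_{G,S}$ the endpoint $\ell$ in the tree carries the label $x_j$, where $j$ is the endpoint of the corresponding walk in $G$. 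Combined with $y_0 = x_i$, this gives
\[
\mathbb{E}_{y,\ell}[y_0 \cdot y_\ell] \;=\; \frac{1}{n}\, x^\top (A/d)^L\, x,
\]
where $x \in \{\pm 1\}^n$ is the cut indicator with $x_i = (-1)^{\mathbf{1}\{i \in S\}}$.

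Next I would diagonalize $A$ in an orthonormal eigenbasis $v_1, \ldots, v_n$ with $\lambda_1 = d$ and $v_1 = \mathbf{1}/\sqrt{n}$. Writing $x = \sum_k c_k v_k$, the $v_1$-contribution is exactly $c_1^2/n = (1 - 2|S|/n)^2$, which by \Cref{lem:mc-2} lies between $(2\alpha - 1 - 2\delta)^2$ and $(2\alpha - 1 + 2\delta)^2$ (modulo the minor caveat when the interval straddles $0$). The remainder $\frac{1}{n}\sum_{k>1} c_k^2 (\lambda_k/d)^L$ has magnitude at most $(\lambda/d)^L$, since $|\lambda_k/d| \leq \lambda/d$ for $k > 1$ and $\sum_k c_k^2 = \|x\|_2^2 = n$. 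Crucially, when $L$ is even $(\lambda_k/d)^L \geq 0$, so the remainder is non-negative and bounded above by $(\lambda/d)^L$; when $L$ is odd, it can lie anywhere in $[-(\lambda/d)^L,\, (\lambda/d)^L]$. Assembling these pieces gives the claimed inequalities in both parity cases.

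The main obstacle I anticipate is the walk-projection step: carefully verifying that, despite $T_{d,L}$ being a finite tree whose leaves have degree $1$, the simple random walk on $T_{d,L}$ of length exactly $L$ matches the walk on the infinite $d$-regular universal cover and hence, after projection, a simple random walk in $G$. The parity-and-depth observation above is what makes this clean for walks of length exactly $L$; any longer walk would require a genuine argument about the truncation. The remaining eigenvalue expansion is a standard generalization of the proof of Hoffman's bound that exploits higher powers of $A/d$, so once the coupling is in place no new machinery is required.
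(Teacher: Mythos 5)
Your proof is correct and follows essentially the same route as the paper: identify $\mathbb{E}[y_0\cdot y_\ell]$ with $\tfrac{1}{n}x^\top(A/d)^L x$, peel off the $\mathbf{1}$-eigenvector contribution $(2|S|/n-1)^2$, and bound the orthogonal remainder in magnitude by $(\lambda/d)^L$, using its non-negativity when $L$ is even. The only differences are cosmetic --- you diagonalize $A$ fully where the paper writes $A = d\,\mathbf{1}\mathbf{1}^\top/n + \tilde{A}$ --- and your explicit walk-coupling argument for the identity $\mathbb{E}[y_0\cdot y_\ell]=\tfrac{1}{n}x^\top(A/d)^L x$ usefully fleshes out a step the paper asserts without proof.
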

\begin{proof}
    We begin by computing 
    \[\mathbb{E}_{y\sim \mu_{G,S},\ell}[y_{0}\cdot y_\ell] = \frac{\bfx^\top(\bfA/d)^L\bfx}{n} = \frac{\bfx^\top \bfA^L \bfx}{d^L\cdot n}.\]
    Let us write $\bfA = d \frac{\mathbf{1}\mathbf{1}^\top}{n} + \tilde{\bfA}$, where $d=\lambda_1$ is the trivial eigenvalue of $\bfA$ and $\tilde{\bfA}$ is the portion of $\bfA$ orthogonal to the trivial eigenvector $\mathbf{1}$. That is, $\tilde{A}\cdot\mathbf{1} = \mathbf{0}$. Note that $\|\tilde{\bfA}\|_{\sf op}\leq \lambda$. Let us proceed with our computation.
    \begin{align*}
        \mathbb{E}_{y\sim \mu_{G,S},\ell}[y_{0}\cdot y_\ell] &=  \frac{d^L\cdot \bfx^\top\mathbf{1}\mathbf{1}^T \bfx}{d^L\cdot n^2} + \frac{\bfx^\top \tilde{\bfA}^L \bfx}{d^L\cdot n}\\
        &= \frac{(2|S|-n)^2}{n^2} + \frac{\bfx^\top \tilde{\bfA}^L \bfx}{d^L\cdot n}.
    \end{align*}

    The Lemma follows from (1) the fact that $|\alpha - |S|/n|\leq \delta$, and (2) the bounds $|\bfx^\top \tilde{\bfA}^L \bfx|\leq \|\tilde{\bfA}\|_{\sf op}\cdot \|\bfx\|_2^2\leq \lambda n$, along with the bound $\bfx^\top \tilde{A}^L \bfx\geq 0$ if $L$ is even.
\end{proof}

\begin{lem}[Objective Value]\label{lem:mc-4}
    The cut fraction ${|\delta_G(S,\bar{S})|}/{|E(G)|}$ equals $(1-\mathbb{E}_{y\sim \mu_{G,S},i\sim [d]}[y_{0}\cdot y_i])/{2}$, where $0$ denotes the root, and $i$ denotes the $i^{th}$ neighbor of the root of the $d$-regular tree of depth $L$.
\end{lem}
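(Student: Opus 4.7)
}

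The plan is to unroll the definition of $\mu_{G,S}$ at $L=1$ level (which is all that matters for the statement, since the $y_0\cdot y_i$ term only looks at the root and a uniformly chosen child), express the resulting expectation as a sum over oriented edges of the quadratic form $x_u x_v$, and match this against the standard $\{\pm 1\}$-encoding of a cut.

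First, I would recall that with $x\in\{-1,+1\}^n$ defined by $x_v = (-1)^{\mathbf{1}\{v\in S\}}$, an edge $(u,v)$ is crossed by the cut $(S,\bar{S})$ if and only if $x_u x_v = -1$, so that
\[
|\delta_G(S,\bar{S})| \;=\; \sum_{(u,v)\in E(G)} \frac{1-x_u x_v}{2}.
\]
Dividing by $|E(G)|$ gives
\[
\frac{|\delta_G(S,\bar{S})|}{|E(G)|} \;=\; \frac{1}{2}\;-\;\frac{1}{2|E(G)|}\sum_{(u,v)\in E(G)} x_u x_v.
\]

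Next, I would compute the right-hand side of the claim from the sampling description of $\mu_{G,S}$. Sampling $y\sim \mu_{G,S}$ and $i\sim[d]$ amounts to picking a uniform random vertex $i_0\in V(G)$, a uniform random neighbor $i_1$ of $i_0$ (the $i^{\mathrm{th}}$ one in the sampled ordering), and setting $y_0 = x_{i_0}$, $y_i = x_{i_1}$. Hence
\[
\mathbb{E}_{y\sim\mu_{G,S},\,i\sim[d]}[y_0\cdot y_i]
\;=\; \frac{1}{n}\sum_{u\in V(G)}\frac{1}{d}\sum_{v:\,v\sim u} x_u x_v
\;=\; \frac{1}{nd}\sum_{(u,v)\in E(G)} 2\, x_u x_v
\;=\; \frac{1}{|E(G)|}\sum_{(u,v)\in E(G)} x_u x_v,
\]
using $|E(G)| = nd/2$ and the fact that summing over oriented edges counts each undirected edge twice. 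Substituting this into the first display yields
\[
\frac{|\delta_G(S,\bar{S})|}{|E(G)|} \;=\; \frac{1-\mathbb{E}_{y\sim\mu_{G,S},\,i\sim[d]}[y_0\cdot y_i]}{2},
\]
which is exactly the claim.

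There is essentially no main obstacle here: the lemma is a direct bookkeeping identity, and the only point requiring slight care is to track the factor of $2$ arising from orienting each undirected edge both ways when passing from the random-walk expectation (which sums over directed pairs $(u,v)$ with $v\sim u$) to the sum over undirected edges in $E(G)$, together with the identity $d = 2|E(G)|/n$. Once these conventions are aligned, the equality follows from the $\{\pm 1\}$-encoding $(1 - x_u x_v)/2 = \mathbf{1}\{(u,v)\text{ crosses the cut}\}$.
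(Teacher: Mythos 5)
Your proof is correct and is essentially the same as the paper's: the paper performs the identical computation in one line, writing $\mathbb{E}[y_0\cdot y_i]=\tfrac{x^TAx}{nd}=1-2\tfrac{|\delta_G(S,\bar S)|}{|E(G)|}$ and rearranging, which is exactly your oriented-edge sum expressed as a quadratic form with the adjacency matrix. Your version just spells out the factor-of-two bookkeeping between ordered neighbor pairs and undirected edges, which the paper leaves implicit.
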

\begin{proof}
    This is again a direct computation. We can write
    \[\mathbb{E}_{y\sim \mu_{G,S},i\sim [d]}[y_{0}\cdot y_i] = \frac{\bfx^\top \bfA \bfx}{nd} = 1-2\frac{|\delta_G(S,\bar{S})|}{|E(G)|},\]
    and rearranging gives the identity.
\end{proof}

\Cref{lem:mc-1,lem:mc-2,lem:mc-3} are all linear constraints in the pdf of the distribution $\mu=\mu_{G,S}$, and \Cref{lem:mc-4} is a linear objective function. We may solve this LP in the variables $\mu$ for small values of $L$ and $d$ to obtain valid upper bounds on objective function, the cut value $|\delta_G(S,\bar{S})|/|E(G)|$. If we solve this LP for a collection of discrete values $\alpha\in \{0,\delta, 2\delta, \ldots, 1\}$, we obtain an upper bound on the cut value of \emph{any} cut of a graph $G\in \Omega_{d,\lambda}$.

\paragraph{Computational Efficiency.} The above LP appears intractable -- it has $2^{46}$ variables even for $d=3$, $L=4$. We apply two optimizations to make it more tractable in practice.
\begin{enumerate}
    \item Use the same variable for isomorphic labelings of $T_{d,L}$. In particular, if there is a graph isomorphism of $T_{d,L}$ that maps a labeling $y$ to another labeling $y'$, then \Cref{lem:mc-1} implies that they should receive the same probability in $\mu$.
    \item Only consider ``locally optimal'' labelings of $T_{d,L}$. To do this, we apply a preprocessing step, where we ensure that the cut $S$ is locally optimal in the sense that for any vertex $i\in S$, the labeling of $T_{d,L}$ generated by restricting $\bfx\in \{\pm 1\}^n$ to the $L$-step neighborhood of $i$ corresponds to a maximum cut of $T_{d,L}$, subject to the ``boundary conditions'' of labels of the leaves of $T_{d,L}$. Therefore, we can also discard any variables corresponding to such suboptimal labelings of $T_{d,L}$. 
\end{enumerate}

Recall that $\lambda = 2\sqrt{d-1}+\eps$ for arbitrarily small $\eps$. We will now pick $\eps=10^{-5}$. Using the above optimizations, we are able to solve the $1/\delta$ LPs for $d=3,L=4$ (with $4396$ variables), and $d=4, L=2$ (with $88$ variables) for $\delta = 0.0005$, resulting in the universal upper bounds $\MC(G)\leq 0.953$ for $G\in \Omega_{3,\lambda}$ and $\MC(G)\leq 0.916$ for $G\in \Omega_{4,\lambda}$.

An interesting point is that the $d=4, L=3$ case has $11880$ variables and is technically within reach of current LP solvers, but resulted in the \emph{same} bound on $\sigma^{\MC}_4$ as the $d=4, L=2$ case.

\paragraph{Upper bounds on $\IS(G)$.} We conclude with some comments on the modifications needed to obtain upper bounds on $\sigma_d^\IS$. The main modification is that we need to restrict ourselves to distributions in $\Lambda_{d,L}$ that are supported only on $\{\pm 1\}$-labelings that correspond to actual Independent Sets of $T_{d,L}$. That is, we only consider labelings $y$ such that for any edge $\{i,j\}$ in $T_{d,L}$, $y_i$ and $y_j$ are not both equal to $1$. The objective function disappears, and we now only need to find the largest possible value $\alpha$ such that the LP is still feasible. The $1/\delta$ LPs are tractable for $d=3, L=4$ (with $1771$ variables) and $d=4, L=3$ (with $8855$ variables), and they result in the bounds $\sigma_3^\IS\leq 0.476$ and $\sigma_4^\IS\leq 0.457$.
\section{Proofs of \Cref{thm:max-3-cut,thm:max-4-cut}}\label{sec:gadget-appendix}

{
In this section, we give the details of the NP-hardness of approximation proofs for \maxcut{k} for $k\in \{3,4\}$. For ease of exposition, we begin by reintroducing all predicates needed for the proof. All predicates will be over the alphabet $\alphabet$.
\begin{itemize}
    \item $\predneq$ denotes a binary predicate defined by $\predneq(x,y) = \mathbf{1}\{x\neq y\}$.
    \item For $i\in \{0,\ldots, k-1\}$, $\predeqv{i}{k}{3}$ denotes a $3$-ary predicate defined by $\predeqv{i}{k}{3}(x,y,z) = \mathbf{1}\{x+y+z\equiv i\pmod k\}$.
\end{itemize}
Recall that \maxcut{k} is $\csp(\predneq)$, and $\threelin{k}$ is $\csp\left(\predeqv{0}{k}{3},\ldots, \predeqv{(k-1)}{k}{3}\right)$. We now formally define gadgets, specifically in the context of the reduction from $\threelin{k}$ to $\maxcut{k}$. We will need $k$ separate gadgets. In particular, for each defining predicate $\predeqv{i}{k}{3}$ of $\threelin{k}$, we will need an $i$-gadget, which we define below.
\begin{defn} [Gadgets]
\label{defn:gadget}
{
Let $k\geq 2$. 
\begin{itemize}
  \item {\bf $i$-gadget}: For every $i\in \alphabet$, 
  we introduce a gadget $\insti{i}{k}{3}$ that reduces the predicate $\predeqv{i}{k}{3}$ to an instance of $\maxcut{k}$. Formally, $\insti{i}{k}{3}$ is an instance of $\maxcut{k}$ over $3+k+\naux$ variables, where $\naux\in \mathbb{N}$.
  \item {\bf Primary, global, and auxiliary variables:} We partition the set of variables $[3+k+\naux]$ into three \emph{primary variables} $\{1,2,3\}$, $k$ \emph{global variables} $\{4,5,\ldots, k+3\}$, and $\naux$ \emph{auxiliary variables} $\{k+4,\ldots, 3+k+\naux\}$. We will always write an assignment to the variables as a tuple $(\bfx,\bfy,\bfz)\in \alphabet^{3+k+\naux}$ where $\bfx\in \alphabet^3$, $\bfy\in \alphabet^k$, and $\bfz\in \alphabet^{\naux}$.
\end{itemize} }
\end{defn}
    The variables $\bfy$ will be restricted in a way to break the symmetries of $\maxcut{k}$. Concretely, let $Y$ be the set of strings $\bfy$ in $\alphabet^k$ that are lexicographically not larger than any string $\bfy^\sigma=(\sigma(y_1),\sigma(y_2),\ldots, \sigma(y_k))$ that is obtained by permuting the alphabet $\alphabet$ by some permutation $\sigma:\alphabet\to\alphabet$. In particular, note that the vector $(0,1,2,\ldots, k-1)$ is in $Y$. We will only ever assign $\bfy$ to values in $Y$.

\begin{defn}[Soundness and completeness]\label{def:gadget-params}
    We will set $C_i = \left(\predeqv{i}{k}{3}\right)^{-1}(1)$ and $S_i = \left(\predeqv{i}{k}{3}\right)^{-1}(0)$ to be the set of satisfying and unsatisfying assignments for the source predicate $\predeqv{i}{k}{3}$ respectively.

    We will associate four real-valued parameters to the gadget $\insti{i}{k}{3} = \sum_{j\in [m]}\clause_j$ as follows.
    \begin{itemize}
        \item ({\bf Completeness analysis}) $\complete(\insti{i}{k}{3})$ is defined as \[\complete(\insti{i}{k}{3})=\min_{\bfx\in C_i}\max_{\bfz\in \alphabet^{\naux}}\insti{i}{k}{3}(\bfx, \bfy, \bfz)\] for $\bfy=(0,1,\ldots, k-1)$. For $\bfx\in C_i$, we say that $\arg\max_{\bfz\in \alphabet^{\naux}} \insti{i}{k}{3}(\bfx, \bfy, \bfz)$ is the \emph{witness} for $\bfx$. 
        \item ({\bf Soundness analysis}) $\sound_1(\insti{i}{k}{3})$ is defined as \[\sound_1(\insti{i}{k}{3})=\max_{\bfx\in C_i,\bfz\in \alphabet^{\naux}, \bfy\in Y}\insti{i}{k}{3}(\bfx,\bfy,\bfz).\] 
        Also, $\sound_2(\insti{i}{k}{3})$ is defined as
        \[\sound_2(\insti{i}{k}{3})=\max_{\bfx\in S_i,\bfz\in \alphabet^{\naux}, \bfy\in Y}\insti{i}{k}{3}(\bfx,\bfy,\bfz).\] 
        \item ({\bf Number of clauses}) $\countcl(\insti{i}{k}{3})$ is defined to be the number of clauses present in $\insti{i}{k}{3}$.
    \end{itemize}
    \label{def:soundComplete}
\end{defn}
}

We will show that any collection of $i$-gadgets for all $i\in \alphabet$ provides a reduction whose performance depends on the various parameters defined above. After this, in Appendix~\ref{sec:gadget-description}, we give an explicit description of our gadgets and state the parameters achieved by them.

The starting point of our reduction will be the following theorem due to Håstad.

\begin{theorem}[\cite{haastad2001some}]\label{thm:hastad-3lin}
    {
    Let $k\geq 2$. Given an instance $\inst=\sum_{j\in [m]}\clause_j$ of $\threelin{k}$ with $m/k$ clauses from each of the predicates $\predeqv{0}{k}{3},\predeqv{1}{k}{3},\ldots, \predeqv{(k-1)}{k}{3}$, it is NP-hard to distinguish between the following cases for any $\eps>0$:
    \begin{enumerate}
        \item {\bf Completeness case}. $\max_{\bfx}\inst(\bfx)\geq (1-\eps)\cdot m$, or
        \item {\bf Soundness case}. $\max_\bfx \inst(\bfx)\leq (1/k+\eps)\cdot m$.
    \end{enumerate}
    }
\end{theorem}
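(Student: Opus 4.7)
The plan is to follow Håstad's classical three-query PCP construction, extended from $\ZZ_2$ to general $\ZZ_k$. First, I would start from the canonical gap version of bipartite Label Cover with projection constraints obtained by applying Raz's parallel repetition theorem to gap-3SAT: a bipartite graph $G=(U,V,E)$ with label set $[R]$ and projection maps $\pi_e:[R]\to[R']$ for each edge $e$, such that distinguishing fully satisfiable instances from ones with value at most $\delta$ is NP-hard for arbitrarily small constant $\delta>0$.

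Second, I would construct a PCP in which each vertex $v$ is assigned a purported long code $f_v:\ZZ_k^{[R]}\to\ZZ_k$, folded so that $f_v(x+c\cdot\mathbf{1})=f_v(x)+c$ for all $c\in\ZZ_k$ (which forces $\widehat{f_v}(\mathbf{0})=0$). The verifier picks a random edge $(u,v)\in E$, samples $x,y\in\ZZ_k^{[R]}$ uniformly, draws a perturbation $z$ whose coordinates are independently nonzero with probability $\eps$, chooses a shift $s\in\ZZ_k$ uniformly, and accepts iff $f_u(x)+f_v(y)+f_v(-x\circ\pi_e-y-z)=s\pmod{k}$. The three queries thus form a $\threelin{k}$ constraint of type $P_s^=$, and the uniform $s$ gives exactly $1/k$ of each type in expectation.

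Third, I would prove completeness and soundness. For completeness, if each $f_v$ is the honest long code of a label assignment satisfying every Label Cover edge, the test passes unless $z$ flips the single ``active'' coordinate, which happens with probability $O(\eps)$; this yields the $(1-\eps)m$ bound. For soundness, I would expand the acceptance probability in the $\ZZ_k$-Fourier basis (characters $\chi_\alpha(x)=\omega^{\alpha\cdot x}$, $\omega=e^{2\pi i/k}$); orthogonality reduces it to a sum over triples $(\widehat{f_u}(\alpha),\widehat{f_v}(\beta),\widehat{f_v}(\gamma))$ coupled by the projection $\pi_e$, with the noise injecting a decay factor $(1-k\eps)^{|\gamma|}$ in the weight of $\gamma$. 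Any excess above $1/k+\eps$ therefore localizes Fourier mass on some low-weight character, and sampling a coordinate from its support yields a randomized Label Cover labeling satisfying a $\mathrm{poly}(\eps,1/k)$ fraction of edges, contradicting the gap once $\delta$ is taken sufficiently small.

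The main obstacle is the $\ZZ_k$-Fourier soundness analysis: the ``influence-decoding'' step that pulls a consistent label out of a heavy Fourier coefficient must accommodate characters that are not $\pm1$-valued, and must respect the projection $\pi_e$ in the sense that heavy $\gamma$ on $v$-side translates, via $\pi_e$, to a heavy $\alpha$ on $u$-side. This is the technical heart of Håstad's original argument. Once the soundness analysis is in hand, the exact $m/k$ balance (rather than $m/k\pm o(m)$) follows from a standard derandomization of the shift $s$, or equivalently by padding the produced instance with trivially satisfiable constraints of each predicate type to equalize the counts while only perturbing the completeness and soundness values by $o(1)$.
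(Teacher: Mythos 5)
Note first that the paper does not prove this statement at all --- it is imported verbatim from H{\aa}stad \cite{haastad2001some} --- so there is no internal proof to compare against. Your outline is a reconstruction of H{\aa}stad's actual argument (Label Cover via parallel repetition, folded long codes, a three-query test over $\ZZ_k$, Fourier-analytic soundness), and that strategy is the right one. However, two concrete steps would fail as written. First, the test you define --- accept iff $f_u(x)+f_v(y)+f_v(-x\circ\pi_e-y-z)=s$ for an \emph{independent} uniform $s$ --- has completeness about $1/k$, not $1-\eps$: for honest long codes the left-hand side equals $-z_{\ell_v}$, which is $0$ with probability $1-O(\eps)$, so the equation holds only when $s=0$. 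To emit constraints of type $P_s^=$ without killing completeness, the shift must be folded into a query point (e.g.\ query $f_v$ at $s\cdot\mathbf{1}-x\circ\pi_e-y-z$, so the honest sum becomes $s-z_{\ell_v}$); this is exactly what H{\aa}stad's folding over the group accomplishes, and it is also what makes each predicate type appear with probability exactly $1/k$. Second, your fallback for the $m/k$ balance --- padding with trivially satisfiable constraints of each type --- destroys the gap: if only an $m/k$ fraction of clauses are the hard ones and the rest are always satisfiable, the soundness rises from $1/k+\eps$ to roughly $1-1/k+1/k^2$, so the theorem no longer follows. The balance has to come from the (corrected) uniform shift inside the test, not from padding.

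Beyond these, the soundness argument --- which you correctly identify as the technical heart --- is only gestured at. The step from ``acceptance probability exceeds $1/k+\eps$'' to ``a $\mathrm{poly}(\eps,1/k)$ fraction of Label Cover edges can be satisfied'' is precisely H{\aa}stad's main decoding lemma: one must bound the nontrivial Fourier term by an expression of the shape $\sum_{\beta}\widehat{f_u}(\pi_e\beta)\,\lvert\widehat{f_v}(\beta)\rvert^2(1-\eps')^{\lvert\beta\rvert}$, use folding to rule out the empty character, and list-decode labels from the supports of $\beta$ and $\pi_e\beta$ consistently across the projection. Your sketch names the obstacle but does not supply the bound. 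Since the paper itself simply cites this theorem, the honest course is to do the same; a self-contained proof would require repairing the two bugs above and carrying out the decoding lemma in full.
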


\Cref{thm:reduction-general} details how our gadgets imply NP-hardness of approximation for $\maxcut{k}$, with $k\in\{3, 4\}$.

\begin{theorem}\label{thm:reduction-general}
{
Let $k\geq 2$. Let $\insti{i}{k}{3}$ be an $i$-gadget that reduces the predicate $\predeqv{i}{k}{3}$ to an instance of $\maxcut{k}$ for each $i\in \alphabet$. For any $\eps>0$, given an instance $\instj$ of $\maxcut{k}$ with $M$ constraints, it is NP-hard to distinguish between the following two cases for any $\eps>0$:
    \begin{enumerate}
        \item {\bf Completeness case}: $\max_{\bfx}\instj(\bfx)\geq (a-\eps)\cdot M$, or
        \item {\bf Soundness case}: $\max_\bfx \instj(\bfx)\leq (b+\eps)\cdot M$,
    \end{enumerate}
    where
    \[a = \frac{\sum_{i\in \alphabet}\complete(\insti{i}{k}{3})}{\sum_{i\in \alphabet}\countcl(\insti{i}{k}{3})},\quad b=\frac{\sound_1(\insti{r_0}{k}{3}) + \sum_{i\in \alphabet\setminus \{0\}}\sound_2(\insti{r_i}{k}{3})}{\sum_{i\in \alphabet}\countcl(\insti{i}{k}{3})}.\]
    Here $r_0,r_1,\ldots r_{k-1}$ is a permutation of $\{0,1,\ldots, k-1\}$ such that $\sound_1(\insti{r_i}{k}{3}) - \sound_2(\insti{r_i}{k}{3})$ is maximized by $r_0$.
}
\end{theorem}

\begin{proof}[Proof of \Cref{thm:reduction-general}]
    {
        We will perform a reduction starting from the NP-hard problem in $\Cref{thm:hastad-3lin}$ to $\maxcut{k}$. Let $\inst = \sum_{j\in [m]}\clause_j$ be an $n$-variable instance of $\threelin{k}$, and suppose that it has $m/k$ clauses using $\predeqv{i}{k}{3}$ for all $i\in \alphabet$. We will show how to reduce $\inst$ to an instance $\instj$ of $\maxcut{k}$.

    Let $\naux$ be the number of auxiliary variables in all gadgets $\{\insti{i}{k}{3}:i\in\alphabet\}$; we can assume they all have the same number of auxiliary variables by introducing new dummy variables to some of them. $J$ will be defined as an instance on $n+k + \naux\cdot m$ variables, which we will denote as $\bfx\in \alphabet^n$, $\bfy\in\alphabet^k$, and $\bfz\in \alphabet^{\naux\cdot m}$. For $j\in [m]$ we will denote the $j^{th}$ block of size $\naux$ in $z$ as $z^j$. We will refer to $\bfx,\bfy,\bfz$ as the primary, global, and auxiliary variables respectively.

    Let $\clause_j=\predeqv{i}{k}{3}(x_{j_1}, x_{j_2}, x_{j_3})$ be a clause corresponding to the predicate $\predeqv{i}{k}{3}$ for some $i$. We will add a copy of $\insti{i}{k}{3}$ on the variables $(\bfx', \bfy, \bfz^j)$, where $\bfx' = (x_{j_1}, x_{j_2}, x_{j_3})$.
    Note that the number of clauses in $\instj$ is $M=(m/k)\cdot \sum_{i\in \alphabet}\countcl(\insti{i}{k}{3})$.

    Now we will invoke \Cref{thm:hastad-3lin}, which says that it is NP-hard to distinguish between the cases that $\inst$ is almost completely satisfiable, and $\inst$ is essentially $1/k$-satisfiable.

    \begin{enumerate}
        \item {\bf Completeness case}: Assuming there is an $\bfx\in \alphabet^{n}$ such that $\inst(\bfx)\geq (1-\eps)\cdot m$, we show how to construct an assignment $(\bfx,\bfy,\bfz)\in \alphabet^{n+k+\naux\cdot m}$ such that $\instj(\bfx,\bfy,\bfz)\geq a\cdot M$. We will set $\bfy = (0,1,2,\ldots, k-1)$, and for each clause $\clause_j =\predeqv{i}{k}{3}(x_{j_1},x_{j_2},x_{j_3})$, $\bfz^j$ will be set as the witness\\ $\bfz^j=\arg\max_{\bfz\in \alphabet^{\naux}}\insti{i}{k}{3}(\bfx', \bfy, \bfz)$, where $\bfx'=(x_{j_1},x_{j_2},x_{j_3})$.

        It is immediate that $(\bfx,\bfy,\bfz)$ satisfies at least \[\sum_{i\in \alphabet}\complete(\insti{i}{k}{3})\cdot m/k - O(\eps \cdot m) = a\cdot M - O(\eps)\cdot m \geq (a-\eps')\cdot M\] clauses of $J$, where $\eps' = O(\eps)$ can be chosen to be arbitrarily close to $0$.
        
        \item {\bf Soundness case}: We must show that if $\max_\bfx \inst(\bfx)\leq (1/k+\eps)\cdot  m$, then $\max_{\bfx,\bfy,\bfz}\instj(\bfx,\bfy,\bfz)\leq (b+\eps')\cdot M$. Let $(\bfx,\bfy,\bfz)$ be an assignment to the variables of $\instj$.

        The first step we perform is to assume without loss of generality that $\bfy\in Y$. To do this, note that by definition of $Y$, there is some permutation $\sigma:\alphabet\to\alphabet$ such that $\bfy^\sigma = (\sigma(y_1),\ldots, \sigma(y_k))\in Y$. We will apply $\sigma$ to all variables in the full assignment $(\bfx,\bfy,\bfz)$, resulting in no change in the value of $\instj(\bfx,\bfy,\bfz)$ while guaranteeing that $\bfy\in Y$.
        
        For $i\in \alphabet$, let $\alpha_i$ denote the fraction of $\predeqv{i}{k}{3}$ constraints of $I$ that are satisfied by $\bfx$. By assumption, we have $\sum_{i\in \alphabet}\alpha_i\leq 1+k\eps$. Let us define $\hat{\alpha_i} = \alpha_i / (\sum_{i'\in \alphabet}\alpha_{i'})$, so $\sum_{i\in \alphabet}\hat{\alpha_i} = 1$. In other words, $\{\hat{\alpha_i}:i\in \alphabet\}$ defines a probability distribution.

        On the other hand, since $y\in Y$ we can now bound $\instj(\bfx,\bfy,\bfz)$ as follows.
        \begin{align*}
            \instj(\bfx,\bfy,\bfz)&\leq m\cdot\left(\sum_{i\in \alphabet}\alpha_i\cdot \sound_1(\insti{i}{k}{3}) + (1-\alpha_i)\cdot \sound_2(\insti{i}{k}{3})\right)\\
            &\leq m\cdot\left(\sum_{i\in \alphabet}\hat{\alpha_i}\cdot \sound_1(\insti{i}{k}{3}) + (1-\hat{\alpha_i})\cdot \sound_2(\insti{i}{k}{3})\right) + O(\eps)\cdot m\tag{$\sum_{i\in \alphabet} \alpha_i\leq 1+k\eps$}\\
            & = m\cdot\left(\sum_{i\in \alphabet}\hat{\alpha_i}\cdot (\sound_1(\insti{i}{k}{3})  - \sound_2(\insti{i}{k}{3}))+  \sound_2(\insti{i}{k}{3})\right) + O(\eps)\cdot m.
        \end{align*}
        Note that subject to $\hat{\alpha_i}$ being a probability distribution, this is maximized when $\hat{\alpha_i}$ is a point mass on a maximizer of $\sound_1(\insti{i}{k}{3}) - \sound_2(\insti{i}{k}{3})$. Therefore, this is bounded by
        \[m\cdot \left(\sound_1(\insti{r_0}) + \sum_{i\in \alphabet\setminus \{0\}}\sound_2(\insti{r_i}{k}{3})\right) + O(\eps)\cdot m = (b + O(\eps))\cdot M = (b+\eps')\cdot M,\]
        where $\eps'$ can be chosen to be arbitrarily close to $0$. This completes the proof of the soundness case.
    \end{enumerate}
}

\end{proof}

\subsection{Description of our gadgets}\label{sec:gadget-description}

For $k\in \{3,4\}$, we used AlphaEvolve to find gadgets $\{\insti{i}{k}{3}:i\in \alphabet\}$, scoring a particular gadget by the inverse inapproximability ratio $a/b$, where $a$ and $b$ are defined in \Cref{thm:reduction-general}. We now concretely define the specific gadgets found by AlphaEvolve and list the parameters they achieve (\Cref{def:gadget-params}). We derive the final inapproximability results (\Cref{thm:max-3-cut,thm:max-4-cut}) by applying \Cref{thm:reduction-general} on these gadgets.

We have formatted a gadget as a weighted ``edge list'' corresponding to a collection of $\predneq$ clauses. The format is meant to be understood as follows: a tuple $(a,b,w)$ corresponds to $w$ parallel edges from variable $a$ to variable $b$, that is, the gadget contains $w$ copies of the term $\predneq$ applied to variables $a$ and $b$. 

\paragraph{Gadgets for \maxcut{3}.}

We write the list of edges for $\insti{0}{3}{3}$ first, which has $12$ total variables, and hence $6$ auxiliary variables. Note that the global variables $\{4,5,6\}$ are not used in this gadget. See~\Cref{subfig:gadget-zero} for a visual representation of the gadget.
\begin{lstlisting}
[(1, 7, 1), (1, 8, 1), (1, 9, 1), (1, 10, 1), (2, 9, 1), (2, 10, 1), (2, 11, 1), (2, 12, 1), (3, 7, 1), (3, 8, 1), (3, 11, 1), (3, 12, 1), (7, 12, 1), (7, 10, 1), (8, 9, 1), (8, 11, 1), (9, 12, 1), (10, 11, 1)]
\end{lstlisting}

Below (and in~\Cref{subfig:gadget-1}), we write the list of edges for our $\insti{1}{3}{3}$ gadget, which has $6+\naux = 14$ total variables, and hence $8$ auxiliary variables. Note that repeated edges correspond to multiple $\predneq$ clauses on the same pair of variables.

\begin{lstlisting}
[(3, 7, 1), (4, 6, 3), (4, 12, 1), (3, 13, 2), (5, 10, 1), (9, 11, 1), (11, 14, 1), (1, 3, 2), (1, 9, 1), (2, 8, 1), (2, 14, 1), (13, 14, 2), (6, 11, 1), (4, 5, 3), (3, 9, 1), (5, 6, 3), (4, 8, 2), (5, 9, 2), (1, 2, 2), (2, 7, 1), (10, 14, 1), (1, 8, 1), (1, 14, 1), (2, 13, 2), (6, 7, 2), (7, 12, 1), (4, 7, 1), (12, 14, 1), (3, 8, 1), (3, 14, 1), (5, 8, 1), (8, 10, 1), (2, 3, 2), (2, 9, 1), (1, 7, 1), (1, 13, 2), (6, 9, 1)]
\end{lstlisting}

$\insti{2}{3}{3}$ is defined identically to $\insti{1}{3}{3}$, except with the global variables reordered from $(4,5,6)$ to $(4,6,5)$. We can calculate the relevant parameters of these gadgets as below.


\begin{lem}\label{lem:gadget-numbers}
    Let $\insti{0}{3}{3}, \insti{1}{3}{3}$, and $\insti{2}{3}{3}$ be as above. $\insti{0}{3}{3}$ is a $0$-gadget  that maps the predicate $\predeqv{0}{3}{3}$ to an instance of $\maxcut{3}$, with parameters $\complete(\insti{0}{3}{3})=\sound_1(\insti{0}{3}{3})=t(\insti{0}{3}{3}) = 18$, and $\sound_2(\insti{0}{3}{3})=16$. For $i\in \{1,2\}$, $\insti{i}{3}{3}$ is an $i$-gadget with parameters $\complete(\insti{i}{3}{3}) = \sound_1(\insti{i}{3}{3}) = 48$, $\sound_2(\insti{i}{3}{3}) = 46$, and $\countcl(\insti{i}{3}{3}) = 53$.
\end{lem}

Combined with \Cref{thm:reduction-general}, we immediately get \Cref{thm:max-3-cut}.

\paragraph{Gadgets for \maxcut{4}.}

Below we write the weighted edge list for our $\insti{0}{4}{3}$ gadget for $\maxcut{4}$.

\begin{lstlisting}
[(1, 2, 866), (1, 3, 865), (1, 5, 324), (1, 6, 168), (1, 7, 324), (1, 8, 178), (1, 9, 1361), (1, 10, 648), (1, 11, 628), (1, 12, 731), (1, 13, 36), (1, 14, 331), (1, 15, 1038), (1, 16, 473), (1, 17, 73), (1, 18, 442), (1, 19, 1013), (2, 3, 866), (2, 5, 323), (2, 6, 168), (2, 7, 323), (2, 8, 1243), (2, 9, 1361), (2, 10, 724), (2, 11, 261), (2, 12, 731), (2, 13, 36), (2, 14, 331), (2, 15, 65), (2, 16, 218), (2, 17, 601), (2, 18, 463), (2, 19, 995), (3, 5, 324), (3, 6, 168), (3, 7, 322), (3, 8, 1125), (3, 9, 1360), (3, 10, 288), (3, 11, 719), (3, 12, 731), (3, 13, 37), (3, 14, 331), (3, 15, 1037), (3, 16, 509), (3, 17, 603), (3, 18, 149), (3, 19, 66), (4, 5, 1261), (4, 6, 1089), (4, 7, 1259), (4, 9, 173), (4, 10, 19), (4, 11, 11), (4, 12, 167), (4, 13, 1255), (4, 14, 12), (4, 16, 11), (4, 17, 9), (4, 18, 29), (5, 6, 660), (5, 7, 1429), (5, 8, 87), (5, 9, 314), (5, 10, 34), (5, 11, 10), (5, 12, 581), (5, 13, 2), (5, 14, 389), (5, 15, 8), (5, 16, 15), (5, 17, 24), (5, 18, 7), (5, 19, 16), (6, 7, 656), (6, 8, 380), (6, 10, 181), (6, 11, 159), (6, 13, 49), (6, 15, 350), (6, 16, 78), (6, 17, 196), (6, 18, 79), (6, 19, 322), (7, 8, 94), (7, 9, 319), (7, 10, 35), (7, 11, 20), (7, 12, 655), (7, 13, 1), (7, 14, 316), (7, 15, 16), (7, 16, 2), (7, 17, 23), (7, 18, 7), (7, 19, 22), (8, 9, 52), (8, 10, 338), (8, 11, 460), (8, 12, 508), (8, 13, 20), (8, 14, 338), (8, 16, 326), (8, 17, 59), (8, 18, 388), (9, 10, 361), (9, 11, 343), (9, 12, 263), (9, 13, 1265), (9, 14, 60), (9, 15, 25), (9, 16, 288), (9, 17, 199), (9, 18, 207), (9, 19, 25), (10, 11, 3), (10, 12, 344), (10, 13, 226), (10, 14, 97), (10, 15, 404), (10, 16, 3), (10, 17, 3), (10, 18, 103), (10, 19, 82), (11, 12, 296), (11, 13, 360), (11, 14, 160), (11, 15, 60), (11, 16, 109), (11, 17, 2), (11, 19, 436), (12, 13, 898), (12, 15, 517), (12, 16, 120), (12, 17, 264), (12, 18, 238), (12, 19, 448), (13, 14, 344), (13, 15, 34), (13, 16, 262), (13, 17, 300), (13, 18, 178), (13, 19, 6), (14, 15, 190), (14, 16, 218), (14, 17, 134), (14, 18, 66), (14, 19, 242), (15, 16, 97), (15, 17, 328), (15, 18, 352), (16, 17, 2), (16, 19, 252), (17, 19, 357), (18, 19, 73)]
\end{lstlisting}

For $i\geq 1$, our $\insti{i}{4}{3}$ gadget is obtained from our $\insti{(i-1)}{4}{3}$ gadget by reordering the global variables from $(4,5,6,7)$ to $(7,4,5,6)$. The below Lemma can be verified computationally using~\Cref{thm:hastad-3lin}.

\begin{lem}\label{lem:gadget-numbers-four}
    Let $\insti{0}{4}{3}, \insti{1}{4}{3}$, $\insti{2}{4}{3}$, and $\insti{3}{4}{3}$ be as above. For each $i$, $\insti{i}{4}{3}$ is a $i$-gadget  that maps the predicate $\predeqv{i}{4}{3}$ to an instance of $\maxcut{4}$, with parameters
    $\complete(\insti{i}{4}{3})=49535$, $\sound_1(\insti{i}{4}{3})=49538$, $\sound_2(\insti{i}{4}{3}) = 48681$, and $\countcl(\insti{i}{4}{3})=52941$.
\end{lem}

Together with \Cref{thm:reduction-general}, we immediately get \Cref{thm:max-4-cut}.

Finally, for the convenience of the reader, we provide a code snippet that computes the set $Y$ (from~\Cref{defn:gadget}) of allowed assignments to the global variables as a function of $k$ below.
    \begin{lstlisting}[style=mypython]
# Outputs list of allowed assignments to global variables for the 3lin(k) to max-k-cut reduction
def allowed_global_assignments(k):
  # iterate over all k-tuples of elements of Z_k
  ret = []
  for y in itertools.product(range(k), repeat=k):
    # To determine whether y is lexicographically minimal with respect to permuting each entry by the same permutation of Z_k, we check that ith distinct element of y is equal to i-1.
    elements = []
    for element in y:
      if element not in elements:
        elements.append(element)
    if elements == list(range(len(elements))):
      ret.append(y)
  return ret
\end{lstlisting}

\section{Proof of the Hardness of Approximation for TSP}
\label{app:TSP}

In this section we prove~\Cref{thm:tsp}. We will begin by recalling the various predicates we consider in this section, all of which are over the boolean alphabet $\ZZ_2 = \{0,1\}$. Following the convention in the rest of the paper, we define the predicates relevant to this section.
\begin{itemize}
    \item $\predeq$ denotes a binary predicate defined by $\predeq(x,y) = \mathbf{1}\{x=y\}$.
    \item $\predneq$ denotes a binary predicate defined by $\predneq(x,y) = \mathbf{1}\{x\neq y\}$.
    \item $\predeqv{1}{2}{3}$ denotes a $3$-ary predicate defined by $\predeqv{1}{2}{3}(x,y,z) = \mathbf{1}\{x + y + z \equiv 1\pmod 2\}$.
\end{itemize}

The main technical result of this section is~\Cref{thm:tsp2}, which is our generalized hardness statement that is formulated in terms of the soundness and completeness parameters $\sound(H)$ and $\complete(H)$ of an equation gadget $H$ (see \Cref{def:coreTSPdef,def:eqGadget}). Later on in Appendix~\ref{sec:reductionTSP}, we will show how to invoke \Cref{thm:tsp2} with our equation gadget which achieves parameters $\sound(H)=\complete(H)=10$ to obtain \Cref{thm:tsp}.

As we mentioned in \Cref{sec:tsp}, our reductions will be described in terms of $\mcst$, which is an equivalent problem to metric TSP. In fact, instead of working directly with $\mcst$, it will be more convenient to work with a variation where certain edges are ``forced'' to be in the spanning tours we consider. 

\begin{defn}[$\tspforced$]
    An instance of $\tspforced$ is defined by a weighted graph $G = (V, E_u\cup E_f, w)$, where the edge set is split into a set $E_u$ of ``unforced'' edges, and $E_f$ of ``forced'' edges.
\end{defn}

Recall that a spanning tour is a closed walk on a graph that visits every vertex at least once. The objective of $\tspforced$ is: given an instance $G$, to find the minimum weight spanning tour in $G$ that uses each forced edge at least once (and without loss of generality, one that uses each edge at most twice). With this in mind, it will be convenient to make the following definition.

\begin{defn}[Valid spanning tour]
    Let $T$ be a spanning tour in a graph $G =(V, E_u\cup E_f, w)$. We say that $T$ is a \emph{valid} tour on $G$ if it (1) it uses each forced edge in $E_f$ at~\emph{least once}, and (2) it uses each edge in $G$ at most twice.
     \label{valid:sptn}
\end{defn}

A simple reduction shows that approximating $\tspforced$ is equivalent to approximating $\mcst$.

\begin{lem}[e.g. \cite{karpinski2015new}]\label{lem:tsp-to-forced}
    Let $\eps > 0$. There is a polynomial time reduction mapping an instance $G=(V, E_u\cup E_f, w)$ of $\tspforced$ to an instance $G' = (V, E', w')$ of $\mcst$, so the following inequalities hold, where $T$ is the minimum weight valid spanning tour in $G$ and $T'$ is the minimum weight spanning tour in $G'$.
    \[w(T) - \eps\cdot \max_{e\in E_f}w(e)\leq w'(T')\leq w(T).\]
\end{lem}

We will henceforth focus on proving inapproximability for $\tspforced$ by reduction from $\threelin{2}$. As in~\Cref{sec:tsp}, our reduction is defined by a gadget that determines how we map $\predeqv{1}{2}{3}$ clauses to edges in a $\tspforced$ instance, which we refer to as an~\emph{equation gadget}.

\begin{defn}[Equation gadget]
    An equation gadget (e.g.,~\Cref{subfig:tsp_AE_special}) is a weighted multigraph $H = (V, E_u\cup E_f, w)$ on vertex set $V = [3 + 1 + \naux]$ with the following properties:
    \begin{enumerate}
        \item \textbf{Contact, central, and auxiliary vertices}: The $[3+1+\naux]$ of vertices are partitioned into \emph{three} contact vertices $\{1,2,3\}$, one central vertex $4$, and $\naux$ auxiliary vertices $\{5,\ldots, 3+1+\naux\}$.
        \item \textbf{Unforced, forced, and special edges}: $E_u$ and $E_f$ are referred to as the \emph{unforced} and \emph{forced} edges respectively, therefore $H$ forms an instance of $\tspforced$. There is a fixed set $E_s\subset E_u$ of edges referred to as \emph{special} edges, consisting of edges $(\ell, 4)$ with weight $w((\ell, 4)) = 1/2$ for each contact vertex $\ell\in \{1,2,3\}$.
    \end{enumerate}
    \label{def:eqGadget}
\end{defn}
Given an equation gadget $H$ and a valid spanning tour $Q$ in $H$, we define
\begin{itemize}
\item $k_i^H(Q)$ to be the number of special edges that appear exactly $\inst$ times in $Q$. Note for example that $\sum_i k_i^H(Q)=3$ for all valid $Q$.
\end{itemize}

$H$ will be used to map the ``equals 1'' clause $\predeqv{1}{2}{3}(z_1, z_2, z_3) = \mathbf{1}\{z_1 + z_2 + z_3 \equiv 1\pmod 2\}$ of $\threelin{2}$ (recall that the variables $z_\ell$ are over $\{0,1\}$).
As intuition, we will aim to encode an assignment $\bfz\in \{0,1\}^3$ to the variables of a $\predeqv{1}{2}{3}$ clause as a spanning tour $T$ such that the special edge $(i, 4)$ is used exactly $2\cdot z_i$ times. We will require that satisfying assignments to $\predeqv{1}{2}{3}$ admit such spanning tours (which must have $k_1(T)=0$ and $k_2(T)$ odd) with low weight, and that spanning tours not of this form must be more expensive. We formally encapsulate this intuition in the following definition.

\begin{defn}[Soundness and Completeness] Here we provide the completeness and soundness definitions for any equation gadget $H = (V, E_u\cup E_f, w)$.

\begin{itemize}
\item \textbf{Completeness}: Let $\bfz\in \{0,1\}^3$, and let $C = (\predeqv{1}{2}{3})^{-1}(1)\subset \{0,1\}^3$ be the set of satisfying assignments of $\predeqv{1}{2}{3}$. We use $\mathbb{Q}_\bfz$ to refer to the set of valid spanning tours $Q$ in $H$ such that the special edge $(\ell, 4)$ is unused for all $\ell$ such that $\bfz_\ell = 0$, and is used twice otherwise. $\complete(H)$ is defined as
    \[\complete(H) = \max_{z\in C}\min_{Q\in \mathbb{Q}_\bfz} w(Q)\]
    
    \item \textbf{Soundness}: $\sound(H)$ is defined as
    \[\sound(H) = \min_{Q} w(Q) -  k_1^H(Q)/2 - \mathbf{1}\{k_1^H(Q)=0\}\cdot \mathbf{1}\{k_2^H(Q)\text{ is even}\},\]
    where the minimum is taken over all valid spanning tours $Q$.
    \end{itemize}
    \label{def:coreTSPdef}
\end{defn}

Similarly to \cite{chlebik2022weighted}, we will perform a reduction from a particular ``hybrid'' bounded-occurrence CSP over $\{0,1\}$ (described in~\Cref{thm:hybrid_CSP}). Informally, the hybrid CSP is a particularly structured instance that contains a small number of $\predeqv{1}{2}{3}$ clauses along with some simpler two-variable equality and inequality clauses, where $\predeqv{1}{2}{3}(z_1, z_2, z_3) = \mathbf{1}\{z_1 + z_2 + z_3 \equiv 1\pmod 2\}$. Consider the 2-ary predicates $\predeq(x_1, x_2) = \mathbf{1}\{x_1 = x_2\}$, $\predneq(x_1, x_2) = \mathbf{1}\{x_1\neq x_2\}$.  In~\Cref{def:weighted3} we provide the CSP instance we reduce from, and in~\Cref{thm:hybrid_CSP} we state its soundness and completeness guarantees.

\begin{defn}[$\wthreelintwo$~\cite{chlebik2022weighted}]\label{defn:wthreelin}
    Let $k$ be a fixed integer. An instance of $\wthreelintwo$ takes the form $\inst = \insteq + 2\cdot\instneq + 2\cdot\insti{1}{2}{3}$ of $\csp(\predeqv{1}{2}{3}, \predeq, \predneq)$ where:
    \begin{itemize}
        \item \textbf{Variables}: The variables in $\inst$ are labeled as $x_{i,j,b}$ for $i\in [n]$, $j\in [11k]$, and $b\in \{0,1\}$. That is, there are $22kn$ variables. Each variable will participate in exactly two distinct $\predeq$ clauses in $\insteq$. Let $S = \{11\ell : \ell\in [k]\}$ be the set of multiples of $11$ in $[11k]$. Following the notation in \cite{karpinski2015new}, we will classify a variable $x_{i,j,b}$ as a ``contact'' if $j\in S$, and a ``checker'' otherwise. A contact will additionally appear in exactly one $\predneq$ clause in $\instneq$, and a checker will appear in exactly one $\predeqv{1}{2}{3}$ clause in $\insti{1}{2}{3}$.
        \item \textbf{Equality clauses}: $\insteq$ consists of the following: each variable $x_{i,j,b}$ participates in two clauses $\predeq(x_{i,j,b}, x_{i,j+1,b})$ and $\predeq(x_{i,j,b}, x_{i, j-1, b})$, where addition/subtraction to $j$ is done modulo $11k$.
        \item \textbf{Inequality clauses}: $\instneq$ is defined by the following: for each $i\in [n]$, $j\in [11k]\setminus S$, we add the clause $\predneq(x_{i,j,0}, x_{i,\pi(j),1})$, where $\pi$ is a fixed bijection on $[11k]\setminus S$.
        \item \textbf{\threelin{2} clauses:} $\insti{1}{2}{3}$ consists of a collection of $\predeqv{1}{2}{3}$ clauses involving only contact variables, and every contact variable participates in exactly one such clause.
    \end{itemize}
    \label{def:weighted3}
\end{defn}

Note that in the definition above, $\inst$ contains one copy of each clause from $\insteq$, and two copies of each clause from $\instneq$ and $\insti{1}{2}{3}$.

\begin{thm}[Soundness and completeness for weighted-$\threelin{2}$~\cite{chlebik2022weighted}]
    Let $\varepsilon\in (0, 1/4)$, and let $k$ be a large enough integer depending on $\varepsilon$, and let $\inst$ be an instance of the weighted-$\threelin{2}$ instance from~\Cref{def:weighted3}. Then, it is NP-hard to decide between the two following cases:
    \begin{enumerate}
        \item \textbf{Completeness:} There is an assignment $\bfx = (x_{i,j,b})_{i,j,b}$ satisfying all clauses in $\insteq$ and $\instneq$, and all but $\varepsilon nk$ clauses in $\insti{1}{2}{3}$, or
        \item \textbf{Soundness:} For all assignments $\bfx = (x_{i,j,b})_{i,j,b}$, at least $(2/3-\varepsilon)nk$ clauses of $\inst = \insteq + 2\instneq + 2\insti{1}{2}{3}$ are unsatisfied.
    \end{enumerate}
    \label{thm:hybrid_CSP}
\end{thm}

We prove the following result, which can be interpreted as a modularized version of the proofs in \cite{karpinski2015new,chlebik2022weighted}.

\begin{thm}
    Let $H$ be an equation gadget (\Cref{def:eqGadget}) such that $\complete(H)<\infty$. For any $\varepsilon > 0$, it is NP-hard to approximate $\tspforced$ within $\frac{91 + 2\cdot\sound(H)}{90 + 2\cdot\complete(H)}-\varepsilon$.
    \label{thm:tsp2}
\end{thm}
\begin{proof}
    We will provide a reduction from the NP-hard problem described in \Cref{thm:hybrid_CSP} to $\tspforced$. Suppose we are given a $\wthreelintwo$ instance $\inst=\insteq+2\cdot\instneq + 2\cdot\insti{1}{2}{3}$ on variables $x_{i,j,b}$ for $i\in [n], j\in [11k], b\in \{0,1\}$. Our instance $G$ of $\tspforced$ will contain a vertex for each variable, which we will also denote by $x_{i,j,b}$. Additionally we will add a single ``central'' vertex $s$.  We will convert every clause of $\inst$ into a corresponding structure in $G$, which we define below. We also provide a visual representation of this correspondence in~\Cref{subfig:tsp_AE_special} and~\Cref{fig:cycle-gadget}.

    \begin{itemize}
        \item \textbf{Equality clauses}: Each clause in $\insteq$ will be replaced by a single unforced edge in $G$ of weight one. (In~\Cref{fig:cycle-gadget} these edges are denoted in black.)
        \item \textbf{Inequality clauses}: Each clause in $\instneq$ will be replaced by two parallel forced edges in $G$ of weight one. (In~\Cref{fig:cycle-gadget} these edges are denoted in red.)
        \item \textbf{\threelin{2} clauses}: A clause $\predeqv{1}{2}{3}(x_{i_1,j_1,b_1},x_{i_2,j_2,b_2},x_{i_3,j_3,b_3})$ in $\insti{1}{2}{3}$ will be replaced by a copy of the gadget $H$ (shown in~\Cref{subfig:tsp_AE_special}). More precisely, for each such clause we will add $\naux$ \emph{new} vertices, say $a_1,\ldots, a_{\naux}$, and add a single induced copy of the forced and unforced edges from $H$ on the contact vertices $x_{i_1,j_1,b_1},x_{i_2,j_2,b_2},x_{i_3,j_3,b_3}$, auxiliary vertices $a_1,\ldots, a_{\naux}$, and central vertex $s$. Note that the central vertex $s$ is shared among all $\predeqv{1}{2}{3}$ clauses, but the auxiliary vertices are individually created for each $\predeqv{1}{2}{3}$ clause.
    \end{itemize}

    We will define $\edgeeq$, $\edgeneq$, and $\edgei{1}{2}{3}$ as the edges added in the three steps above respectively. This completes the definition of $G$. It remains to argue that the $\tspforced$ value of $G$ has the desired gap in the completeness and soundness cases.

\begin{figure}[htb]
  \centering
  \begin{subfigure}{0.45\textwidth}
    \centering
    \includegraphics[scale=0.3]{images/pc1_special_new.png}
    \caption{The equation gadget discovered by AlphaEvolve. This figure is a reproduction of~\Cref{fig:tsp_AE_pc1}.}\label{subfig:tsp_AE_special}
  \end{subfigure}
  \hfill
  \begin{subfigure}{0.45\textwidth}
    \centering
    \includegraphics[scale=0.3]{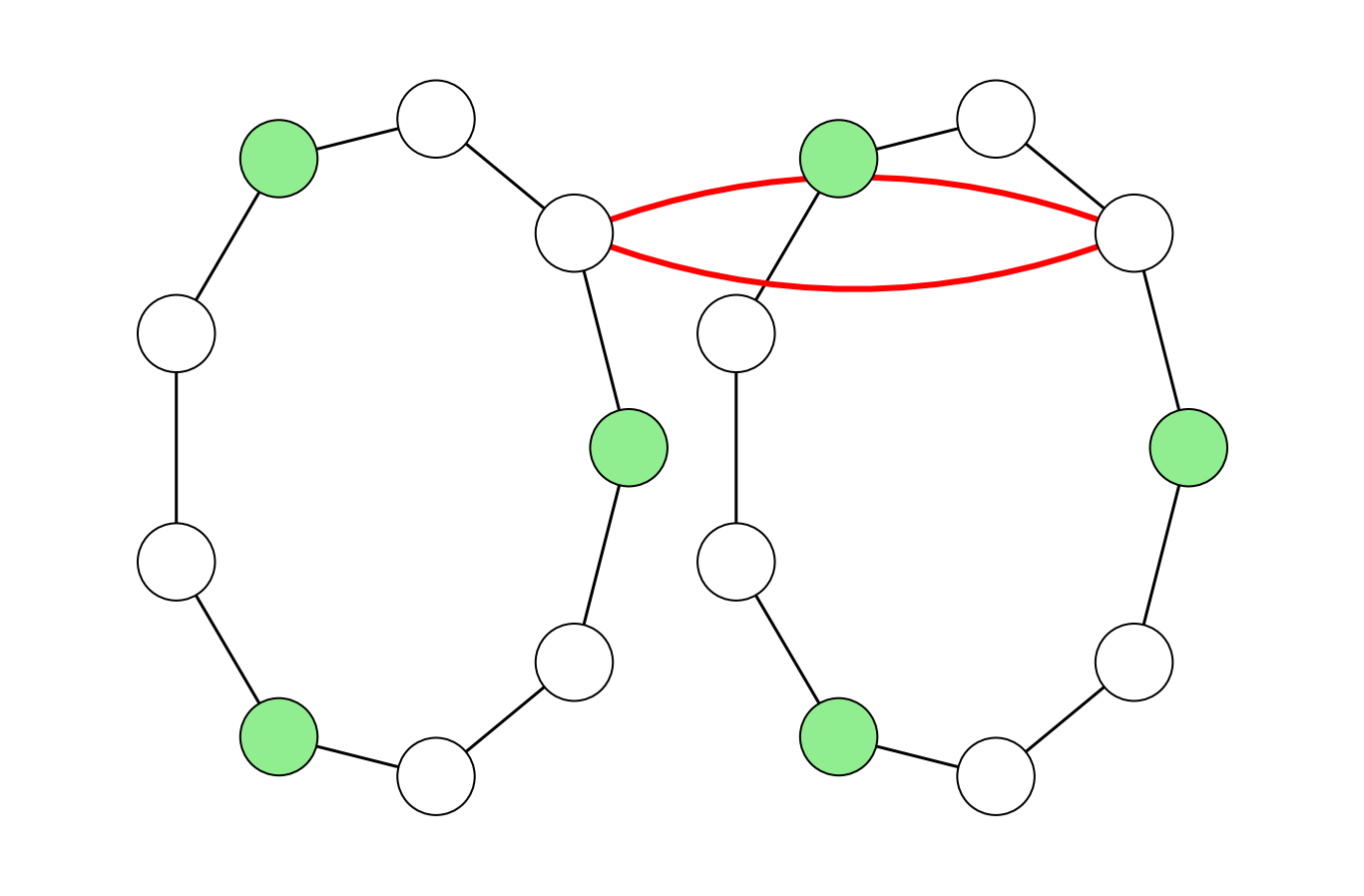}
    \caption{Graph encoding $\insteq$ and $\instneq$ clauses. The green vertices correspond to the contact vertices, and the white ones correspond to the checker vertices. The black edges encode the $\insteq$ clauses, and the {\color{red} red} edges encode the $\instneq$ clauses. All edges have weight one.}
    \label{fig:cycle-gadget}
  \end{subfigure}
  \caption{Equation gadget discovered by AlphaEvolve, and the graph encoding of the various clauses.}
  \label{fig_combined_AE_special}
\end{figure}


    \begin{lem}[Completeness]\label{lem:comp-tsp}
    Suppose there is an assignment $\bfx = (x_{i,j,b})_{i,j,b}$ to the $\wthreelintwo$ instance $\inst$ that satisfies all clauses in $\insteq$ and $\instneq$ clauses, and violates at most $\Delta$ clauses in $\insti{1}{2}{3}$. Then, there is a valid spanning tour in $G$ of total weight at most $nk\cdot (30 + 2\cdot\complete(H)/3) +  2(\Delta + 2n)\cdot w(H)$.
    \label{lem:completeness}
    \end{lem}

    \begin{lem}[Soundness]\label{lem:sound-tsp}
        Suppose there is a valid spanning tour $T$ of the $\tspforced$ instance $G$ with cost at most $nk\cdot (30 + 2\cdot\sound(H)/3) + \Delta$. Then there is an assignment $\bfx=(x_{i,j,b})_{i,j,b}$ that leaves at most $2\Delta$ clauses in $\inst$ unsatisfied.
    \end{lem}

    Before proving \Cref{lem:comp-tsp,lem:sound-tsp}, we will use them to complete the proof of \Cref{thm:tsp2}. \Cref{thm:hybrid_CSP} combined with \Cref{lem:comp-tsp,lem:sound-tsp} implies that it is NP-hard to distinguish between instances $G$ that admit a spanning tour of weight at most $nk\cdot (30 + 2\cdot\complete(H) / 3) + 2(\varepsilon nk + 2n)\cdot w(H)$ from instances where all spanning tours have weight at least $nk\cdot (30 + 2\cdot\sound(H) / 3) + nk\cdot(1/3-\varepsilon/2)$. That is, it is NP-hard to approximate $\tspforced$ within 
    \[\frac{nk\cdot (30 + 2\cdot\sound(H) / 3) + nk\cdot(1/3-\varepsilon/2)}{nk\cdot (30 + 2\cdot\complete(H) / 3) + 2(\varepsilon nk + 2n)\cdot w(H)}.\]
    Setting $\varepsilon$ to be sufficiently small and $k$ to be sufficiently large, this is at least
    \[\frac{91 + 2\cdot\sound(H)}{90 + 2\cdot\complete(H)}-\varepsilon'\]
    for arbitrarily small $\varepsilon'$. It remains to prove \Cref{lem:comp-tsp,lem:sound-tsp}, which we do in the following section.
\end{proof}

\subsection{Proofs of~\Cref{lem:comp-tsp,lem:sound-tsp}}
    \begin{proof}[Proof of \Cref{lem:comp-tsp}]
        We will construct collections of edges $\edgeteq,\edgetneq$, and $\edgeti{1}{2}{3}$ from $\edgeeq, \edgeneq$, $\edgei{1}{2}{3}$, and set $T = \edgeteq\cup \edgetneq\cup \edgeti{1}{2}{3}$. First let us specify how we construct $\edgeteq$ and $\edgetneq$.

        \begin{itemize}
            \item We add a single copy of each (forced) edge in $\edgetneq$. Note that there are two parallel forced edges corresponding to every inequality constraint in $\instneq$; we add a single copy of both.
            \item For a variable set to $1$, $\edgeteq$ contains exactly one copy of both unforced edges in $\edgeeq$ adjacent to the corresponding vertex in $G$.
        \end{itemize}

        Note that because every $\insteq$ and $\instneq$ clause is satisfied, for each $i\in [n]$ there is a single $b^*\in \{0,1\}$ such that $x_{i,j,b} = \mathbf{1}\{b = b^*\}$. So $\edgeteq$ contains exactly half the edges in $\edgeeq$. We can hence calculate $w(\edgetneq) = 20nk$, and $w(\edgeteq) = 11nk$. Next we describe how to construct $\edgeti{1}{2}{3}$.
        
        For each distinct clause of the form $\predeqv{1}{2}{3}(x_{i_1,j_1,b_1},x_{i_2,j_2,b_2}, x_{i_3,j_3,b_3})$, we consider the optimal spanning tour $Q^* = \arg\min_{Q\in \mathbb{Q}_\bfz}w(T)$, where $\bfz = (x_{i_1,j_1,b_1},x_{i_2,j_2,b_2}, x_{i_3,j_3,b_3})$. We will add all \emph{non-special} edges of $Q^*$ on the corresponding contact vertices $\{x_{i_1,j_1,b_1},x_{i_2,j_2,b_2}, x_{i_3,j_3,b_3}\}$, central vertex $s$, and auxiliary vertices $a_1,\ldots, a_{\naux}$ specific to the clause.
        
        As an exception to the above, in the case that either of $j_1$, $j_2$, or $j_3$ is equal to $11k$, we will instead add two copies of \emph{all} non-special edges, rather than just the edges in $Q^*$. The purpose of this step is to ensure connectivity of the tour; it has minimal impact on the cost of the tour, as $k$ should be thought of as very large, and this step affects $o_k(1)$ fraction of the edges. We can bound the weight of edges added by
        \[\complete(H) - (x_{i_1,j_1,b_1}+x_{i_2,j_2,b_2}+x_{i_3,j_3,b_3}) + 2 w(H)\cdot \left(\mathbf{1}\{11k\in \{j_1,j_2,j_3\}\} + \mathbf{1}\{\predeqv{1}{2}{3}(x_{i_1,j_1,b_1},x_{i_2,j_2,b_2}, x_{i_3,j_3,b_3})=0\} \right),\]
        where the second term accounts for the fact that we removed two special edges (each having weight $1/2$) for each variable that was set to $1$.

        Summing over all clauses in $\insti{1}{2}{3}$, we get
        \[w(\edgeti{1}{2}{3})\leq \complete(H)\cdot 2nk/3 - nk + 2w(H)\cdot (2n + \Delta),\]
        where we used that the number of $\insti{1}{2}{3}$ clauses is $2nk/3$, the fact that at most $2n$ clauses satisfy $11k\in \{j_1,j_2,j_3\}$, and at most $\Delta$ clauses are unsatisfied.
        
        To complete the proof of~\Cref{lem:comp-tsp}, we argue that $T$ is indeed a valid spanning tour of $G$, which we do in the following two claims.
        \begin{claim}
            The number of edges of $T$ adjacent to each vertex in $G$ is even. 
        \end{claim}
        \begin{proof}
            The number of edges of $\edgeteq$ and $\edgetneq$ adjacent to each vertex in $G$ is even by construction.

            Every non-contact vertex is also adjacent to an even number of edges in $\edgeti{1}{2}{3}$. Finally, since in every spanning tour in $\mathbb{Q}_\bfz$ for $\bfz\in \{0,1\}^3$, each special edge (which we abstained from adding to $\edgeti{1}{2}{3}$) is used an even number of times, it also holds that every contact vertex is adjacent to an even number of edges in $\edgeti{1}{2}{3}$.
        \end{proof}
        
        \begin{claim}
            $T$ has a single connected component consisting of all vertices in $G$.
        \end{claim}
        \begin{proof}
            We will argue that each vertex is connected to the central vertex $s$.
            \begin{itemize}
                \item The contact vertices $x_{i,j, b}$ are connected to $s$ using only edges in $\edgeti{1}{2}{3}$ if either (1) $x_{i,j,b}=0$, or (2) $j=11k$.
                \item All checker vertices, and contact vertices set to $1$ are connected to either $x_{i,11k, 0}$ or $x_{i, 11k, 1}$ by edges in $\edgeteq$ and $\edgetneq$. Therefore they are also connected to $s$ in $T$ by the previous step.
                \item All auxiliary vertices in $H$ are connected to all contact vertices as well as the central vertex $4$ in any tour in $Q\in \mathcal{T}_\bfz$ for an assignment $\bfz\in \{0,1\}^3$. Consequently, they are also connected to either $4$ or a contact vertex after removing any special edges from the $Q$. As a result, all auxiliary vertices in $G$ are connected to either $s$, or to a contact vertex, which have all been previously shown to be connected to $s$ using edges in $T$.
            \end{itemize}
        \end{proof}

        By construction, each forced edge is used at least once, and no edge is used more than twice. Therefore $T$ is a valid spanning tour of $G$ of weight at most $nk\cdot (30 + 2\cdot\complete(H)/3) + 2(\Delta + 2n)\cdot w(H)$.
    \end{proof}
    
    \begin{proof}[Proof of \Cref{lem:sound-tsp}]
        Let us write $T = \edgeteq\cup \edgetneq\cup \edgeti{1}{2}{3}$, where $\edgeteq, \edgetneq$, and $\edgeti{1}{2}{3}$ are the edges in $T$ belonging to $\edgeeq$, $\edgeneq$, and $\edgei{1}{2}{3}$ respectively. We define $\edgeueq$ to be $\edgeteq$ with all edges appearing twice removed.
    
        We will assign $x_{i,j,b}$ based on the ``local'' view of the variable $x_{i,j,b}$ as a vertex in $\edgeueq$. In particular, let $l_{i,j,b}$ and $r_{i,j,b}$ denote the number of copies of the edges $(x_{i,j-1, b}, x_{i,j,b})$ and $(x_{i,j,b}, x_{i,j+1,b})$ in $\edgeteq$ respectively. We say that the variable $x_{i,j,b}$ is ``honest'' if $l_{i,j,b} = r_{i,j,b}$, and ``dishonest'' otherwise. We will always assign an honest variable $x_{i,j,b} = l_{i,j,b}$. Note that this is equivalent to the assignment $x_{i,j,b} = r_{i,j,b}$.
        
        Next, we describe how to assign the dishonest variables. We will do this by specifying how to assign the dishonest variables appearing in each inequality and $\threelin{2}$ clause.
        \begin{itemize}
            \item For each inequality clause $\predneq(x_{i,j,0}, x_{i,M(j), 1})$ in $\instneq$, note that the variables involved are either both honest, or both dishonest. In the case that both are dishonest, we assign each variable a random value in $\{0,1\}$, conditioned on $x_{i,j,0}\neq x_{i,M(j),1}$.
            \item For each $\threelin{2}$ clause $\predeqv{1}{2}{3}(x_{i_1,j_1,b_1}, x_{i_2,j_2,b_2},x_{i_3,j_3,b_3})$ in $\insti{1}{2}{3}$ containing at least one dishonest variable, we arbitrarily set the dishonest variables involved, while ensuring that the clause is satisfied. Note that this is always possible as long as there is at least one such dishonest variable.
        \end{itemize}
        In order to analyze the number of unsatisfied clauses, we will need to define a few quantities.
        \begin{itemize}
            \item $d_{ch}$ and $d_{co}$ denote the number of dishonest checker and contact vertices respectively.
            \item $p_{ch}$ denotes the number of inequality constraints in $\instneq$ such that both involved checkers are honest and assigned the same value.
            \item $r$ is the number of unsatisfied clauses in $\insti{1}{2}{3}$ where all involved variables are honest.
        \end{itemize}

        \begin{claim}\label{claim:tsp-soundness-1}
            The expected number of unsatisfied clauses in $\inst$ is at most $d_{ch} + d_{co} + 2p_{ch} + 2u_{ch} + 2r$.
        \end{claim}
        \begin{proof}
            Let us list the collection of unsatisfied clauses. Using~\Cref{def:weighted3}, we have the following:
            \begin{itemize}
                \item \textbf{Equality clauses:} By definition, the equality clause between two honest vertices is always satisfied. Every equality clause adjacent to a dishonest checker vertex is satisfied with probability exactly $1/2$ due to the random assignment of dishonest checkers.
                
                Out of the remaining edges, we claim that there is at most one unsatisfied edge adjacent to each dishonest contact vertex $x_{i,j,b}$. This is because if both $x_{i,j-1,b}$ and $x_{i, j+1, b}$ are honest, they must be assigned different values, as $r_{i,j-1,b} = l_{i,j,b}$ has parity opposite of $l_{i,j+1,b} = r_{i,j,b}$, since $x_{i,j,b}$ is dishonest.
                
                So, the expected number of violated equality clauses is at most $d_{ch} + d_{co}$.
                \item \textbf{Inequality clauses:} An inequality clause in $\instneq$ is unsatisfied when both checkers it involves are honest and assigned the same value. So, the number of violated inequality clauses is equal to $p_{ch}$.
                \item \textbf{\threelin{2} clauses:} By definition, all clauses in $\insti{1}{2}{3}$ involving dishonest variables are satisfied. So the number of unsatisfied clauses in $\insti{1}{2}{3}$ is equal to $r$.
            \end{itemize}

            Since the instance $\inst$ is defined as $\insteq + 2\instneq + 2\insti{1}{2}{3}$, the above bounds imply that the number of unsatisfied clauses in $\inst$ is at most $d_{ch} + d_{co} + 2p_{ch} + 2r$.
        \end{proof}        

        \begin{claim}\label{claim:tsp-soundness-2}
            $w(T) \geq nk\cdot (30 + 2\cdot\sound(H)/3) + (d_{ch} + d_{co})/2 + p_{ch} + r$. 
        \end{claim}
        \begin{proof}
            We will separately bound the weight of edges in $\edgeteq$, $\edgetneq$, and $\edgeti{1}{2}{3}$. Let us denote by $t_{co}$ the number of honest contact variables assigned $1$. For $b\in \{0,1\}$ let $p_{ch}^b$ denote the number of inequality clauses such that both involved checkers are assigned $b$. Note that $p_{ch}^0 + p_{ch}^1 = p_{ch}$. Finally, let $u_{co}$ be the number of connected components in $\edgeueq\cup \edgetneq\cup \edgeti{1}{2}{3}$ containing only edges from $\edgei{1}{2}{3}$.
            \begin{itemize}
                \item \textbf{Bounding $\edgeueq$:} By double-counting, we can write
                \[
                    w(\edgeueq) = \frac{1}{2}\sum_{i,j,b}\left(l_{i,j,b}+r_{i,j,b}\right)
                    = \frac{1}{2}\left(\sum_{i,j:j\mid 11,b}\left(l_{i,j,b}+r_{i,j,b}\right) + \sum_{i,j:j\nmid 11}v_{i,j}\right),
                \]
                where $v_{i,j}=\left(l_{i,j,0}+r_{i,j,0}+l_{i,M(j),1}+r_{i,M(j),1}\right)$. We will begin by bounding the first sum, that is, the sum over all contact variables. Observe that a dishonest contact variable must have $l_{i,j,b}+r_{i,j,b}\geq 1$, since $l_{i,j,b} \not\equiv r_{i,j,b}\pmod 2$. Similarly, a honest variable assigned $1$ must have $l_{i,j,b}+r_{i,j,b}\geq 2$. So the first sum is at least $d_{co}+2t_{co}$.

                Next, we will bound the second sum. We have that the sum of degrees of $x_{i,j,0}$ and $x_{i,M(j),1}$ in $\edgeueq$ is even; equivalently, $v_{i,j}$ must be even. So, $v_{i,j}\geq 2$ unless both involved checkers are assigned $0$. Furthermore, we have $v_{i,j}\geq 4$ if both involved checkers are honest and assigned $1$.
                
                Therefore 
                \[\sum_{i\in [n],j\in [11k]:j\nmid 11}v_{i,j}\geq 2\cdot 10nk + 2\cdot p_{ch}^1 - 2\cdot p_{ch}^0.\]
                 
                Together, these bounds imply  $w(\edgeueq)\geq 10nk + d_{co}/2 + t_{co} + p_{ch}^1 - p_{ch}^0$.
                \item \textbf{Bounding $\edgeteq\setminus \edgeueq$:} Since $T=\edgeteq\cup \edgetneq\cup \edgeti{1}{2}{3}$ is a spanning tour of $G$, we have that the total weight of edges in $\edgeteq\setminus \edgeueq$ is at least twice the number of connected components of $\edgeueq\cup \edgetneq\cup \edgeti{1}{2}{3}$. So, $w(\edgeteq\setminus \edgeueq)\geq 2p_{ch}^0 + 2u_{co}$.
                \item \textbf{Inequality clauses:} Define $\kappa_{i,j}$ to be the total number of copies of both forced edges between $x_{i,j,0}$ and $x_{i,M(j), 1}$. We can write
                \[w(\edgetneq) = \sum_{i\in [n],j\in [11k]:j\nmid 11}\kappa_{i,j}.\]
                Since $T$ was a valid tour, we have $\kappa_{i,j}\geq 2$ for all such $i,j$. Furthermore, if either of the involved vertices is dishonest, the fact that the degree of that vertex in $T$ is even implies that $\kappa_{i,j}$ is odd. As a consequence, we have $\kappa_{i,j}\geq 3$ for at least $d_{ch}/2$ such $(i,j)$. This implies $w(\edgetneq) \geq 2\cdot 10nk + d_{ch}/2$.
                
                \item \textbf{\threelin{2} clauses:} Let $\clause = \predeqv{1}{2}{3}(x_{i_1,j_1,b_1}, x_{i_2,j_2,b_2},x_{i_3,j_3,b_3})$ be a clause in $\insti{1}{2}{3}$. Let $T_\clause$ be the set of edges in $\edgeti{1}{2}{3}$ involved in clause $\clause$. We will extract a valid spanning tour $Q$ of $H$ from $T_\clause$, and then use the soundness property of $H$ to argue that $w(T_\clause)$ is large.

                Formally, we will map the edges in $T_\clause$ to a multiset of edges $Q$ of the gadget. Note that $Q$ is not necessarily a spanning tour of $H$ yet. To resolve this, we add $l_{i_{\ell},j_{\ell},b_{\ell}}+r_{i_{\ell},j_{\ell},b_{\ell}}$ copies of the special edge $(\ell, 4)$ for each $\ell\in [3]$. For each remaining connected component that is not connected to the central vertex, we will add two copies of the special edge $(\ell, 4)$ for an arbitrary contact vertex $\ell$ in that connected component. Let us denote by $\localcomps$ the number of such components.

                Observe that the degree of all non-central vertices in $Q$ is identical to the degree of the corresponding vertex in $T$. As a consequence, all non-central vertices have even degree in $Q$. Since the sum of degrees of vertices in a multigraph is always even, we automatically get that all vertices have even degree in $Q$. Furthermore, by definition $Q$ is connected. Also note that $Q$ uses each edge at most twice, and uses every forced edge at least once.

                So, $Q$ is indeed a valid spanning tour in $H$, implying $w(Q)$ is at least
                \[\sound(H) + k_1^H(Q)/2 + \mathbf{1}\{k_1^H(Q)=0\}\cdot \mathbf{1}\{k_2^H(Q)\text{ is even}\}.\]

                Subtracting the contribution of the special edges we added to $Q$, we get
                \begin{align*}
                    w(T_\clause)&= w(Q) - k_1^H(Q)/2 - k_2^H(Q)\\
                    &\geq \sound(H) - \localcomps - (k_2^H(Q) - \localcomps)  +\mathbf{1}\{k_1^H(Q)=0\}\cdot \mathbf{1}\{k_2^H(Q)\text{ is even}\}.
                \end{align*}

                Note that if $\localcomps=0$, the indicator $\mathbf{1}\{k_1^H(Q)=0\}\cdot \mathbf{1}\{k_2^H(Q)\text{ is even}\}$ exactly indicates whether $\clause=1$, that is, whether the clause is satisfied. Therefore, the above is at least
                \[s(H) - 2\localcomps - (k_2^H(Q) - \localcomps) + \clause\]
                
                Summing over all $2nk/3$ clauses $c$ in $\insti{1}{2}{3}$, we get
                \[w(\edgeti{1}{2}{3})\geq (2nk/3)\cdot \sound(H) - 2u_{co} - t_{co} + r,\]
                where we used that the sum of $\localcomps$ equals $u_{co}$, along with the fact that the sum of $k_2^H(Q) - \localcomps$ is the total number of contact vertices $x_{i,j,b}$ satisfying $l_{i,j,b}=r_{i,j,b}=1$, which equals $t_{co}$.
                
            \end{itemize}

            Together, these imply the lower bound $w(T) = w(\edgeueq) + w(\edgeteq) + w(\edgetneq) + w(\edgeti{1}{2}{3})\geq nk\cdot (30 + 2\cdot\sound(H)/3) + (d_{ch} + d_{co})/2 + p_{ch} + r$.
        \end{proof}
        Since $T$ has weight at most $nk\cdot (30 + 2\cdot\sound(H)/3)+\Delta$, \Cref{claim:tsp-soundness-2} implies that $(d_{ch}+d_{co})/2+p_{ch}+r\leq \Delta$. \Cref{claim:tsp-soundness-1} directly shows that at most $d_{ch}+d_{co}+2p_{ch}+2r\leq 2\Delta$ clauses are unsatisfied in expectation over the randomness in the assignment $\bfx$; this proves that there exists an assignment leaving at most $2\Delta$ clauses unsatisfied.
    \end{proof}
    
\subsection{Description of our gadget}
\label{sec:reductionTSP}

In this section, we provide the equation gadget (\Cref{def:eqGadget} and~\Cref{subfig:tsp_AE_special}) found by AlphaEvolve, and instantiate \Cref{thm:tsp2} with this gadget to obtain \Cref{thm:tsp}. The gadget $H=([3+1+\naux], E_u\cup E_f, w)$ will have $\naux=4$ auxiliary vertices. As in Appendix~\ref{sec:gadget-description}, we will specify the edges in the gadget by a weighted edge list, where an edge is represented as a tuple $(a,b,w(a,b))$, representing an edge between $a$ and $b$ of weight $w(a,b)$. The unforced edges $E_u$ contain the special edges (see \Cref{def:eqGadget}), along with the following non-special unforced edges.
{
\begin{lstlisting}
[(5, 6, 1.0), (6, 7, 1.0), (7, 8, 1.0), (8, 5, 1.0)]
\end{lstlisting}
}
The forced edges $E_f$ are defined as follows.
{
\begin{lstlisting}
[(1, 5, 1.0), (1, 5, 1.0), (2, 6, 1.0), (2, 6, 1.0), (3, 7, 1.0), (3, 8, 1.0)]
\end{lstlisting}
}
Below we calculate the various parameters associated with $H$.

\begin{lem}\label{lem:tsp-gadget-calc}
    Let $H$ be the equation gadget defined above. We have $\complete(H) = \sound(H)=10$, and $\max_{e\in E_u\cup E_f} w(e) = 2$. 
\end{lem}
We note that this is in alignment with the discussion in \Cref{sec:tsp}; every satisfying assignment to $\predeqv{1}{2}{3}$ admits a tour of cost at most $\complete(H) = 10$, whereas for every tour $Q$ corresponding to an unsatisfying assignment, we have $k_1^H(Q)=0$ and $k_2^H(Q)$ is even; hence $Q$ has weight at least $\sound(H) + \mathbf{1}\{k_1^H(Q)=0\}\cdot \mathbf{1}\{k_2^H(Q)\text{ is even}\} = 11$. We can now prove \Cref{thm:tsp}.

\begin{proof}[Proof of \Cref{thm:tsp}]
    We apply \Cref{thm:tsp2} with the above gadget, and use \Cref{lem:tsp-gadget-calc} to conclude that it is NP-hard to approximate $\tspforced$ within
    \[\frac{91+2\cdot\sound(H)}{90+2\cdot\complete(H)}-\varepsilon' = \frac{111}{110}-\varepsilon'\]
    for any $\varepsilon' > 0$. Using \Cref{lem:tsp-to-forced} along with the fact that $\mcst$ is equivalent to metric TSP, we conclude that it is also NP-hard to approximate metric TSP within $111/110-\varepsilon$ for any $\varepsilon > 0$.
\end{proof}

\subsection{Equation gadget for predicate $x+y+z\equiv 0\pmod 2 $}
\label{app:pc0_tsp}

In this part, we provide a detailed comparison of our gadgets with those in~\cite{chlebik2022weighted}. As mentioned in \Cref{sec:tsp}, \cite{chlebik2022weighted} operated with a version of $\wthreelintwo$ containing of $\predeqv{0}{2}{3}$ clauses (defined as $\predeqv{0}{2}{3}(x,y,z) = \mathbf{0}\{x + y + z \equiv 0\pmod 2\}$) rather than $\predeqv{1}{2}{3}$. As a CSP, this is completely equivalent to our $\wthreelintwo$ instances containing $\predeqv{1}{2}{3}$ clauses applied to the pointwise negation of the variables. However, as the $\{0,1\}$-assignment to a variable is encoded in a way that is not symmetric under negation~(see \Cref{def:eqGadget,def:coreTSPdef}), one needs a formally different notion of soundness and completeness in their setting. Since this notion is not critical to the correctness of our main results, we informally describe them. For an equation gadget $H$, the modified completeness $c'(H)$ is defined by maximizing over all $z\in (\predeqv{0}{2}{3})^{-1}(1)$ rather than $C=(\predeqv{1}{2}{3})^{-1}(1)$ as in \Cref{def:coreTSPdef}. The modified soundness $s'(H)$ is defined by replacing the $\mathbf{1}\{k_2^H(Q)\text{ is even}\}$ term in the definition of $s(H)$ by $\mathbf{1}\{k_2^H(Q)\text{ is odd}\}$.

Similarly to \Cref{thm:tsp2}, one can show that it is NP-hard to approximate $\tspforced$ within $(91+2\cdot s'(H))/(90+2\cdot c'(H))-\varepsilon$ for any equation gadget $H$. \cite{chlebik2022weighted} provide an equation gadget $H$~(see \Cref{subfig:tsp1_pc0}) achieving $s'(H)=c'(H)=13$, leading to their inapproximability of $117/116-\varepsilon$ for metric TSP.

As a final note, we used AlphaEvolve to find an equation gadget $H=([3+1+\naux], E_u\cup E_f, w)$~(see \Cref{fig:tsp_AE_pc0}) for $\predeqv{0}{2}{3}$ clauses on $\naux=8$ auxiliary vertices. Below we describe $H$ in the same format as Appendix~\ref{app:pc0_tsp}. The  unforced edges in $E_u$ are the following.
{
\begin{lstlisting}
[(5, 6, 1.0), (7, 8, 1.0), (10, 11, 1.0), (11, 12, 1.0), (12, 9, 1.0), (5, 10, 1.0), (6, 11, 1.0)]
\end{lstlisting}
}
The forced edges $E_f$ are defined as follows.
{
\begin{lstlisting}
[(1, 5, 1.0), (2, 6, 1.0), (3, 7, 0.0), (4, 8, 0.0), (5, 9, 0.0), (7, 11, 1.0), (8, 12, 0.0), (9, 1, 1.0), (10, 2, 1.0), (11, 3, 1.0)]
\end{lstlisting}
}
This gadget achieves $s'(H)=c'(H)=10$, giving an alternative approach to prove our main result~\Cref{thm:tsp}. However, this gadget is significantly more complicated than the $\predeqv{1}{2}{3}$-based equation gadget from \Cref{lem:tsp-gadget-calc} achieving $s(H)=c(H)=10$, which is why we chose to write our proofs in terms of $s(H)$ and $c(H)$.

\begin{figure}[htb]
  \centering
  \begin{subfigure}{0.45\textwidth}
    \centering
    \includegraphics[scale=0.3]{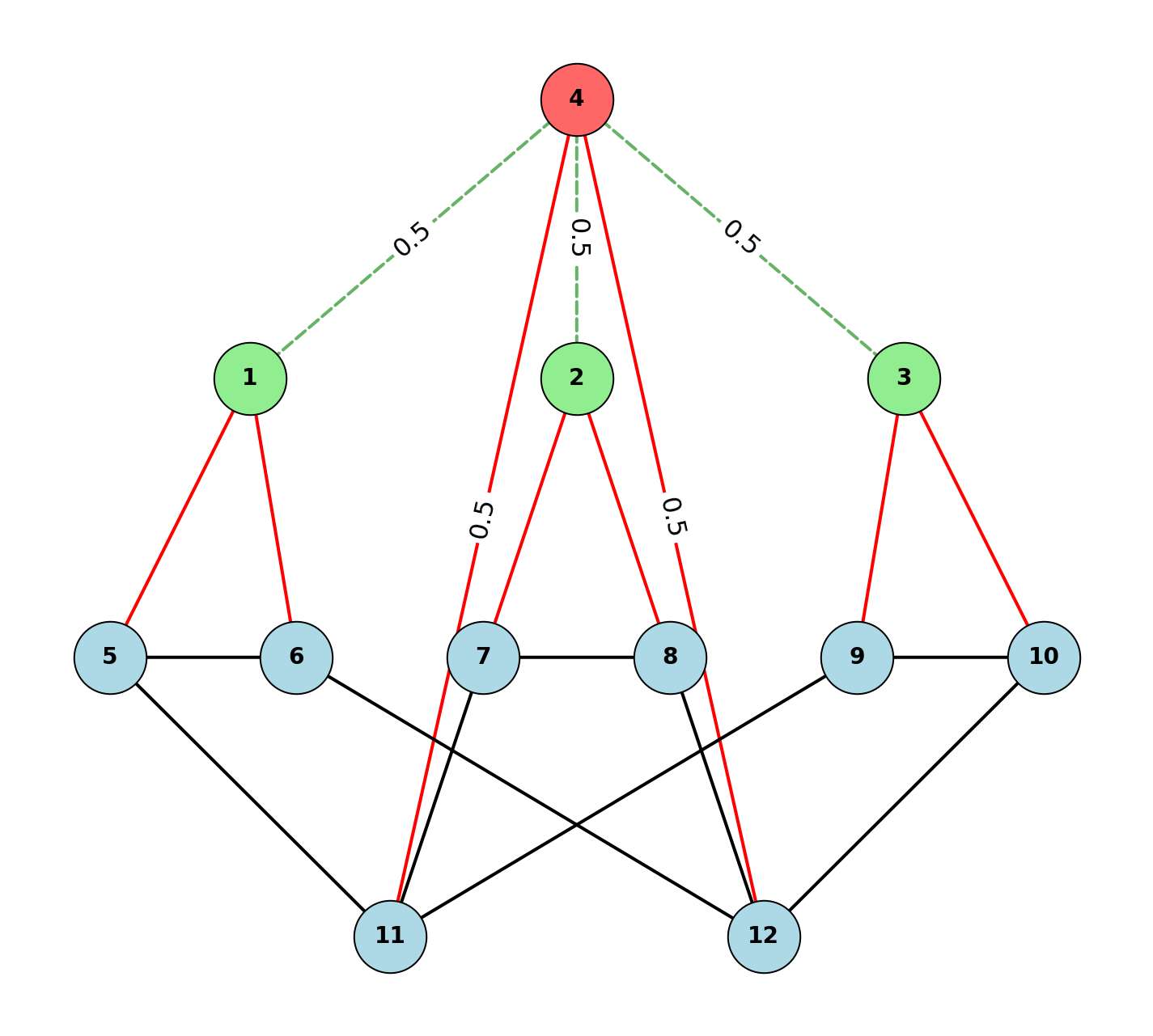}
    \caption{The equation gadget from~\cite{chlebik2022weighted}.}\label{subfig:tsp1_pc0}
  \end{subfigure}
  \hfill
  \begin{subfigure}{0.45\textwidth}
    \centering
    \includegraphics[scale=0.3]{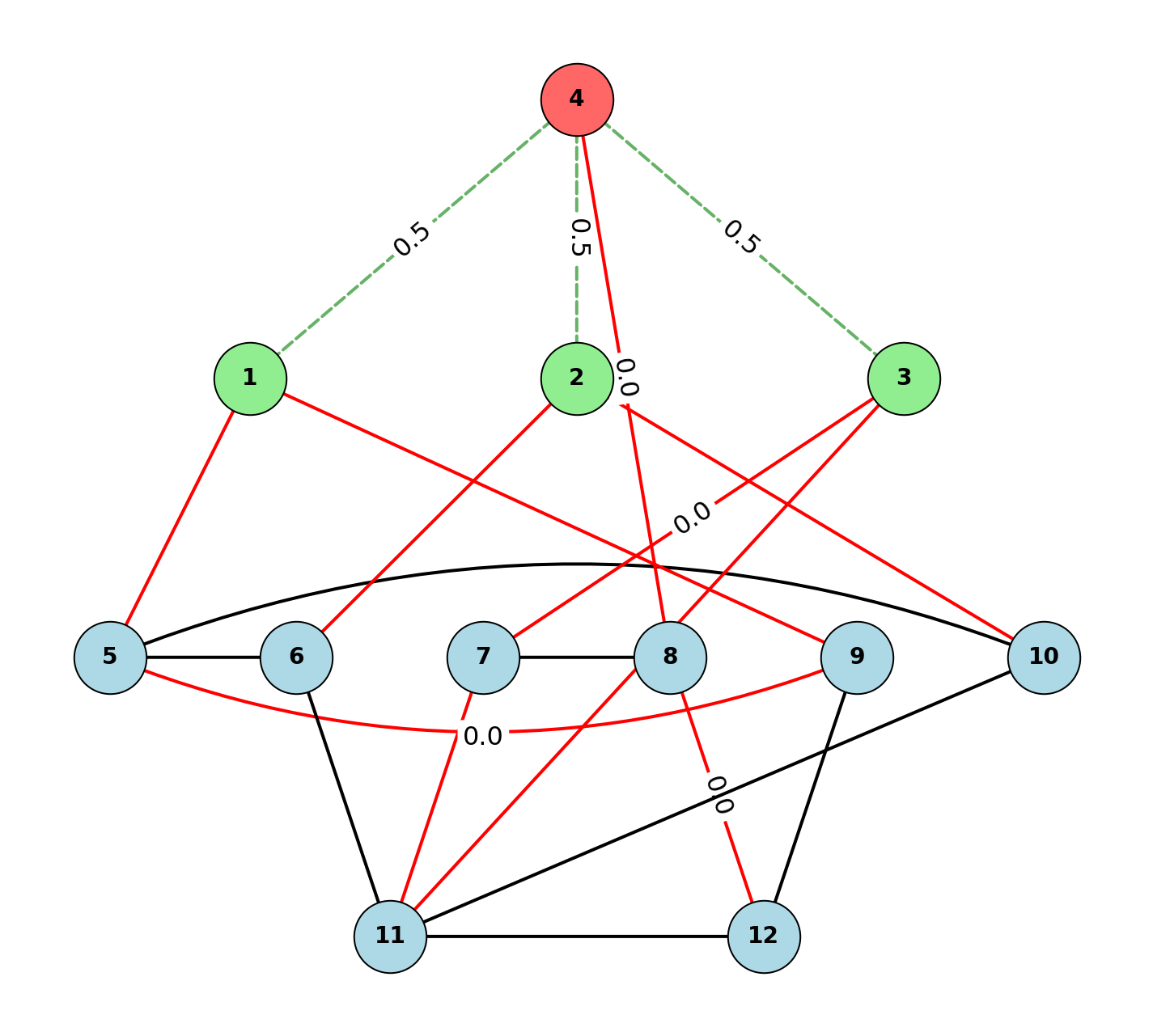}
    \caption{Equation gadget found by AlphaEvolve.}
    \label{fig:tsp_AE_pc0}
  \end{subfigure}
  \caption{Equation gadgets in the $\tspforced$ instance.  Vertices $\{1, 2, 3\}$ represent variables in the $\threelin{2}$ equation $\predeqv{0}{2}{3}$. The {\color{red} red} edges edges represent the forced edges. The dashed {\color{Green} green} edges represent the special edges. All edges without an explicitly labeled weight have weight $1$.}
  \label{fig:combined_tsp_gadgets}
\end{figure}

\end{document}